\documentclass{article}
\usepackage{microtype}
\usepackage{graphicx}
\usepackage{subcaption}
\usepackage{booktabs} 
\usepackage{hyperref}

\usepackage[preprint]{icml2026}
\usepackage{amsmath}
\usepackage{amssymb}
\usepackage{mathtools}
\usepackage{amsthm}
\usepackage[capitalize,noabbrev]{cleveref}
\usepackage{pifont}
\usepackage{booktabs}
\usepackage{comment}
\usepackage{multirow}
\usepackage{svg}
\theoremstyle{plain}
\newtheorem{theorem}{Theorem}[section]
\newtheorem{proposition}[theorem]{Proposition}
\newtheorem{lemma}[theorem]{Lemma}

\theoremstyle{definition}

\newtheorem{assumption}[theorem]{Assumption}
\theoremstyle{remark}

\usepackage{kotex}
\usepackage[textsize=tiny]{todonotes}

\newcommand{\cmark}{\textcolor{green!50!black}{\ding{51}}} 
\newcommand{\xmark}{\textcolor{red!70!black}{\ding{55}}} 

\begin{document}

\twocolumn[
  \icmltitle{Sparsely-Supervised Data Assimilation via Physics-Informed Schr\"odinger Bridge}
  \icmlsetsymbol{equal}{*}
  \begin{icmlauthorlist}
    \icmlauthor{Dohyun Bu}{equal,ise}
    \icmlauthor{Chanho Kim}{equal,ise}
    \icmlauthor{Seokun Choi}{math}
    \icmlauthor{Jong-Seok Lee}{ise}
  \end{icmlauthorlist}
  \icmlaffiliation{ise}{Department of Industrial \& Systems Engineering, KAIST, Daejeon, Repulic of Korea}
  \icmlaffiliation{math}{Department of Mathematical Sciences, KAIST, Daejeon, Repulic of Korea}
  \icmlcorrespondingauthor{Jong-Seok Lee}{jongseok.lee@kaist.ac.kr}
  \icmlkeywords{Physics-Informed Schr\"odinger Bridge}
  \vskip 0.3in
]

\printAffiliationsAndNotice{\icmlEqualContribution}  

\begin{abstract}
Data assimilation (DA) for systems governed by partial differential equations (PDE) aims to reconstruct full spatiotemporal fields from sparse high-fidelity (HF) observations while respecting physical constraints.
While full-grid low-fidelity (LF) simulations provide informative priors in multi-fidelity settings, recovering an HF field consistent with both sparse observations and the governing PDE typically requires per-instance test-time optimization, which becomes a major bottleneck in time-critical applications.
To alleviate this, amortized reconstruction using generative models has recently been proposed; however, such approaches rely on full-field HF supervision during training, which is often impractical in real-world settings.
From a more realistic perspective, we propose the \textit{Physics-Informed Conditional Schr\"odinger Bridge} (PICSB), which transports an informative LF prior toward an observation-conditioned HF posterior without any additional inference-time guidance.
To enable learning without HF endpoints, PICSB employs an iterative surrogate-endpoint refresh scheme, and directly incorporates PDE residuals into the training objective while enforcing observations via hard conditioning throughout sampling.
Experiments on fluid PDE benchmarks demonstrate that PICSB enables extremely fast spatiotemporal field reconstruction while maintaining competitive accuracy under sparse HF supervision.
\end{abstract}

\vspace{-2em}

\section{Introduction}\label{sec:int}

\vspace{-0.5em}
Partial differential equations (PDEs) have long provided fundamental tools for modeling and predicting natural phenomena. 
A central challenge in many fluid and flow applications is full-field reconstruction: recovering a high-resolution spatiotemporal field from sparse observations while remaining consistent with the governing dynamics, a setting broadly studied under data assimilation (DA) \citep{farchi2023online, stuart2010inverse}.
In such regimes, the available observations provide only partial coverage, and the underlying physical model provides feasibility constraints on recovering the full field \citep{buzzicotti2023data, rozet2023score}.

\vspace{-0.3em}
A common way to obtain accurate full-fields is high-fidelity (HF) numerical simulation, but it is often prohibitively time-consuming at the resolutions and regimes of interest \citep{zhou2024multi, he2020multi}.
This computational latency is a major bottleneck in time-critical applications, including rapid-update numerical weather forecasting \citep{stanley2004an}, operational ocean prediction \citep{moore2019synthesis}, and real-time hydrologic forecasting \citep{jafarzadegan2021sequential}.
Practitioners therefore rely on low-fidelity (LF) models or proxy fields---from simplified physics and coarse discretizations to observation-based interpolation---which are fast but systematically biased \cite{niu2024multi, fernandez2016review, peherstorfer2018survey}.

\vspace{-0.3em}
These constraints make multi-fidelity DA a common practical regime: abundant LF fields provide informative priors, while HF information is available only through sparse observations \citep{kou2022transfer, chakraborty2021transfer}.
However, even in this multi-fidelity setting, classical physics-based DA typically enforces observation and PDE consistency by solving a new constrained inverse problem for each instance, incurring substantial computational cost \citep{li2025physics}.
This motivates amortized reconstruction, which maps a new LF field and a few HF observations directly to a full HF field without test-time re-optimization \citep{rozet2023score}.

\begin{table*}[t]
    \centering
    \setlength{\tabcolsep}{7pt}
    \caption{
    Comparison of learning-based field reconstruction methods for PDE-governed systems.
    \textbf{Zero-shot} indicates amortized reconstruction without test-time re-optimization.
    \textbf{DA} indicates the capability to solve an observation-conditioned inverse problem using sparse HF observations.
    \textbf{Label-Efficient} indicates training under sparse HF supervision, where HF targets are available only at partial spatiotemporal locations.
    \textbf{Guidance-free} indicates that PDE residuals are incorporated directly into training, avoiding inference-time correction.
    }
    \begin{tabular}{lcccc}
        \toprule
        Method 
        & \begin{tabular}[c]{@{}c@{}}Zero-shot\end{tabular}
        & \begin{tabular}[c]{@{}c@{}}DA\end{tabular}
        & \begin{tabular}[c]{@{}c@{}}Label-Efficient\end{tabular}
        & \begin{tabular}[c]{@{}c@{}}Guidance-free\end{tabular}
        \\
        \midrule
        PINNs \citep{raissi2019physics}
        & \xmark
        & \cmark
        & \cmark
        & \cmark \\
        MMGN \citep{luo2024continuous}
        & \xmark
        & \cmark
        & \cmark
        & \xmark \\
        FNO, NOICL \citep{li2021fourier,chen2024dataefficient}
        & \cmark
        & \xmark
        & \xmark
        & \xmark \\
        P$^2$INN \citep{cho2024parameterized}
        & \cmark
        & \xmark
        & \cmark
        & \cmark \\
        DiffusionPDE, ECI, PCFM
        & \multirow{2}{*}{\cmark}
        & \multirow{2}{*}{\cmark}
        & \multirow{2}{*}{\xmark}
        & \multirow{2}{*}{\xmark} \\
        \citep{huang2024diffusionpde,cheng2025gradientfree,utkarsh2025physicsconstrained}
        & & & & \\
        PIDM, PalSB, PPIDM
        & \multirow{2}{*}{\cmark}
        & \multirow{2}{*}{\cmark}
        & \multirow{2}{*}{\xmark}
        & \multirow{2}{*}{\cmark} \\
        \citep{shu2023physics,li2025physics,xu2025partial}
        & & & & \\
        \midrule
        \textbf{PICSB (Ours)}
        & \cmark
        & \cmark
        & \cmark
        & \cmark \\
        \bottomrule
    \end{tabular}
    \label{tab:comparison}
    \vspace{-1em}
\end{table*}

\vspace{-0.3em}
Recent progress toward amortized reconstruction has been driven by generative modeling.
Diffusion models, in particular, have become a standard tool for high-dimensional conditional generation, making sparse-to-full field recovery a natural application \citep{shysheya2024conditional, xu2025partial}.
In PDE-constrained DA, a prevailing recipe learns a diffusion prior over fully observed HF fields and then produces reconstructions via iterative sampling,while enforcing consistency with sparse observations and the governing PDE via residual-based guidance during sampling \citep{huang2024diffusionpde, cheng2025gradientfree, utkarsh2025physicsconstrained, du2024conditional}.
While effective, this paradigm typically relies on many sampling steps with inference-time corrections to enforce observations and PDE constraints, increasing inference cost in time-critical settings.

\vspace{-0.3em}
An appealing alternative is provided by Schr\"odinger bridge (SB) formulations, which learn a stochastic optimal transport between endpoint distributions. 
For multi-fidelity DA, SB offers a particularly natural geometry by transporting samples from an informative LF field, often requiring substantially fewer sampling steps than diffusion models that start from uninformative noise.
Although SB-based methods for PDE field modeling remain relatively scarce, SB-style transport has been studied extensively in broader generative settings such as image synthesis \citep{liu2023i2sb} and related bridge formulations \citep{li2025walking}, with recent PDE-based instances emerging as well \citep{li2025physics}.
However, existing SB approaches for reconstruction typically still rely on full-field HF supervision to learn the transport.
Consequently, amortized multi-fidelity DA under sparse HF supervision remains largely underexplored, particularly for PDE-governed systems that require physically consistent generative models.

\vspace{-0.3em}
In this work, we propose the \emph{Physics-Informed Conditional Schr\"odinger Bridge Model} (PICSB) for amortized multi-fidelity DA with sparse HF supervision.
PICSB models the conditional distribution of the full HF field given an LF input and sparse HF observations, enforcing the observations via hard conditioning.
To avoid reliance on full-field HF supervision, we train PICSB solely from physics residuals, using a conditional SB that starts from an informative LF prior and transports it toward the observation-conditioned HF posterior under the governing PDE constraints.
This physics-residual objective enables guidance-free inference while retaining the efficiency of LF-to-HF transport, yielding fast reconstruction with few sampling steps.
Crucially, since HF endpoints are unavailable, we enable bridge learning via an iterative surrogate-endpoint refresh that periodically updates a slowly moving terminal reference distribution.

\vspace{-0.3em}
Our contributions are as follows:
\vspace{-1.3em}
\begin{itemize}
    \item We introduce a guidance-free conditional SB for amortized reconstruction from LF inputs and sparse HF observations via hard conditioning, enabling fast inference with few sampling steps.
    \vspace{-0.7em}
    \item We provide a consistency result showing that, under identifiability, minimizing the physics-residual objective yields reconstructions close to the observation-conditioned HF solution.
    \vspace{-0.7em}
    \item We empirically show that PICSB achieves competitive accuracy under sparse HF supervision while delivering orders-of-magnitude faster reconstruction than test-time optimization and guidance-based sampling.
\end{itemize}

\vspace{-0.5em}

\vspace{-1em}
\section{Related Works}\label{sec:rw}
\vspace{-0.5em}
\paragraph{Optimization-based DA.} 
Traditional per-instance DA for PDE-governed systems has primarily relied on numerical methods that solve observation-conditioned inverse problems under physical constraints \citep{elia2012variational, lemke2016adjoint}. 
To reduce the high computational cost of such approaches, extensive efforts have focused on efficiency-oriented modeling aimed at HF simulations \citep{zauner2022nudging,wang2024accelerating}.  
A related class of approaches replaces explicit numerical solvers with neural-network-based optimization formulations, which parameterize solution fields with neural networks and minimizing a composite objective that enforces both observation fitting and governing PDE constraints \citep{raissi2019physics}.
More recently, implicit neural representation methods have been explored to reconstruct continuous spatiotemporal fields from sparse observations by directly optimizing latent representations \citep{luo2024continuous}.

\vspace{-1em}

\paragraph{Amortized Reconstruction.}
To avoid repeated per-instance optimization, recent work has sought zero-shot DA by learning reusable models that can be directly queried at test time.
One line of research learns forward solution operators that map PDE parameters or initial conditions to full fields, enabling fast evaluation without iterative solvers \citep{li2021fourier, chen2024dataefficient, cho2024parameterized}.
A second line of work targets amortized inverse reconstruction via generative models, typically trained with fully observed HF fields.
Many diffusion-based formulations enforce observation and PDE consistency through inference-time residual guidance \citep{huang2024diffusionpde, cheng2025gradientfree, utkarsh2025physicsconstrained}, whereas more recent approaches incorporate PDE residuals into the training objective to reduce reliance on inference-time corrections \citep{shu2023physics, li2025physics, xu2025partial}.

\vspace{-1em}

\paragraph{Limitations.}
Table~\ref{tab:comparison} summarizes prior learning-based field reconstruction methods for PDE-governed systems in terms of four desirable features: \textbf{Zero-shot} inference, \textbf{DA} capability, \textbf{Label-Efficient} learning under sparse HF supervision, and \textbf{Guidance-free} physics integration via training objectives.
First, optimization-based methods can assimilate sparse observations while explicitly enforcing the governing physics, but they are not zero-shot and require test-time re-optimization.
Second, operator-learning methods enable zero-shot evaluation but largely act as forward surrogates and thus lack DA capability for observation-conditioned reconstruction.
Finally, generative inverse methods provide DA capability, but most existing approaches are not label-efficient, as they assume full-field HF supervision; moreover, guidance-based formulations incur additional inference-time correction costs.
Consequently, no existing method simultaneously satisfies all four criteria.

\vspace{-0.5em}

\section{Preliminaries}\label{sec:pre}
\vspace{-0.3em}
We consider a multi-fidelity field reconstruction problem on a fixed discrete grid of size $n$.
$X_0 \!\in\! \mathbb{R}^n$ denotes a LF field defined on the full grid, $X_1 \!\in\! \mathbb{R}^n$ denotes the corresponding HF field on the same grid, and $Y \!\in\! \mathbb{R}^m$ denotes sparse HF observations with $m \!\ll\! n$. 
The observations are related through a known operator $\mathcal{H}:\mathbb{R}^n \!\to\! \mathbb{R}^m$ such that $Y \!=\!\mathcal{H}(X_1)$.
Throughout, we view the LF field $X_0$ as inducing an informative source distribution over HF fields, and we treat the sparse observations $Y$ as conditioning variables.
Given $Y\!=\!y$, we denote by $\pi_0(\cdot\,|\,y)$ and $\pi_1(\cdot\,|\,y)$ the conditional distributions of $X_0$ and $X_1$, respectively.

\vspace{-0.5em}
\subsection{Conditional Schr\"odinger bridge formulation}\label{subsec:pre:consb}
\vspace{-0.5em}
For a fixed conditioning variable $Y\!=\!y$. 
The conditional SB problem seeks a path measure $\mathbb{P}(\cdot\,|\,y)$ on $\mathcal{C}\!=\!C([0,1],\mathbb{R}^n)$ whose endpoint marginals satisfy $\mathbb{P}_0(\cdot \mid y) \!=\! \pi_0(\cdot\,|\,y)$ and $\mathbb{P}_1(\cdot \mid y) \!=\! \pi_1(\cdot\,|\,y)$ \citep{shi2022conditional}.
Specifically, given an uncontrolled Wiener measure $\mathbb{Q}$ as a reference path measure, the conditional SB is defined as 
\vspace{-0.5em}
\begin{equation}
\label{eq:csb_kl_compact}
\begin{aligned}
\mathbb{P}^\star(\cdot\,|\,y) = 
\underset{\mathbb{P}(\cdot\,|\,y)}{\text{argmin}}\;
\mathrm{KL}\!\left(\mathbb{P}(\cdot\,|\,y)\,\|\,\mathbb{Q}\right)
\end{aligned}
\end{equation}
\vspace{-2.5 em}
\begin{align*}
\text{s.t.}\quad \mathbb{P}_0(\cdot\,|\,y)\!=\!\pi_0(\cdot\,|\,y), \quad \mathbb{P}_1(\cdot\,|\,y)\!=\!\pi_1(\cdot\,|\,y).
\vspace{-0.5em}
\end{align*}
In conditional inverse problems, $\pi_0(\cdot\,|\,y)$ can be viewed as an LF-informed source distribution, while $\pi_1(\cdot\,|\,y)$ corresponds to the desired observation-conditioned HF distribution.
The SB formulation provides a principled way to transport probability mass between these conditional distributions while remaining close to simple reference dynamics.

\vspace{-0.3em}
\subsection{Bridge reference and flow-based parameterization}\label{subsec:pre:bridge}
\vspace{-0.3em}
Under $\mathbb{Q}$, conditioning on the endpoints $(X_0,X_1)=(x_0,x_1)$ yields the Brownian bridge $\mathbb{Q}(\cdot\mid x_0, x_1)$.  
With noise scale $\varepsilon\!>\!0$, the intermediate state satisfies
\vspace{-0.3em}
\begin{equation}
\label{eq:bb_conditional_compact}
X_\tau \mid (x_0,x_1) ~\sim~ \mathcal{N}\!\big((1\!-\!\tau)x_0\!+\!\tau x_1,\; \varepsilon\,\tau(1\!-\!\tau)\,I\big),
\vspace{-0.3em}
\end{equation}
which allows direct sampling at an arbitrary intermediate time $\tau$.
This property underlies Iterative Markovian Fitting (IMF) and its diffusion-based implementations \citep{liu2023i2sb, shi2023diffusion, peluchetti2023diffusion}.

\vspace{-0.3em}
To construct an explicit sampler, one may parameterize a conditional path measure via a time-dependent velocity field $v_\tau(\cdot\,|\,y)$ \citep{bortoli2024schrodinger}.
Such a parameterization induces the controlled diffusion driven by standard Brownian motion $B_\tau$,
\vspace{-0.3em}
\begin{equation}
\label{eq:controlled_diffusion_compact}
dX_\tau = v_\tau(X_\tau\,|\,y)\,d\tau + \sqrt{\varepsilon}\,dB_\tau,
\quad X_0 \sim \pi_0(\cdot\,|\,y).
\vspace{-0.3em}
\end{equation}

\vspace{-0.3em}
When paired samples $\{(x_0^i,y^i,x_1^i)\}_{i=1}^N$ are available, IMF methods learn $v$ by matching the drift of the Brownian bridge, leading to the bridge-matching objective given by
\vspace{-0.3em}
\begin{equation}
\label{eq:bridge_matching_compact}
\min_{v}\;
\mathbb{E}_{i, \tau}\!\left[
\mathbb{E}_{x_\tau^i \sim \mathbb{Q}(\cdot \mid x_0^i,x_1^i)}
\left\|v_\tau(x_\tau^i \!\mid\! y^i) - \frac{x_1^i \!-\! x_\tau^i}{1\!-\!\tau}\right\|
\, 
\right].
\vspace{-0.5em}
\end{equation}

\section{Methodology}\label{sec:met}
\begin{figure*}[!h]
\centering
\includegraphics[width=1\linewidth]{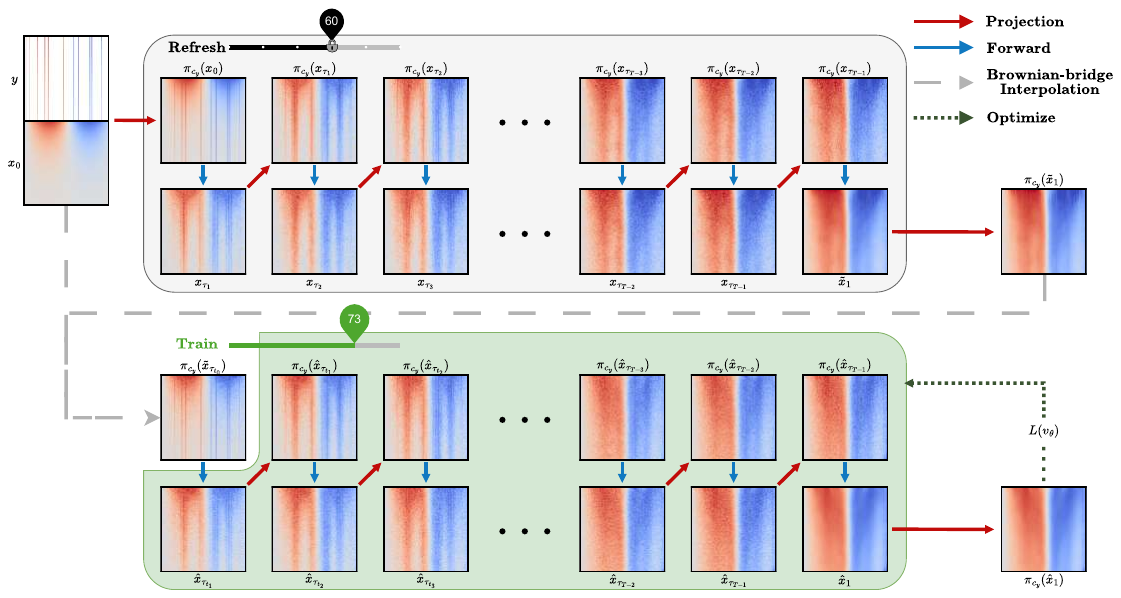}
\caption{Overview of PICSB training with sparse supervision. 
The static model (parameters $\tilde{\theta}$) provides surrogate endpoints to construct the bridge reference, while the trainable sampler (parameters $\theta$) is updated by minimizing the physics residual under hard observation projection at every sampling step.
Here, with refresh period $C\!=\!20$, iteration $73$ uses a static model last refreshed at iteration $60$.}
\vspace{-1em}
\label{fig:architecture}
\end{figure*}

\vspace{-0.3em}
We consider the multi-fidelity DA setting in Section~\ref{sec:pre}.
The training data consist of pairs $\mathcal{D}_{\mathrm{train}}\!=\!\{(x_0^i,y^i)\}_{i=1}^N$, where each $x_0^i\in\mathbb{R}^n$ is a full-grid LF field and $y^i\in\mathbb{R}^m$ denotes sparse HF observations of the corresponding HF field $x_1^i\in\mathbb{R}^n$.
Given a new pair $(x_0,y)$ at test time, our goal is to sample plausible HF reconstructions that (i) respect the LF prior information in $x_0$ and (ii) satisfy the sparse observations $y$, while being physically consistent.
We adopt a conditional Schr\"odinger-bridge viewpoint.
In the idealized formulation in Section~\ref{subsec:pre:consb}, one seeks a conditional path measure $\mathbb{P}(\cdot\,|\,y)$ whose endpoints match $\pi_0(\cdot\,|\,y)$ and $\pi_1(\cdot\,|\,y)$.
In practice, we are given a concrete LF instance $x_0$ rather than an explicit conditional density $\pi_0(\cdot\,|\,y)$.
Accordingly, our sampler is implemented as a conditional transport map that starts from the provided $x_0$ and is explicitly conditioned on the sparse observations $y$, generating a terminal sample $\hat{x}_1\sim \pi_\theta(\cdot\,|\,x_0,y)$ which aims to approximate $\pi_1(\cdot\,|\,y)$ in an amortized manner.

\vspace{-0.5em}
\subsection{Physics-informed conditional reconstruction}
\label{subsec:method:formulation}
\vspace{-0.3em}
Since a full-field HF endpoint $x_1$ is unavailable, we specify the target solution set implicitly through observation feasibility and PDE consistency.
The observation constraint is defined as
\vspace{-0.3em}
\begin{equation}
\mathcal{C}_y \;:=\; \{x\!\in\!\mathbb{R}^n : \mathcal{H}(x)=y\}.
\vspace{-0.3em}
\end{equation}
Throughout, we focus on the standard sparse-sensor setting where $\mathcal{H}$ selects a subset of grid coordinates.
Let $\odot$ denote element-wise multiplication.
We then use the coordinate projection onto $\mathcal{C}_y$:
\vspace{-0.5em}
\begin{equation}
\Pi_{\mathcal{C}_y}(x) \;:=\; \mathbf{M}\odot y \;+\; (1-\mathbf{M})\odot x,
\label{eq:method:proj_Cy}
\vspace{-0.75em}
\end{equation}
where $\mathbf{M}\in\{0,1\}^n$ is a binary mask indicating the observed coordinates.

\vspace{-0.3em}
Physical consistency is enforced via a discretized PDE residual operator $\mathcal{R}_h:\mathbb{R}^n\to\mathbb{R}^r$ with grid spacing $h$.
Due to discretization and numerical approximation, the residual may not attain zero even among observation-consistent states.
We thus define the residual floor as
\vspace{-0.5em}
\begin{equation}
\delta^\star(y ; h) \;:=\; \inf_{x\in\mathcal{C}_y}\,\|\mathcal{R}_h(x)\|,
\vspace{-0.6em}
\end{equation}
and for any $\delta>\delta^\star(y ; h)$, we define the $\delta$-feasible physics set as
\vspace{-0.2em}
\begin{equation}
\mathcal{P}^{\delta}_{\mathcal{R}_h} \;:=\; \{x\in\mathbb{R}^n:\|\mathcal{R}_h(x)\|\le \delta\}.
\vspace{-0.2em}
\end{equation}
Our goal is to learn a conditional sampler whose terminal samples lie in $\mathcal{C}_y$ and achieve small residual, i.e., $\hat{x}_1\in \mathcal{C}_y\cap \mathcal{P}^{\delta}_{\mathcal{R}_h}$ with high probability.

\vspace{-0.5em}
\subsection{Learning with surrogate endpoints}
\label{subsec:method:theory}
\vspace{-0.3em}
We now specify the sampling procedure and the training objective.
Our sampler is parameterized by a time-dependent velocity field $v_\theta(x,\tau\mid y)$, inducing the controlled diffusion in Eq.~\eqref{eq:controlled_diffusion_compact}. 
Hard observation feasibility is enforced by projection at every discretized step.

\vspace{-1.em}
\paragraph{Surrogate bridge reference.}
Bridge-matching objectives such as Eq.~\eqref{eq:bridge_matching_compact} require paired HF endpoints $x_1$, which are unavailable in our setting.
We therefore introduce a surrogate terminal reference $\tilde{\pi}_1(\cdot \mid y)$, defined as the terminal law induced by a slowly updated static velocity network $v_{\tilde{\theta}}$ under hard conditioning.
Concretely, drawing $\tilde{x}_1\sim\tilde{\pi}_1(\cdot\,|\,y)$ means running the sampler $\mathrm{Sample}_{\tilde{\theta}}$ initialized from the given LF field $x_0$ and projected onto $\mathcal{C}_y$ throughout.

\vspace{-0.3em}
Given $(x_0,y)$, we sample $\tau\!\sim\!\mathrm{Unif}(0,1)$ and draw an intermediate state $\tilde{x}_\tau$ from the Brownian-bridge closed form in Eq.~\eqref{eq:bb_conditional_compact} between endpoints $(x_0,\tilde{x}_1)$, followed by projection to ensure $\tilde{x}_\tau\in\mathcal{C}_y$.
Starting from $\tilde{x}_\tau$, Algorithm~\ref{alg:sample-theta} produces a terminal sample
\vspace{-0.6em}
\begin{equation}\label{eq:sample_theta}
\hat{x}_1 \;=\; \mathrm{Sample}_\theta(\tilde{x}_\tau \mid y,\tau).
\vspace{-0.6em}
\end{equation}

\vspace{-1.0em}
\paragraph{Physics-residual training objective.}
We train the \emph{trainable} sampler $v_\theta$ using only PDE residuals:
\vspace{-0.5em}
\begin{align}
\mathcal{L}(\theta)
\;:=\;
\mathbb{E}_{(x_0,y)\sim\mathcal{D}_{\mathrm{train}}}
\mathbb{E}_{\tau\sim\mathrm{Unif}(0,1)}\Big[\,\|\mathcal{R}_h(\hat{x}_1)\|\,\Big],
\label{eq:method:pde_loss}
\vspace{-0.5em}
\end{align}
where $\hat{x}_1$ denotes the terminal sample produced by Eq.~\eqref{eq:sample_theta}.
Intuitively, $\tilde{x}_\tau$ aligns the training distribution with conditional transport from an LF-informed source toward an observation-conditioned terminal law, while the residual objective shapes the terminal samples toward PDE-consistent reconstructions.

\begin{algorithm}[t]
\caption{Hard-conditioned sampler $\mathrm{Sample}_{\theta}$}
\label{alg:sample-theta}
\noindent\textbf{Input:}
initial state $x_{\tau}$ at time $\tau=\tau_{t_0}$;
sparse observations $y$;
velocity network $v_\theta$;
projection $\Pi_{\mathcal{C}_y}$ in Eq.~\eqref{eq:method:proj_Cy};
noise scale $\varepsilon$;
step size $s$.\\[2pt]
\textbf{Output:} terminal sample $\hat{x}_1$.\\[4pt]
\noindent\textbf{Initialize:}
$x_{\tau_{t_0}} \leftarrow \Pi_{\mathcal{C}_y}(x_{\tau})$.\\[4pt]
\noindent\textbf{for} $t = t_0, t_0+1,\dots,T-2$ \textbf{do}\\
\hspace*{1.0em}\textbf{(1) Velocity evaluation:}
$u \leftarrow v_\theta\!\left(x_{\tau_{t}} \,\middle|\, y, \tau_t\right)$.\\
\hspace*{1.0em}\textbf{(2) Euler step:}
$x_{\tau_{t+1}} \leftarrow x_{\tau_{t}} + s \cdot u$.\\
\hspace*{1.0em}\textbf{(3) Hard Project:}
$x_{\tau_{t+1}} \leftarrow \Pi_{\mathcal{C}_y}(x_{\tau_{t+1}})$.\\
\hspace*{1.0em}\textbf{(4) Noise Injection:}
$x_{\tau_{t+1}} \leftarrow x_{\tau_{t+1}} + \sqrt{\varepsilon s}\,z_t$.\\
\noindent\textbf{end for}\\[2pt]
\noindent\textbf{Final step (no noise):}\\
\hspace*{2.0em} $x_{\tau_{T}} \leftarrow \Pi_{\mathcal{C}_y}\bigl(x_{\tau_{T-1}}+ s \cdot v_\theta\!\left(x_{\tau_{T-1}} \,\middle|\, y, \tau_{T-1}\right)\bigr)$.\\[4pt]
\noindent\textbf{Return:} $x_{\tau_{T}}$.
\end{algorithm}

\begin{proposition}[Stability under identifiability]\label{thm:equivalence_weak}
Fix $y$ and assume the continuous PDE admits a unique observation-consistent solution $x_1^\star\in\mathcal{C}_y$.
Assume that local identifiability holds in a neighborhood of $x_1^\star$, with constants $c\!>\!0$ and $\eta(h)$, and that the residual sublevel sets are localized with constant $\beta\!>\!0$.
Then any $\hat{x}_1\!\in\!\mathcal{C}_y$ satisfying $\|\mathcal{R}_h(\hat{x}_1)\|\le \delta$ for some 
$\delta\le \delta^\star(y;h)+\beta$ obeys
\vspace{-0.5em}
\begin{equation}
\|\hat{x}_1-x_1^\star\| \le c(\delta-\delta^\star(y;h))+\eta(h).
\vspace{-0.75em}
\end{equation}
\end{proposition}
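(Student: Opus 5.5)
The statement names two assumptions, local identifiability with constants $c,\eta(h)$ and localization of residual sublevel sets with constant $\beta$, without giving formulas. The first step is to make them precise in the form the conclusion suggests.

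\emph{Localization:} there is a neighborhood $U$ of $x_1^\star$ on which identifiability holds, and
\[
\{x\in\mathcal{C}_y:\|\mathcal{R}_h(x)\|\le \delta^\star(y;h)+\beta\}\subseteq U .
\]
So every observation-consistent state whose residual is within $\beta$ of the floor lies in $U$.

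\emph{Local identifiability:} for all $x\in\mathcal{C}_y\cap U$,
\[
\|x-x_1^\star\|\le c\bigl(\|\mathcal{R}_h(x)\|-\delta^\star(y;h)\bigr)+\eta(h).
\]
This is a residual-to-error stability inequality relative to the floor. The term $\eta(h)$ absorbs the discretization gap: $x_1^\star$ solves the continuous PDE, so it need not attain the discrete infimum $\delta^\star(y;h)$. If the paper's identifiability is instead stated with $\|\mathcal{R}_h(x)\|-\|\mathcal{R}_h(x_1^\star)\|$, I would add a consistency estimate $\|\mathcal{R}_h(x_1^\star)\|-\delta^\star(y;h)\le \eta'(h)$ and fold $c\,\eta'(h)$ into $\eta(h)$.

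With these in place the proof has two steps.

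1. Take $\hat{x}_1\in\mathcal{C}_y$ with $\|\mathcal{R}_h(\hat{x}_1)\|\le\delta\le\delta^\star(y;h)+\beta$. Then $\hat{x}_1$ belongs to the $\beta$-sublevel set above, so localization gives $\hat{x}_1\in U$.

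2. Apply identifiability at $\hat{x}_1$. Since $c>0$, the right-hand side is monotone in the residual, and using $\|\mathcal{R}_h(\hat{x}_1)\|\le\delta$ we get
\[
\|\hat{x}_1-x_1^\star\|\le c\bigl(\|\mathcal{R}_h(\hat{x}_1)\|-\delta^\star(y;h)\bigr)+\eta(h)\le c\bigl(\delta-\delta^\star(y;h)\bigr)+\eta(h).
\]

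The quantity $\delta-\delta^\star(y;h)$ is nonnegative because $\hat{x}_1\in\mathcal{C}_y$ forces $\|\mathcal{R}_h(\hat{x}_1)\|\ge\delta^\star(y;h)$. Uniqueness of $x_1^\star$ is used only to make the target well defined and to justify that the localization neighborhood is centered at $x_1^\star$.

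The main obstacle is not the algebra but fixing the assumptions so they are non-vacuous and jointly consistent. Localization is what excludes spurious low-residual states far from $x_1^\star$, for example other discrete near-solutions permitted by sparse observations. Without it, identifiability could only be applied to states already known to lie near $x_1^\star$, and the argument would not close.
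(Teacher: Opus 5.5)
Your proof is correct and matches the paper's argument. The paper states local identifiability exactly as you do, on $\{x:\|x-x_1^\star\|\le\rho\}\cap\mathcal{C}_y$; it uses localization to force $\hat{x}_1$ into that ball and then substitutes $\|\mathcal{R}_h(\hat{x}_1)\|\le\delta$ into the identifiability inequality.

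The paper differs in one respect: it does not posit localization. It derives $\beta=\bar\beta(\rho)>0$ in Lemma~\ref{lem:beta_gap} from compactness of a set $\mathcal{K}\subseteq\mathcal{C}_y$ that contains $\hat{x}_1$ and on which $x_1^\star$ is the unique residual minimizer, which speaks to your concern that the assumptions be non-vacuous. Your closed sublevel-set inclusion, in turn, handles the boundary case $\|\mathcal{R}_h(\hat{x}_1)\|=\delta^\star_{\mathcal{K}}(y;h)+\beta$. The paper's contradiction step uses only non-strict inequalities there, so it does not actually exclude that case.
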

The proof is provided in App.~\ref{app:thm:equivalence_weak}.

\vspace{-1.0em}
\paragraph{Iterative refresh of $\tilde{\pi}_1$.}
Because $\tilde{\pi}_1(\cdot\,|\,y)$ is model-induced, we adopt a periodic refresh scheme with a slowly updated static model, analogous to target-network stabilization \citep{mnih2015human}.
At refresh round $k$, we (i) sample surrogate endpoints $\tilde{x}_1\sim\tilde{\pi}_1^{(k)}(\cdot\,|\,y)$ using $v_{\tilde{\theta}}$ to instantiate a surrogate Brownian-bridge reference, and (ii) update $\theta$ by minimizing Eq.~\eqref{eq:method:pde_loss} under this fixed reference.
This yields a distributional update map
\vspace{-0.5em}
\begin{equation}
\tilde{\pi}_1^{(k)} \;\mapsto\; \tilde{\pi}_1^{(k+1)},
\vspace{-0.75em}
\end{equation}
where $\tilde{\pi}_1^{(k+1)}$ denotes the terminal law induced by the updated sampler $v_\theta$.

\begin{proposition}[Self-consistency at a fixed point]
\label{thm:convergence}
Consider the sequence $\{\tilde{\pi}_1^{(k)}\}_{k\ge 0}$ generated by Algorithm~\ref{alg:PICSB}. 
Assume $\mathcal{R}_h$ is Lipschitz on the relevant support and the model class for $v_\theta$ is sufficiently expressive. Assume further that each refresh round performs exact inner-loop optimization. 
If $\tilde{\pi}_1^{(k)}\!\equiv\!\tilde{\pi}_1^{(k+1)}\!\equiv\!\tilde{\pi}_1^\infty$, then samples $\tilde{x}_1\!\sim\!\tilde{\pi}_1^\infty(\cdot\,|\,y)$ satisfy 
\vspace{-0.6em}
\begin{equation}\label{eq:prop42_conc} 
\hspace{-0.7em}\mathcal{H}(\tilde{x}_1)\!=\!y \;\;\text{and}\;\;\mathbb{E}\Big[\|\mathcal{R}_h(\tilde{x}_1)\|\Big] \!=\! \inf_{\theta}\mathbb{E}\Big[\|\mathcal{R}_h(\hat{x}_1)\|\Big],
\vspace{-0.6em}
\end{equation} 
where $\hat{x}_1\!=\!\mathrm{Sample}_{\theta}(\tilde{x}_\tau\,|\,y,\tau)$ is defined using the fixed reference $\tilde{\pi}_1^\infty(\cdot\,|\,y)$ as in Algorithm~\ref{alg:PICSB}, with expectation over the sampling randomness.
\vspace{-1em}
\end{proposition}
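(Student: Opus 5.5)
The proof splits into two claims: observation feasibility, which is structural, and residual optimality, which comes from the fixed-point identity. The plan assumes, as the statement implies, that the terminal law $\tilde{\pi}_1^{(k+1)}$ produced by refresh round $k$ is the law of the inner-loop output $\hat{x}_1=\mathrm{Sample}_{\theta_k}(\tilde{x}_\tau\mid y,\tau)$. Here $\theta_k$ is the exact minimizer of Eq.~\eqref{eq:method:pde_loss} under the reference $\tilde{\pi}_1^{(k)}$, and the law is taken jointly over $\tau$, the bridge noise, and the sampler noise. If the paper instead defines $\tilde{\pi}_1^{(k+1)}$ as the law of $\mathrm{Sample}_{\theta_k}$ run from $x_0$, I would use expressiveness to argue that a minimizer can be chosen so that the two laws coincide. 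I expect this point to be the main obstacle, and the argument would state the identification explicitly as part of what "generated by Algorithm~\ref{alg:PICSB}" means.

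\textbf{Feasibility.} The last operation of Algorithm~\ref{alg:sample-theta} is the projection $\Pi_{\mathcal{C}_y}$, and Eq.~\eqref{eq:method:proj_Cy} shows that this projection overwrites the observed coordinates with $y$. Since $\mathcal{H}$ is coordinate selection, every output satisfies $\mathcal{H}(\hat{x}_1)=y$ deterministically, whatever the input state, $\tau$, or noise. Every element of the sequence is such an output law, so $\tilde{\pi}_1^{(k+1)}$, and hence $\tilde{\pi}_1^\infty$, is supported on $\mathcal{C}_y$. This gives $\mathcal{H}(\tilde{x}_1)=y$ almost surely. For $k=0$ one needs the initial static model to also sample through Algorithm~\ref{alg:sample-theta}, which holds by construction of $\tilde{\pi}_1$.

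\textbf{Residual optimality.} Fix the reference $\tilde{\pi}_1^{(k)}=\tilde{\pi}_1^\infty$. Exact inner-loop optimization gives
\[
\mathbb{E}\big[\|\mathcal{R}_h(\mathrm{Sample}_{\theta_k}(\tilde{x}_\tau\mid y,\tau))\|\big]=\inf_\theta \mathbb{E}\big[\|\mathcal{R}_h(\mathrm{Sample}_{\theta}(\tilde{x}_\tau\mid y,\tau))\|\big].
\]
The left side depends only on the law of the output, which is $\tilde{\pi}_1^{(k+1)}=\tilde{\pi}_1^\infty$, so it equals $\mathbb{E}_{\tilde{x}_1\sim\tilde{\pi}_1^\infty}\|\mathcal{R}_h(\tilde{x}_1)\|$. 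That is exactly the second identity in Eq.~\eqref{eq:prop42_conc}, with the right-hand infimum computed under the fixed reference $\tilde{\pi}_1^\infty$.

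\textbf{Role of the hypotheses.} The Lipschitz assumption on $\mathcal{R}_h$ over the relevant support guarantees that $\|\mathcal{R}_h(\cdot)\|$ is integrable and continuous. This makes the expectations finite and lets us write expectations over $\hat{x}_1$ as integrals against the terminal law; in particular, the infimum is not $+\infty$ and is well defined. Sufficient expressiveness ensures the infimum over $\theta$ is attained, or approached by a sequence whose limit is realized in the class, so that "exact inner-loop optimization" produces a $\theta_k$ achieving it. Combining the feasibility claim with the residual identity yields both parts of the proposition.
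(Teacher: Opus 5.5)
Your proposal follows the paper's proof: feasibility comes from the hard projection, and the residual identity comes from exact inner-loop optimization of the fixed-reference objective $J_\infty$. Your reliance on the final projection alone is the sound version of the feasibility step, since Algorithm~\ref{alg:sample-theta} injects noise after each intermediate projection. The paper obtains a minimizer from a separate lemma (lower semicontinuity plus a compact sublevel set) rather than from expressiveness.

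The identification you flag as the main obstacle is exactly the step the paper asserts without argument. It writes $\mathbb{E}\|\mathcal{R}_h(\tilde{x}_1)\|=J_\infty(\theta^{(k+1)})$, yet Algorithm~\ref{alg:PICSB} defines $\tilde{\pi}_1^{(k+1)}$ via $\mathrm{Sample}_{\tilde{\theta}}(x_0\mid y,0)$, not via the $\tau$-randomized inner-loop output. So your reading (a) is not the paper's definition, and stating the identification as an explicit hypothesis, as you propose, is the honest repair.
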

In this idealized regime, the refresh map is self-consistent. 
Thus, at a fixed point, observation feasibility holds by construction and the expected terminal residual attains the inner-loop optimum under the fixed reference.
A proof under the stated idealized assumptions is given in App.~\ref{app:thm:convergence}.

\begin{algorithm}[t]
\caption{Training procedure with iterative $\tilde{\pi}_1$ refresh}
\label{alg:PICSB}
\noindent\textbf{Input:}
$D_{\mathrm{train}}$;
refresh period $C$ and rounds $L$;
trainable velocity $v_{\theta^{(0)}}$;
projection $\Pi_{\mathcal{C}_y}$;
sampler $\mathrm{Sample}_\theta(\cdot)$ in Algorithm~\ref{alg:sample-theta};
$\mathrm{BrownianBridge}(\cdot)$;
residual operator $\mathcal{R}(\cdot)$.\\[2pt]
\textbf{Output:} trained parameters $\theta^{(LN)}$.\\[4pt]
\noindent\textbf{Initialize:} set $\theta \leftarrow \theta^{(0)}$.\\[4pt]
\noindent\textbf{for} $k = 0,1,\dots,L-1$ \textbf{do}\\
\hspace*{1.0em}\textbf{(0) Refresh static model:}
$\tilde{\theta} \leftarrow \theta$.\\
\hspace*{1.0em}\textbf{for} $j = 0,1,\dots,C-1$ \textbf{do}\\
\hspace*{2.0em}\textbf{(1) Sample training pair:}
$(x_0,y)\sim D_{\mathrm{train}}$.\\
\hspace*{2.0em}\textbf{(2) Surrogate endpoint:}
$\tilde{x}_1 \leftarrow \mathrm{Sample}_{\tilde{\theta}}(x_0\,|\,y, 0)$.\\
\hspace*{2.0em}\textbf{(3) Bridge state:}
$\tau\sim\mathrm{Unif}(0,1)$ and\\
\hspace*{6.0em}$\tilde{x}_\tau\leftarrow \Pi_{\mathcal{C}_y}\big(\mathrm{BrownianBridge}(x_0, \tilde{x}_1, \tau)\big)$.\\
\hspace*{2.0em}\textbf{(4) Reconstruction:}
$\hat{x}_1 \leftarrow \mathrm{Sample}_{\theta}(\tilde{x}_\tau\,|\,y,\tau)$.\\
\hspace*{2.0em}\textbf{(5) Loss (Eq.~\eqref{eq:method:pde_loss}) and update:}
$\theta \leftarrow \theta - \eta\nabla_{\theta}\mathcal{L}$.\\
\hspace*{1.0em}\textbf{end for}\\
\noindent\textbf{end for}\\[2pt]
\noindent\textbf{Return:} $\theta$.
\end{algorithm}

\vspace{-0.75em}
\subsection{Implementation details}\label{sec:method:implementation}
\vspace{-0.3em}
We describe the implementation of $\mathrm{Sample}_\theta$ (Algorithm~\ref{alg:sample-theta}) and the iterative refresh of $\tilde{\pi}_1$ (Algorithm~\ref{alg:PICSB}).
As summarized in Fig.~\ref{fig:architecture}, the static model $v_{\tilde{\theta}}$ generates surrogate endpoints to define $\tilde{\pi}_1$, while the trainable model $v_\theta$ is optimized solely via the physics residuals under hard observation projection.

\vspace{-1.0em}
\paragraph{Time-discretized hard-conditioned sampling.}
We discretize the bridge-time $\tau\!\in\![0,1]$ on a uniform grid $\tau_t\!=\!t\cdot s$ for $t=0,\dots,T$ with step size $s\!=\!1/T$.
Given a state $x_{\tau_t}$, each step performs a forward Euler update of the controlled drift followed by hard projection:
\vspace{-0.5em}
\begin{equation}
x_{\tau_{t+1}} \leftarrow \Pi_{\mathcal{C}_y}\!\big(x_{\tau_t} + s\,v_\theta(x_{\tau_t},\tau_t\,|\,y)\big) + \sqrt{\varepsilon s}\,z_t,
\vspace{-0.5em}
\end{equation}
where $z_t\!\sim\!\mathcal{N}(0,I)$.
Because projection overwrites observed coordinates, we have $x_{\tau_t}\!\in\!\mathcal{C}_y$ for all $t$.
Following common inpainting practice, the network is fed the projected state \citep{liu2023i2sb, Lugmayr_2022_CVPR} together with a time embedding of $\tau_t$.

\vspace{-1.em}
\paragraph{Surrogate bridge training with periodic refresh.}
At every $C$-th iteration (Algorithm~\ref{alg:PICSB}), we refresh the static parameters $\tilde{\theta}$ and draw a surrogate endpoint $\tilde{x}_1\!\sim\! \tilde{\pi}_1(\cdot|y)$ by running $\mathrm{Sample}_{\tilde{\theta}}$ from $x_0$ under hard conditioning.
We then sample $\tilde{x}_\tau$ from the Brownian-bridge closed form in Eq.~\eqref{eq:bb_conditional_compact} with $\tau\!\sim\!\mathrm{Unif}(0,1)$ and project it to enforce $\tilde{x}_\tau\!\in\!\mathcal{C}_y$.
Starting from $\tilde{x}_\tau$, the trainable sampler produces $\hat{x}_1\!=\!\mathrm{Sample}_{\theta}(\tilde{x}_\tau\,|\,y,\tau)$, and we update $\theta$ by minimizing Eq.~\eqref{eq:method:pde_loss}.
This can be viewed as learning a stochastic transport from an LF-induced prior toward an observation-conditioned feasible terminal law, with the PDE residual acting as an energy that shapes the terminal distribution, analogous to simulation-based Bayesian learning via diffusion bridges \citep{park2024stochastic}.

\begin{figure*}[t!]
    \centering
    \def\svgwidth{\linewidth}
    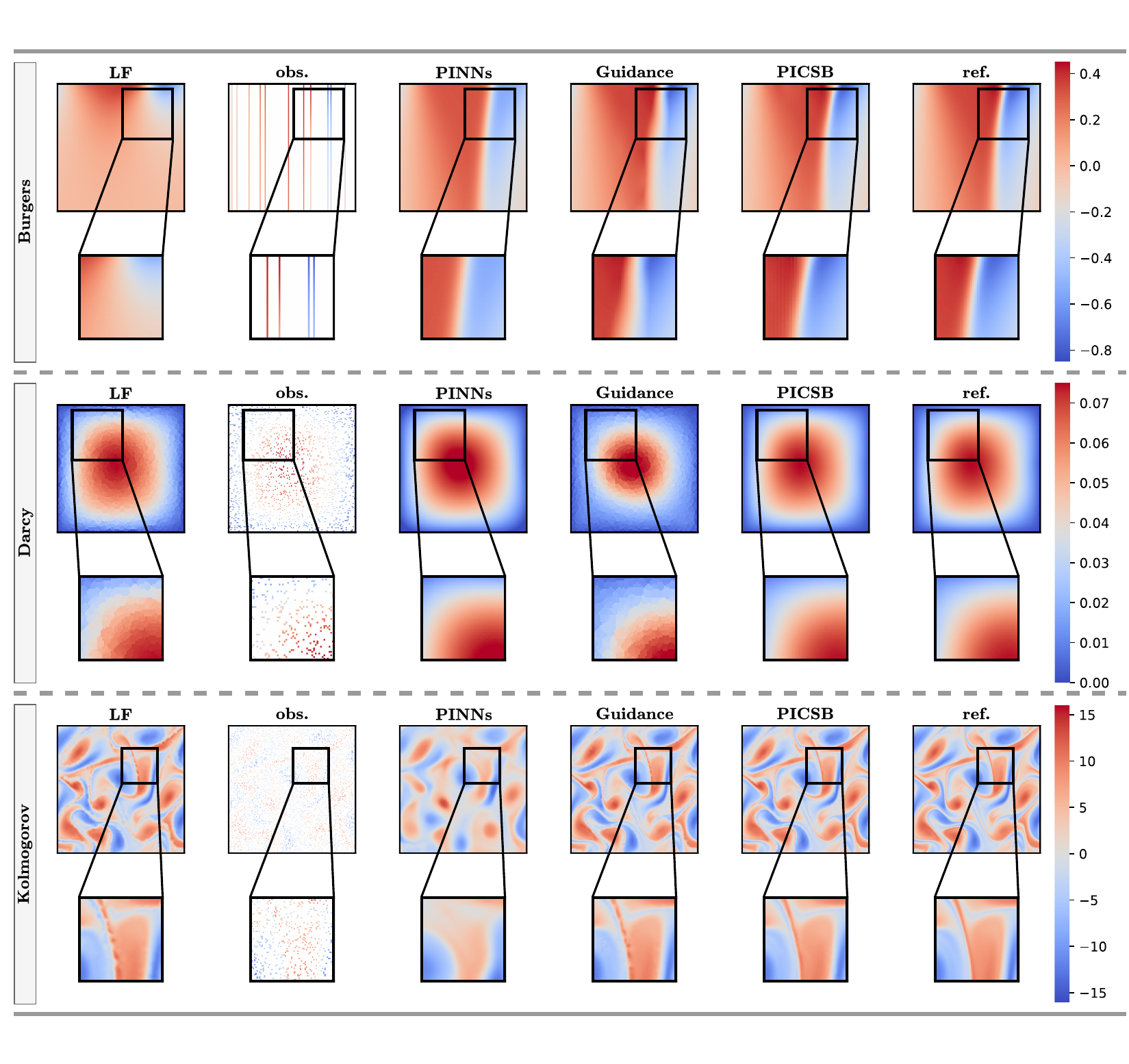
    \vspace{-2.75em}
    \caption{Qualitative comparison of reconstructed fields under (\hyperref[regime:R3]{R3}).
    For Burgers and Darcy flow, a single test sample is shown.
    For Kolmogorov, a snapshot at $\gamma=0.64$ from the time horizon $\gamma\in(0,1.25]$ is visualized.
    LF, sparse HF observations (obs.), PINNs, Guidance, and PICSB are compared against the HF reference (ref.).
    Zoomed-in regions highlight local reconstruction quality.}
    \label{fig:experiment_vis}
    \vspace{-1.5em}
\end{figure*}

\vspace{-1em}
\section{Experiments}\label{sec:ex}
\vspace{-0.5em}
We evaluate PICSB on three PDE benchmarks---1D Burgers, 2D Darcy flow, and 2D Kolmogorov flow---under sparse-observation regimes.
Throughout this section, we follow the abstract notation in Sections~\ref{sec:pre}--\ref{sec:met}.
PDEs are defined over a spatial coordinate $\xi$ and a physical time variable $\gamma$.
For dynamic systems, $\xi$ denotes the spatial coordinate and $\gamma$ denotes time; for steady-state problems, only $\xi$ is used.
To avoid confusion with PDE-specific notation, we clarify that 
the reconstructed HF field $x_1$ corresponds to the scalar solution field $u$ for the Burgers and Darcy, whereas for the Kolmogorov benchmark it corresponds to the vorticity field $\omega$.
For each benchmark, we generate $32$ training pairs and $4$ test pairs.
Implementation details and additional experimental results are deferred to the appendix (App.~\ref{app:sec:ex}--\ref{app:sup_ex}).

\vspace{-1.15em}
\paragraph{1D Burgers’ Equation.} 
We consider the viscous Burgers' equation on $\xi\!\in\!(0,1)$ over the time horizon $\gamma\!\in\!(0,1]$,
\vspace{-0.3em}
\begin{align*}
\partial_\gamma u(\xi,\gamma) + u(\xi,\gamma)\nabla u(\xi,\gamma) = \nu\,\nabla^2 u(\xi,\gamma),
\vspace{-0.75em}
\end{align*}
with initial condition $u(\xi,0)=u_0(\xi)$.
Following \citet{huang2024diffusionpde}, HF trajectories are generated by simulating the equation on a $512\times512$ space--time grid with viscosity $\nu_{\mathrm{HF}}=0.01$, and then downsampling to a $128\times128$ grid.
LF inputs are obtained by solving the same equation with a larger viscosity $\nu_{\mathrm{LF}}=0.1$ under the same initial condition, producing smoother dynamics and a systematic cross-fidelity bias.

\vspace{-1.em}
\paragraph{Darcy Flow}
\vspace{-0.3em}
We consider the steady-state Darcy equation on $\xi\in(0,1)^2$:
\vspace{-0.75em}
\begin{align*}
-\nabla\cdot\big(a(\xi)\,\nabla u(\xi)\big) = f(\xi),
\vspace{-1.em}
\end{align*}
where $u(\xi)$ denotes the pressure field and $a(\xi)$ is a spatially varying permeability coefficient.
We assume a known governing operator with Dirichlet boundary condition $u(\xi)=0$ on $\partial(0,1)^2$ and constant forcing $f(\xi)=1$.
The permeability field $a(\xi)$ is provided for each sample, and all fields are represented on a $128\times128$ grid following \citet{huang2024diffusionpde}.
HF references correspond to the PDE solution $u(\xi)$ for the given $a(\xi)$.
LF inputs are constructed by applying nearest-neighbor interpolation to the sparse HF measurements, yielding a coarse proxy field.

\vspace{-1.0em}
\paragraph{2D Kolmogorov Flow}
We use the turbulent 2D Kolmogorov flow dataset from \citet{shu2023physics}, also adopted in \citet{li2025physics}.
We consider the vorticity form of the Navier--Stokes dynamics on a periodic spatial domain $\xi=(\xi_1,\xi_2)\in(0,2\pi)^2$ over the time horizon $\gamma\in(0,1.25]$:
\vspace{-1.0em}
\begin{align*}
\partial_\gamma \omega(\xi,\gamma) + v(\xi,\gamma)\cdot\nabla \omega(\xi,\gamma) = \frac{1}{Re}\nabla^2 \omega(\xi,\gamma) + f(\xi,\gamma),
\vspace{-1.em}
\end{align*}
where $\omega(\xi,\gamma)$ denotes the vorticity field and $v(\xi,\gamma)$ is the corresponding velocity field.
Following \citet{shu2023physics}, we use the Reynolds number $Re\!=\!1000$ and forcing $f(\xi,\gamma) \!=\! -4\cos(4\xi_2) \!-\! 0.1\,\omega(\xi,\gamma)$.
Spatial fields are discretized on a $256\times256$ grid, and we use $40$ uniformly spaced time frames over $\gamma\in(0,1.25]$.
HF references are obtained from the provided simulations.
LF inputs are constructed by subsampling sparse HF vorticity observations and applying bi-cubic interpolation to obtain a provisional full-field estimate.
To evaluate physics consistency, we follow \citet{li2025physics} and compute PDE residuals from the vorticity equation; under periodic boundary conditions, the velocity field is recovered from vorticity via the stream function $\psi$:
\begin{align*}
v(\xi,\gamma)=\nabla\times\psi(\xi,\gamma), \quad \psi(\xi,\gamma)=\nabla^{-2}\omega(\xi,\gamma).
\vspace{-1.em}
\end{align*}

\begin{table}[t]
  \captionsetup{skip=0.05in}
  \caption{Relative reconstruction error (in \%) on the test set, reported as $\mathrm{RelError}$ in Eq.~\eqref{eq:eval_metric} under each observation regime (\hyperref[regime:R1]{R1}--\hyperref[regime:R3]{R3}).
  All values are averaged over the four test instances per PDE benchmark.
  For each task, the best-performing method is \underline{underlined}, and the overall best result within each PDE benchmark is highlighted in \textbf{bold}.}
  \label{tab:rel_error_table}
  \centering
  \begin{small}
  \begin{sc}
  \resizebox{\columnwidth}{!}{%
    \begin{tabular}{llccc}
      \toprule
      PDE & Task & \textit{PINNs} & \textit{Guidance} & PICSB \\
      \midrule
      \multirow{3}{*}{Burgers}
        & R1        & 8.29  & \underline{\textbf{0.99}} &  2.59 \\
        & R2  & \underline{13.48}    & 19.04 & 14.05 \\
        & R3   & 11.79 & \underline{10.66} & 14.25 \\
      \midrule
      \multirow{2}{*}{Darcy}
        & R1  & 5.69 & 12.02 & \underline{0.29} \\
        & R3 & 5.64 & 27.39 & \underline{\textbf{0.28}} \\
      \midrule
      \multirow{3}{*}{Kolmogorov}
        & R1  & 64.47 & 8.99 & \underline{\textbf{7.32}} \\
        & R2 & 65.20 & 10.62 & \underline{8.66} \\
        & R3  & 65.06 & 10.61 & \underline{8.37} \\
      \bottomrule
    \end{tabular}%
  }
  \end{sc}
  \end{small}
\end{table}

\vspace{-1.em}
\subsection{Baselines}
\vspace{-0.5em}
To our knowledge, no existing method directly addresses our problem setting, which requires \emph{zero-shot} reconstruction, \emph{DA} from sparse HF observations, and \emph{label-efficient} training without full-field HF supervision (See Table~\ref{tab:comparison}).
We therefore compare against two representative baselines that each relax one of these requirements.

\vspace{-1.em}
\paragraph{Physics-Informed Neural Networks (\textit{PINNs}).}
PINNs represent a class of re-optimization-based DA methods that can assimilate sparse HF observations while explicitly enforcing governing PDE constraints.
In this baseline, the unknown solution field is parameterized by a neural network and its parameters are optimized at test time to (i) fit the sparse HF observations and (ii) minimize the PDE residual over the domain.
While PINNs support DA from sparse HF observations and are label-efficient, they require per-instance test-time optimization and therefore do not enable zero-shot inference.

\vspace{-1.2em}
\paragraph{Guidance-based sampling (\textit{Guidance}).}
This baseline is adapted from PDE-guided diffusion frameworks \citep{huang2024diffusionpde}, which perform conditional generation via inference-time residual guidance.
Since full-grid HF fields are unavailable in our setting, we construct a label-efficient variant by pretraining the generative prior solely on full-grid LF fields.
At inference time, sparse HF observations and PDE residuals are incorporated through gradient-based guidance during sampling, without requiring any full-field HF supervision.
This variant enables zero-shot DA but relaxes the label-efficiency requirement by relying on LF-only pretraining.

\vspace{-0.5em}
\subsection{Observation regimes}\label{subsec:exp:regimes}
\vspace{-0.5em}
We evaluate all methods under three observation regimes that differ in how the HF observation locations are selected.

\vspace{-1.em}
\paragraph{(R1) Resampled observations.}\phantomsection\label{regime:R1}
For dynamic problems (1D Burgers, 2D Kolmogorov), we independently resample a random $10\%$ set of spatial locations at each physical time frame.
For the steady Darcy problem, this corresponds to sampling a random $10\%$ subset of grid points per sample.

\vspace{-1em}
\paragraph{(R2) Per-sample fixed sensor set.}\phantomsection\label{regime:R2}
For dynamic problems, we draw one random $10\%$ sensor set per trajectory and keep it fixed for all time frames of that trajectory, while different trajectories use different sensor sets.
For Darcy, this setting coincides with (\hyperref[regime:R1]{R1}); thus we do not report a separate Darcy result for (\hyperref[regime:R2]{R2}).

\vspace{-1em}
\paragraph{(R3) Globally fixed sensor network.}\phantomsection\label{regime:R3}
We choose a single $10\%$ sensor set once and reuse it for all samples.
For dynamic problems, the same sensor coordinates are applied at every time frame.

\begin{table}[t]
  \captionsetup{skip=0.05in}
  \caption{Average wall-time (seconds) for inference under (\hyperref[regime:R3]{R3}).}
  \label{tab:time_complex_table}
  \centering
  \begin{small}
  \begin{sc}
  \resizebox{\columnwidth}{!}{%
    \begin{tabular}{lccc}
      \toprule
      PDE & \textit{PINNs} & \textit{Guidance} & PICSB \\
      \midrule
      Burgers & 1733  & 174 &  \textbf{0.023} \\
      \midrule
      Darcy & 2682 & 173 & \textbf{0.023} \\
      \midrule
      Kolmogorov & 10680 & 801 & \textbf{0.134}  \\
      \bottomrule
    \end{tabular}
    }
  \end{sc}
  \end{small}
\end{table}

\vspace{-0.5em}
\subsection{Evaluation metric}
\vspace{-0.5em}
We measure reconstruction accuracy using the relative $L^2$ error against the full-grid HF reference.
For a predicted field $\hat{x}_1$ and its HF reference $x_1$ (with $x_1=u$ for Burgers/Darcy and $x_1=\omega$ for Kolmogorov), we compute

\begin{equation}\label{eq:eval_metric}
\mathrm{RelError} = \frac{\|\hat{x}_1-x_1\|}{\|x_1\|}.    
\end{equation}
For dynamic systems, we aggregate the error over all time frames.
We report the average relative error (in $\%$) over the four test instances.

\vspace{-0.5em}
\subsection{Results}
\vspace{-0.5em}
Table~\ref{tab:rel_error_table} reports the relative reconstruction error ($\%$) under three observation regimes (\hyperref[regime:R1]{R1}--\hyperref[regime:R3]{R3}), and Table~\ref{tab:time_complex_table} reports the average inference wall-time under the globally fixed sensor regime (\hyperref[regime:R3]{R3}).
Figure~\ref{fig:experiment_vis} provides qualitative comparisons.

\vspace{-1.0em}
\paragraph{Accuracy.}
PICSB performs best on Darcy and Kolmogorov, while remaining competitive on Burgers.
For \textbf{Burgers}, both LF-based amortized methods outperform \textit{PINNs} in (\hyperref[regime:R1]{R1}), indicating that an LF prior together with resampled observations provides sufficient information for accurate recovery.
In this easiest regime, \textit{Guidance} achieves the lowest error, suggesting that when the LF simulation yields a smoother trajectory and observations are dispersed over time, inference-time correction can effectively leverage the accumulated HF coverage.
In the more stringent (\hyperref[regime:R2]{R2}--\hyperref[regime:R3]{R3}) regimes with fixed sensors, all methods degrade and the gap narrows, consistent with reduced spatial coverage and weaker identifiability from repeatedly observing the same coordinates.
For \textbf{Darcy}, PICSB decisively outperforms both baselines.
Here, the LF input is a nearest-neighbor interpolation of sparse measurements, which induces strong blocky artifacts.
A prior pretrained on such LF fields (\textit{Guidance}) tends to remain biased toward this coarse manifold even after applying observation/PDE guidance, whereas PICSB learns to correct the interpolation-induced bias by shaping the terminal samples through physics-residual training under hard conditioning.
For \textbf{Kolmogorov}, PICSB consistently improves over \textit{Guidance} across all regimes and dramatically outperforms \textit{PINNs}.
In this turbulent, high-dimensional setting, per-instance optimization with \textit{PINNs} under sparse sensors tends to converge to overly smooth solutions, and \textit{Guidance} is again limited by the quality of the interpolated LF proxy; in contrast, PICSB better preserves vortical structure by directly biasing the learned transport toward PDE-consistent evolution while enforcing observations throughout sampling.

\vspace{-1.0em}
\paragraph{Time Complexity.}
PICSB is orders-of-magnitude faster than both baselines.
\textit{PINNs} require hour-level runtimes per sample due to per-instance optimization, while \textit{Guidance} still takes minutes because it relies on many diffusion steps with gradient-based guidance.
In contrast, PICSB produces reconstructions in sub-second wall-time across all benchmarks, enabling time-critical DA where solutions must be generated within each update cycle.

\vspace{-1.em}
\paragraph{Qualitative visualization.}
Figure~\ref{fig:experiment_vis} corroborates these trends under the globally fixed sensor regime (\hyperref[regime:R3]{R3}).
For \textbf{Burgers}, \textit{PINNs} tend to over-smooth steep gradients, while PICSB recovers sharper shock-like transitions in the zoomed region.
\textit{Guidance} can appear comparably sharp in some cases, which is consistent with the LF Burgers trajectory providing a relatively smooth, structurally coherent proxy.
For \textbf{Darcy}, \textit{Guidance} outputs remain close to the nearest-neighbor-interpolated LF field, retaining blocky artifacts, whereas PICSB corrects these artifacts and better matches the HF reference.
For \textbf{Kolmogorov}, \textit{PINNs} again produce overly smooth vorticity fields and \textit{Guidance} remains biased toward the interpolated LF proxy, while PICSB better preserves coherent vortical structures and fine-scale details.

\vspace{-0.5em}
Implementation details and additional experiments are provided in App.~\ref{app:sec:ex}--\ref{app:sup_ex}.

\vspace{-1em}
\section{Conclusion}\label{sec:con}

\vspace{-0.5em}
We studied multi-fidelity DA for PDE-governed systems in a practical regime where full-grid LF fields are available, but training-time HF information is limited to sparse observations.
To avoid per-instance test-time optimization and the common reliance on full-field HF supervision, we proposed the \emph{Physics-Informed Conditional Schr\"odinger Bridge Model} (PICSB), a guidance-free conditional transport framework that maps an LF input and sparse HF measurements to a distribution of physically consistent HF reconstructions.
PICSB enforces observation feasibility via hard projection at every sampling step and shapes the terminal distribution using a physics-residual training objective, enabling efficient LF-to-HF correction under sparse supervision.

\vspace{-0.3em}
A key challenge in this regime is learning a bridge without access to HF endpoints.
We addressed this by introducing an iterative surrogate-endpoint refresh scheme that maintains a slowly moving terminal reference distribution and supports stable bridge learning from physics residuals alone.
Empirically, PICSB achieves the strongest overall accuracy on the Darcy and Kolmogorov benchmarks and remains competitive on Burgers, while delivering orders-of-magnitude faster inference than both optimization-based and inference-time-guided diffusion baselines.
In particular, PICSB produces sub-second reconstructions, making it compatible with time-critical DA scenarios where assimilation must be performed within each physical update cycle.

\vspace{-0.3em}
Several directions are promising for future work.
First, extending PICSB to settings with unknown, partially known, or learned operators would broaden its applicability beyond the known-PDE regime.
Second, incorporating noise-aware conditioning could improve robustness when HF measurements are severely noisy.
Finally, broadening PICSB to accommodate less informative LF inputs at inference time is an important avenue for future research.

\vspace{-1em}
\section*{Impact Statement}
\vspace{-0.5em}
This paper presents work whose goal is to advance the field of Machine Learning. There are many potential societal consequences of our work, none which we feel must be specifically highlighted here.

\bibliography{ICML_reference}
\bibliographystyle{icml2026}


\clearpage
\appendix
\onecolumn

\section{Supplementary for Section~\ref{sec:met}}

\subsection{Proof of Proposition~\ref{thm:equivalence_weak}}\label{app:thm:equivalence_weak}
\begin{lemma}[Existence of a localization gap]\label{lem:beta_gap}
Fix $y$ and define $f(x):=\|\mathcal{R}_h(x)\|$.
Let $\mathcal{K}\subseteq \mathcal{C}_y$ be a compact set and assume that $f$ is continuous on $\mathcal{K}$ and admits a unique minimizer: there exists a unique $x_1^\star\in\mathcal{K}$ such that
\[
f(x_1^\star)=\min_{x\in\mathcal{K}} f(x)=\delta^\star_{\mathcal{K}}(y;h).
\]
For any $\rho\in \big(0,\sup_{z\in\mathcal{K}}\|z-x_1^\star\|\big)$, define the neighborhood
\[
\mathcal{U}_\rho:=\{x\in\mathcal{K}:\|x-x_1^\star\|\le \rho\}.
\]
Then the quantity
\[
\bar\beta(\rho) \;:=\; \min_{x\in \mathcal{K}\setminus \mathcal{U}_\rho} \Big(f(x)-\delta^\star_{\mathcal{K}}(y;h)\Big)
\]
is well-defined and satisfies $\bar\beta(\rho)>0$.
Consequently, for any $0<\beta\le \bar\beta(\rho)$,
\[
x\in\mathcal{K}\setminus \mathcal{U}_\rho \quad\Longrightarrow\quad f(x)\ge \delta^\star_{\mathcal{K}}(y;h)+\beta.
\]
\end{lemma}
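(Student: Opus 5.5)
The plan is a standard compactness argument, applied to the restriction of $f$ to the set $\mathcal{K}\setminus\mathcal{U}_\rho$.

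\emph{Step 1: the set $\mathcal{K}\setminus\mathcal{U}_\rho$ is compact and nonempty.} We can write
\[
\mathcal{K}\setminus\mathcal{U}_\rho=\mathcal{K}\cap\{x:\|x-x_1^\star\|\ge\rho\}.
\]
The second set is closed, so this is a closed subset of the compact set $\mathcal{K}$, hence compact.

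For nonemptiness, use the choice $\rho<\sup_{z\in\mathcal{K}}\|z-x_1^\star\|$. The definition of the supremum gives some $z\in\mathcal{K}$ with $\|z-x_1^\star\|>\rho$, so $z\in\mathcal{K}\setminus\mathcal{U}_\rho$.

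This step needs care about the boundary convention. Because $\mathcal{U}_\rho$ is the closed ball intersected with $\mathcal{K}$, the complement uses the strict inequality $\|x-x_1^\star\|>\rho$. The strict set is not closed in general, so the minimum might not be attained. I would handle this by working on the closed set $\mathcal{K}\cap\{\|x-x_1^\star\|\ge\rho\}$, which contains $\mathcal{K}\setminus\mathcal{U}_\rho$. Call its infimum $\beta'$. Then $\bar\beta(\rho)$, read as an infimum over $\mathcal{K}\setminus\mathcal{U}_\rho$, is at least $\beta'$. It suffices to show $\beta'>0$ and that $\beta'$ is attained. If the authors intend the minimum to be attained, one can remark that the infimum over the strict set is bounded below by the attained minimum over the closed set; that is all the consequence requires. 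I expect this boundary bookkeeping to be the only real obstacle.

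\emph{Step 2: attainment and positivity.} The function $f$ is continuous on $\mathcal{K}$, so by Weierstrass it attains its minimum on the compact nonempty set from Step 1, at some point $\bar x$ with $\|\bar x-x_1^\star\|\ge\rho>0$. Therefore $\bar x\neq x_1^\star$.

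Because $x_1^\star$ is the minimizer of $f$ on $\mathcal{K}$, we have $f(\bar x)\ge\delta^\star_{\mathcal{K}}(y;h)$. Uniqueness of the minimizer rules out equality, since equality would make $\bar x$ a second minimizer. Hence
\[
f(\bar x)-\delta^\star_{\mathcal{K}}(y;h)>0,
\]
which shows that $\bar\beta(\rho)$ is well defined and strictly positive.

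\emph{Step 3: the consequence.} Take any $x\in\mathcal{K}\setminus\mathcal{U}_\rho$ and any $0<\beta\le\bar\beta(\rho)$. Then
\[
f(x)-\delta^\star_{\mathcal{K}}(y;h)\ge\bar\beta(\rho)\ge\beta,
\]
which is exactly the claimed implication.
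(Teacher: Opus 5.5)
Your proof is correct and follows the paper's route (closed subset of a compact set, Weierstrass, then a contradiction with uniqueness of the minimizer). Your boundary bookkeeping is a genuine repair: the paper asserts the equality you first write in Step 1, $\mathcal{K}\setminus\mathcal{U}_\rho=\mathcal{K}\cap\{x:\|x-x_1^\star\|\ge\rho\}$, which is false because $\mathcal{U}_\rho$ uses $\le\rho$ (with $\mathcal{R}_h(x)=x-x_1^\star$ and $\mathcal{K}$ a segment in $\mathcal{C}_y$ starting at $x_1^\star$, the minimum over the strict set is not even attained); so state that display as an inclusion $\subseteq$, read $\bar\beta(\rho)$ as an infimum bounded below by the attained minimum over the closed superset, and keep your nonemptiness check from $\rho<\sup_{z\in\mathcal{K}}\|z-x_1^\star\|$, which the paper omits.
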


\vspace{0.5em}

\begin{proof}
$\\$
Since $\mathcal{K}$ is compact and $f$ is continuous, $f$ attains its minimum on $\mathcal{K}$ by the Weierstrass extreme value theorem, and by assumption the minimizer is unique at $x_1^\star$.

The set $\mathcal{K}\setminus \mathcal{U}_\rho$ is compact.
Indeed, the set $\{x\in\mathbb{R}^n:\|x-x_1^\star\|\ge \rho\}$ is closed, and hence
\[
\mathcal{K}\setminus \mathcal{U}_\rho = \mathcal{K}\cap\{x:\|x-x_1^\star\|\ge \rho\}
\]
is a closed subset of the compact set $\mathcal{K}$, and therefore compact.

Since $f(x)\ge \delta^\star_{\mathcal{K}}(y;h)$ for all $x\in\mathcal{K}$, we have $f(x)-\delta^\star_{\mathcal{K}}(y;h)\ge 0$ on $\mathcal{K}\setminus \mathcal{U}_\rho$.
Thus the continuous function $x\mapsto f(x)-\delta^\star_{\mathcal{K}}(y;h)$ attains its minimum on $\mathcal{K}\setminus \mathcal{U}_\rho$, so $\bar\beta(\rho)$ is well-defined and satisfies $\bar\beta(\rho)\ge 0$.

To show $\bar\beta(\rho)>0$, assume for contradiction that $\bar\beta(\rho)=0$.
Then by definition of the minimum, there exists $\bar{x}\in\mathcal{K}\setminus \mathcal{U}_\rho$ such that $f(\bar{x})=\delta^\star_{\mathcal{K}}(y;h)$.
Since $\bar{x}\notin \mathcal{U}_\rho$, we have $\bar{x}\neq x_1^\star$, which contradicts the uniqueness of the minimizer.
Hence $\bar\beta(\rho)>0$.
\end{proof}

\vspace{1em}

\paragraph{Proposition 4.1} (Stability under identifiability)\textbf{.}
Fix $y$ and assume the continuous PDE admits a unique observation-consistent solution $x_1^\star\in\mathcal{C}_y$.
Assume that local identifiability holds in a neighborhood of $x_1^\star$, with constants $c\!>\!0$ and $\eta(h)$, and that the residual sublevel sets are localized with constant $\beta\!>\!0$.
Then any $\hat{x}_1\!\in\!\mathcal{C}_y$ satisfying $\|\mathcal{R}_h(\hat{x}_1)\|\le \delta$ for some 
$\delta\le \delta^\star(y;h)+\beta$ obeys
\begin{equation}
\|\hat{x}_1-x_1^\star\| \le c(\delta-\delta^\star(y;h))+\eta(h).
\end{equation}

\vspace{1em}

\begin{proof}$\\$
Fix $y$ and let $x_1^\star\in\mathcal{C}_y$ denote the unique observation-consistent solution.
Recall the residual floor on $\mathcal{C}_y$,
\[
\delta^\star(y;h)\;:=\;\inf_{x\in\mathcal{C}_y}\|\mathcal{R}_h(x)\|.
\]
We now make the assumptions in the statement precise.

\begin{assumption}[Local identifiability]\label{asmp:local_id}
There exist $\rho>0$, a constant $c>0$, and a function $\eta(h)\ge 0$ with $\eta(h)\to 0$ as $h\to 0$
such that, for the neighborhood
\[
\mathcal{U}\;:=\;\{x\in\mathbb{R}^n:\|x-x_1^\star\|\le \rho\},
\]
the following holds for all $x\in\mathcal{U}\cap\mathcal{C}_y$:
\begin{equation}
\|x-x_1^\star\|
\;\le\;
c\big(\|\mathcal{R}_h(x)\|-\delta^\star(y;h)\big)+\eta(h).
\label{eq:pf_A1}
\end{equation}
\end{assumption}

\paragraph{Deriving a localization constant on a compact relevant set.}
Let $\mathcal{K}\subseteq \mathcal{C}_y$ be a compact set that contains $\hat{x}_1$, and define
\[
\delta^\star_{\mathcal{K}}(y;h)\;:=\;\min_{x\in\mathcal{K}}\|\mathcal{R}_h(x)\|.
\]
Assume that $x_1^\star$ is the unique minimizer of $f(x):=\|\mathcal{R}_h(x)\|$ over $\mathcal{K}$.
For the same $\rho>0$ as above, Lemma~\ref{lem:beta_gap} yields a constant
\[
\beta\;:=\;\bar\beta(\rho)\;>\;0
\quad\text{such that}\quad
\inf_{x\in\mathcal{K}\setminus\mathcal{U}}\|\mathcal{R}_h(x)\|
\;\ge\;
\delta^\star_{\mathcal{K}}(y;h)+\beta.
\]
Equivalently, we may record this as the following localization inequality:
\begin{equation}
\inf_{x\in\mathcal{K}\setminus\mathcal{U}}\|\mathcal{R}_h(x)\|
\;\ge\;
\delta^\star_{\mathcal{K}}(y;h)+\beta.
\label{eq:pf_A2K}
\end{equation}

\vspace{0.5em}
Now let $\hat{x}_1\in\mathcal{K}$ satisfy $\|\mathcal{R}_h(\hat{x}_1)\|\le \delta$ for some
\[
\delta\;\le\;\delta^\star_{\mathcal{K}}(y;h)+\beta.
\]
We first show that $\hat{x}_1\in\mathcal{U}$.
If $\hat{x}_1\notin\mathcal{U}$, then $\hat{x}_1\in\mathcal{K}\setminus\mathcal{U}$ and \eqref{eq:pf_A2K} gives
\[
\|\mathcal{R}_h(\hat{x}_1)\|
\;\ge\;
\inf_{x\in\mathcal{K}\setminus\mathcal{U}}\|\mathcal{R}_h(x)\|
\;\ge\;
\delta^\star_{\mathcal{K}}(y;h)+\beta,
\]
contradicting $\|\mathcal{R}_h(\hat{x}_1)\|\le \delta \le \delta^\star_{\mathcal{K}}(y;h)+\beta$.
Therefore $\hat{x}_1\in\mathcal{U}$.

Since $\hat{x}_1\in\mathcal{U}\cap\mathcal{C}_y$, we may apply \eqref{eq:pf_A1} with $x=\hat{x}_1$:
\[
\|\hat{x}_1-x_1^\star\|
\;\le\;
c\big(\|\mathcal{R}_h(\hat{x}_1)\|-\delta^\star(y;h)\big)+\eta(h).
\]
Using $\|\mathcal{R}_h(\hat{x}_1)\|\le \delta$ yields
\[
\|\hat{x}_1-x_1^\star\|
\;\le\;
c(\delta-\delta^\star(y;h))+\eta(h),
\]
which proves the claim.
\end{proof}

\vspace{1em}

The intuition regarding the upper bound on the difference between $\hat{x}_1$ and $x^*_1$ is as follows. First, the upper bound tends to increase as the fidelity of the target PDE becomes higher (HF). Conversely, the bound decreases as the spatial resolution is refined or the time interval narrows. We provide a theoretical verification of these intuitions specifically within the context of the Burgers' equation.

\begin{proposition}[Error Bound Stability of 1D Burgers' Equation]\label{burgersprop4.1}
Suppose the observation constraint set $\mathcal{C}_y$ ensures the unique existence of a solution. Let $x^*$ be the true state and $\hat{x}$ be the approximate state. Let $\delta'' = \|\mathcal{R}(\hat{x}) - \mathcal{R}(x^*)\|$ denote the residual difference. Then, there exists a state error bound function $\epsilon(\delta'', \text{PDE}_{\text{fidelity}}, h)$ such that:
\begin{equation}
\| \hat{x} - x^* \| < \epsilon(\delta'', \text{PDE}_{\text{fidelity}}, h)
\end{equation}
where $h$ is the scaling of grid. The error bound $\epsilon$ satisfies:
\begin{itemize}
    \item $\frac{\partial \epsilon}{\partial \delta''} > 0$: The bound is monotonically increasing with respect to the residual difference.
    \item $\epsilon$ increases as the PDE exhibits higher frequency (HF) component
    \item $\frac{\partial \epsilon}{\partial h} < 0$: As the grid resolution increases ($h \to 0$), the bound $\epsilon$ tends to decrease.
\end{itemize}
\end{proposition}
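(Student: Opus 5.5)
We will construct $\epsilon$ explicitly by linearizing the discrete Burgers residual around $x^*$ and bounding the inverse of the linearized operator. Write the discrete residual on the space--time grid (spacing $h$ in space, $k\propto h$ in time) as
\[
\mathcal{R}(x)=D_\gamma x+\tfrac12 D_\xi(x\odot x)-\nu D_{\xi\xi}x ,
\]
with $D_\gamma,D_\xi,D_{\xi\xi}$ the difference matrices. For $e:=\hat x-x^*$ the residual difference decomposes exactly as
\[
\mathcal{R}(\hat x)-\mathcal{R}(x^*)=L_{x^*}e+\tfrac12 D_\xi(e\odot e),\qquad L_{x^*}e:=D_\gamma e+D_\xi(x^*\odot e)-\nu D_{\xi\xi}e .
\]
Since $\mathcal{C}_y$ fixes the observed coordinates, $e$ lies in the subspace of unobserved coordinates. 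The uniqueness hypothesis is used as the statement that $L_{x^*}$ restricted to this subspace is injective, so it has a bounded left inverse. Its norm we denote $\kappa(x^*,h)$.

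\textbf{Step 1: a basic energy estimate.} Take the inner product of the linearized equation with $e$ and use discrete summation by parts. The convective term yields at most $\tfrac12\|D_\xi x^*\|_\infty\|e\|^2$, and the diffusion term yields $\nu\|D_\xi e\|^2\ge 0$. A discrete Gr\"onwall argument over the time horizon then gives
\[
\|e\|\le \kappa\bigl(\|L_{x^*}e\|\bigr),\qquad \kappa\lesssim \exp\bigl(\|D_\xi x^*\|_\infty\bigr).
\]

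\textbf{Step 2: handle the quadratic remainder.} Use the inverse estimate $\|D_\xi\|\le 2/h$ to get $\|D_\xi(e\odot e)\|\le (C/h)\|e\|^2$. Then apply a fixed-point or continuation argument in a small ball: $\|e\|\le \kappa\delta''+\kappa(C/h)\|e\|^2$. Whenever $4\kappa^2C\delta''/h<1$, this gives
\[
\|e\|<\epsilon:=\frac{h}{2\kappa C}\Bigl(1-\sqrt{1-4\kappa^2 C\delta''/h}\Bigr).
\]
This $\epsilon$ is strictly increasing in $\delta''$, which proves the first bullet.

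\textbf{Step 3: read off the fidelity and grid dependence.} For the second bullet, we identify ``higher fidelity'' with smaller $\nu$, i.e. steeper fronts. Such fronts make $\|D_\xi x^*\|_\infty$ larger, with $\sim 1/\nu$ scaling near shocks. This enlarges $\kappa$, and $\epsilon$ is increasing in $\kappa$, so the bound grows for high-frequency solutions.

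For the third bullet we must show $\partial\epsilon/\partial h<0$. The obstacle is that the dependence through $C/h$ goes the wrong way. I would therefore state the result in the regime where the discretization error of $x^*$ dominates. Concretely, replace $\delta''$ by $\delta''+\tau(h)$, where $\tau(h)=O(h^p)$ is the truncation error of the scheme measured against the continuous solution. One then shows that $\kappa(h)\to\kappa_{\text{cont}}$ and that the term $\tau(h)$ dominates as $h\to 0$, giving monotone decrease.

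I expect Step 3, the $h$-monotonicity, to be the main difficulty, since it genuinely requires this regime restriction. I would state it as holding for $h$ below a threshold rather than globally, which is consistent with the ``tends to decrease'' phrasing.
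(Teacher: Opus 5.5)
There is a genuine gap, and it is in the third bullet. It does not follow from your Steps 1--2, and the proposed repair points the wrong way. Rationalizing your Step 2 bound gives $\epsilon=2\kappa\delta''/\bigl(1+\sqrt{1-4\kappa^2C\delta''/h}\bigr)$. At fixed $\delta''$ and $\kappa$ this has three problems:
\begin{itemize}
\item it \emph{grows} as $h\to0$;
\item it is undefined once $h<4\kappa^2C\delta''$;
\item it only covers $\hat x$ already known to lie within $r_+\le h/(\kappa C)$ of $x^*$, and that ball also collapses.
\end{itemize}
So ``$h$ below a threshold'' is exactly the regime in which Step 2 no longer exists.

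Replacing $\delta''$ by $\Delta=\delta''+\tau(h)$ leaves the $1/h$ from the inverse estimate untouched. Expanding, $\epsilon\approx\kappa\Delta+\kappa^3C\Delta^2/h$. The gain $\kappa\tau'(h)\sim h^{p-1}$ beats the loss $\kappa^3C\Delta^2/h^2$ only when $h^{p+1}\gtrsim\kappa^2C(\delta'')^2$, i.e.\ for $h$ \emph{above} a $\delta''$-dependent threshold. Since $\partial\epsilon/\partial h$ is taken at fixed $\delta''$, you cannot escape this by letting $\delta''$ shrink with $h$.

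Nor does $\kappa(h)\to\kappa_{\mathrm{cont}}$ help. A stability constant for the whole space--time operator gains nothing from refinement: the backward-difference block alone has $\sigma_{\min}\approx\pi/(2T)$, not $1/k$.

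Your sign bookkeeping is also off. Literally, your $\epsilon$ already has $\partial\epsilon/\partial h<0$. What it violates is the gloss ``$\epsilon$ decreases as $h\to0$''. The statement's formula and gloss contradict each other, and the gloss is what the paper actually proves.

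The missing ingredient is the one the paper's proof rests on. The paper does three things:
\begin{itemize}
\item it linearizes a single implicit step with the previous time level held fixed, $J=\frac{1}{\Delta t}I+A+D$ (the $-\frac{1}{\Delta t}$ coupling to $x_{i,j-1}$ is absent);
\item it drops the quadratic remainder at first order;
\item it bounds $\sigma_{\min}(J)\approx\frac{1}{\Delta t}+\lambda_{\min}(A+D)$ by Gershgorin plus the Laplacian eigenvalue $\nu(\pi/L)^2$.
\end{itemize}
This gives $\|\hat x-x^*\|\le\delta''/\bigl(\frac{1}{\Delta t}+\nu(\pi/L)^2+\min_i\frac{x_{i+1,j}-x_{i-1,j}}{2h}\bigr)$. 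Bullet two is the loss of $\nu(\pi/L)^2$, and bullet three is the $\frac{1}{\Delta t}$ term.

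If you want to keep your exact remainder, the same one-step coercivity absorbs it, because $k\propto h$ (your $k$ is the paper's $\Delta t$). Pair the step-$j$ residual difference with $e^j$, taking $e^{j-1}=0$. This gives $\|e^j\|\bigl(\frac1k-c_1\|D_\xi x^*\|_\infty-\frac{c_2}{h}\|e^j\|\bigr)\le\delta''$. Hence $\|e^j\|\lesssim k\,\delta''$ for $k\|D_\xi x^*\|_\infty$ small, on a ball of radius $\sim h/k$ that does not shrink as $h\to0$.

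Your different route for bullets one and two (exact remainder, a Gr\"onwall constant, $\nu$ entering through $\|D_\xi x^*\|_\infty$) is otherwise acceptable, with two fixes.
\begin{itemize}
\item Step 1 is not what Gr\"onwall gives. The energy estimate bounds $e$ by $L_{x^*}e$ \emph{and} the initial-level error $e^0$. $\mathcal{C}_y$ leaves $e^0$ free at the unobserved points unless the initial condition is built into $\mathcal{R}$. Without that, injectivity on the unobserved subspace is an observability statement whose constant is not controlled by $\exp(\|D_\xi x^*\|_\infty)$.
\item Define $\epsilon$ through your explicit upper bound $\exp(\|D_\xi x^*\|_\infty)$ rather than the exact $\kappa$. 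Only the former is visibly increasing in $\|D_\xi x^*\|_\infty$.
\end{itemize}
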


\vspace{1em}

\begin{proof}$\\$
To establish the stability-based error bound for the 1D Burgers' equation, we analyze the relationship between the state error and the residual difference through the following four steps.

\textbf{1. Residual and Jacobian Formulation:} 
We consider the discretized residual operator $\mathcal{R}(x)$ on a spatio-temporal grid $(i, j)$:
\begin{equation}
\mathcal{R}(x_{i,j}) = \frac{x_{i,j} - x_{i,j-1}}{\Delta t} + x_{i,j} \frac{x_{i+1,j} - x_{i-1,j}}{2h} - \nu \frac{x_{i+1,j} - 2x_{i,j} + x_{i-1,j}}{h^2}
\end{equation}
Linearizing $\mathcal{R}$ around the reference state yields the Jacobian matrix $J = \frac{1}{\Delta t} I + A + D$, where $D$ is the symmetric diffusion matrix with $D_{i,i} = \frac{2\nu}{h^2}$ and $D_{i,i\pm1} = -\frac{\nu}{h^2}$, and $A$ is the non-symmetric advection matrix capturing local transport.

\textbf{2. Spectral Analysis of Spatial Operators:} 
We estimate the lower bound of the minimum eigenvalue of the spatial operator $M = A + D$ using the Gershgorin Circle Theorem. For each row $i$, the Gershgorin disc is centered at $M_{i,i} = \frac{2\nu}{h^2} + \frac{x_{i+1} - x_{i-1}}{2h}$ with radius $R_i = |-\frac{\nu}{h^2} + \frac{x_i}{2h}| + |-\frac{\nu}{h^2} - \frac{x_i}{2h}|$. Under the refinement condition $h < 2\nu/|x_i|$, the radius reduces to $R_i = 2\nu/h^2$. By incorporating the global spectral property of the Laplacian $\nu (\frac{\pi}{L})^2$, we obtain a refined lower bound:
\begin{equation}
\lambda_{\min}(A + D) \approx \nu \left( \frac{\pi}{L} \right)^2 + \min_i \left( \frac{x_{i+1,j} - x_{i-1,j}}{2h} \right)
\end{equation}

\textbf{3. Norm Inequality and Stability Bound:} 
Let $e = \hat{x} - x^*$ be the state error. Applying a first-order Taylor expansion, the residual difference $\delta'' = \| \mathcal{R}(\hat{x}) - \mathcal{R}(x^*) \|$ is approximated by $\| J(x^*) \cdot e \|$. Using the matrix norm property $\| J e \| \geq \sigma_{\min}(J) \| e \|$ and approximating the minimum singular value as $\sigma_{\min}(J) \approx \frac{1}{\Delta t} + \lambda_{\min}(A+D)$, we derive the stability bound.

\textbf{4. Final Error Bound and Interpretation:} 
Combining the above results, we arrive at the final upper bound for the state error:
\begin{equation}
\| \hat{x} - x^* \| \leq \frac{\delta''}{\frac{1}{\Delta t} + \nu \left( \frac{\pi}{L} \right)^2 + \min_i \left( \frac{x_{i+1,j} - x_{i-1,j}}{2h} \right)} 
\end{equation}
This bound validates our intuition: as $\nu \to 0$ (High Fidelity), the diffusive term vanishes and the bound increases. Conversely, as $\Delta t \to 0$ (Higher Resolution), the temporal term $\frac{1}{\Delta t}$ dominates, effectively suppressing the state error.
\end{proof}

\clearpage

\subsection{Proof of Proposition~\ref{thm:convergence}}\label{app:thm:convergence}
\begin{lemma}[Hard observation feasibility]\label{lem:hard_feasibility_fp}
Suppose $\mathrm{Sample}_{\theta}$ applies the coordinate projection $\Pi_{\mathcal{C}_y}$ at every discretized step.
Then its terminal output $x_1=\mathrm{Sample}_{\theta}(x_0\,|\,y,0)$ satisfies $x_1\in\mathcal{C}_y$, i.e.,
$\mathcal{H}(x_1)=y$ (if $\mathrm{Sample}_{\theta}$ is randomized).
\end{lemma}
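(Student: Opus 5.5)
The claim follows from the fact that the last operation performed by $\mathrm{Sample}_{\theta}$ is always a projection. The argument is pathwise: fix any realization of the noise $\{z_t\}$, prove $\mathcal{H}(x_{\tau_T})=y$ for that realization, and conclude that the terminal output lies in $\mathcal{C}_y$ almost surely. Randomization therefore plays no role beyond this quantifier over realizations.

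The first step is to verify that $\Pi_{\mathcal{C}_y}$ in Eq.~\eqref{eq:method:proj_Cy} really maps into $\mathcal{C}_y$. Because $\mathcal{H}$ selects the coordinates where $\mathbf{M}=1$, write $\mathcal{H}(x)=(x_i)_{i:\mathbf{M}_i=1}$, with $y$ embedded on those coordinates. For any $x\in\mathbb{R}^n$ and any observed index $i$,
\[
\Pi_{\mathcal{C}_y}(x)_i=\mathbf{M}_i y_i+(1-\mathbf{M}_i)x_i=y_i .
\]
Hence $\mathcal{H}(\Pi_{\mathcal{C}_y}(x))=y$ for every $x$, regardless of the input.

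The second step reads off the structure of Algorithm~\ref{alg:sample-theta}. The returned value is defined in the final step as
\[
x_{\tau_T}=\Pi_{\mathcal{C}_y}\bigl(x_{\tau_{T-1}}+s\,v_\theta(x_{\tau_{T-1}}\,|\,y,\tau_{T-1})\bigr),
\]
and no noise is added afterward. Applying the first step with input $x_{\tau_{T-1}}+s\,v_\theta(\cdot)$, which is finite for every realization, gives $\mathcal{H}(x_{\tau_T})=y$. The same holds when the sampler is started at an intermediate time, as in $\mathrm{Sample}_\theta(\tilde{x}_\tau\,|\,y,\tau)$, because the final step is still the one executed last.

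The only real obstacle is a subtlety in the intermediate steps, not in the terminal one. In the loop, step (4) injects noise after the projection, so the intermediate states $x_{\tau_{t+1}}$ generally do \emph{not} lie in $\mathcal{C}_y$ on the observed coordinates. The remark in Section~\ref{sec:method:implementation} that every $x_{\tau_t}$ is in $\mathcal{C}_y$ is therefore not literally true. I would avoid relying on any invariant of the loop and argue solely from the final noise-free projection, which is exactly why the algorithm omits noise in its last step.

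Finally, I would state the consequence that this lemma feeds into Proposition~\ref{thm:convergence}. Since $\tilde{\pi}_1^\infty$ is the terminal law of such a sampler, $\tilde{\pi}_1^\infty(\mathcal{C}_y\,|\,y)=1$. This gives the first half of Eq.~\eqref{eq:prop42_conc}.
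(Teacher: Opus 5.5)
Your proof is correct, and it takes a different route from the paper's. The paper argues by induction over the sampling steps. Since $\Pi_{\mathcal{C}_y}$ overwrites the observed coordinates at every step, it claims that every intermediate state lies in $\mathcal{C}_y$, and hence so does the terminal one. You instead check directly that $\mathcal{H}(\Pi_{\mathcal{C}_y}(x))=y$ for every $x$, and then use only the last operation of Algorithm~\ref{alg:sample-theta}: the final projection, after which no noise is added.

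Your route is the more careful one. As written, step (4) adds $\sqrt{\varepsilon s}\,z_t$ with $z_t\sim\mathcal{N}(0,I)$ \emph{after} the projection, and so does the update in Section~\ref{sec:method:implementation}. The paper's inductive invariant therefore fails on the observed coordinates of the intermediate states, so its induction does not go through as stated. The terminal claim still holds, but only for the reason you give.

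The paper's induction would give a stronger, trajectory-wide statement, matching its remark that the network is always fed projected states. That statement is only true for a modified sampler, one that masks the noise on observed coordinates or projects after injecting it. Your one-step argument needs no induction and does not depend on how the loop interleaves noise and projection. It uses only that nothing follows the final $\Pi_{\mathcal{C}_y}$, and it applies unchanged when the sampler starts at an intermediate time $\tau$.
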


\begin{proof}
At each discretized step, the projection $\Pi_{\mathcal{C}_y}$ overwrites the observed coordinates so that the state
lies in $\mathcal{C}_y=\{x:\mathcal{H}(x)=y\}$. By induction over the sampling steps, all intermediate states,
and in particular the terminal state, belong to $\mathcal{C}_y$. Hence $\mathcal{H}(x_1)=y$.
\end{proof}

\vspace{0.75em}

\begin{lemma}[Existence of an inner-loop minimizer]\label{lem:innerloop_argmin}
Fix $y$ and a reference law $\tilde{\pi}(\cdot\,|\,y)$.
Assume $J_{\tilde{\pi}}:\Theta\!\to\!\mathbb{R}$ is lower semi-continuous and bounded below, where
\[
J_{\tilde{\pi}}(\theta) := \mathbb{E}\big[\|\mathcal{R}_h(\hat{x}_1(\theta))\|\big],
\qquad
\hat{x}_1(\theta)=\mathrm{Sample}_{\theta}(\tilde{x}_\tau\,|\,y,\tau).
\]
Assume moreover that there exists $\theta_0\in\Theta$ such that the sublevel set
\[
\mathcal{S} := \{\theta\in\Theta:\ J_{\tilde{\pi}}(\theta)\le J_{\tilde{\pi}}(\theta_0)\}
\]
is nonempty and compact.
Then $\mathrm{argmin}_{\theta\in\Theta} J_{\tilde{\pi}}(\theta)$ is nonempty.
\end{lemma}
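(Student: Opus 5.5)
This is the direct method of the calculus of variations, with lower semi-continuity replacing continuity. I will show that $J_{\tilde{\pi}}$ attains its infimum on the compact sublevel set $\mathcal{S}$, and then check that this minimizer is also a global minimizer over $\Theta$.

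\emph{Step 1: reduce to $\mathcal{S}$.} Set $m:=\inf_{\theta\in\Theta}J_{\tilde{\pi}}(\theta)$. This is finite because $J_{\tilde{\pi}}$ is bounded below. Every $\theta\notin\mathcal{S}$ satisfies $J_{\tilde{\pi}}(\theta)>J_{\tilde{\pi}}(\theta_0)\ge \inf_{\mathcal{S}}J_{\tilde{\pi}}$, using $\theta_0\in\mathcal{S}$. Hence $m=\inf_{\theta\in\mathcal{S}}J_{\tilde{\pi}}(\theta)$, and any minimizer over $\mathcal{S}$ is a minimizer over $\Theta$.

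\emph{Step 2: minimizing sequence.} Choose $\theta_k\in\mathcal{S}$ with $J_{\tilde{\pi}}(\theta_k)\downarrow m$. Since $\mathcal{S}$ is compact, pass to a convergent subnet or subsequence $\theta_{k_j}\to\bar\theta\in\mathcal{S}$. A subsequence suffices when $\Theta$ is a subset of $\mathbb{R}^p$ or any metric space, which is the case for network parameters. In the general topological case I would avoid sequences and use the finite-intersection property instead: the sets $\{\theta\in\mathcal{S}:J_{\tilde{\pi}}(\theta)\le m+1/k\}$ are closed by lower semi-continuity, nonempty, and nested in the compact $\mathcal{S}$, so their intersection is nonempty. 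Any point of that intersection attains $m$.

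\emph{Step 3: lower semi-continuity.} In the sequential version, $J_{\tilde{\pi}}(\bar\theta)\le\liminf_j J_{\tilde{\pi}}(\theta_{k_j})=m$. Since $m$ is the infimum, $J_{\tilde{\pi}}(\bar\theta)=m$, so $\bar\theta\in\mathrm{argmin}$.

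The only delicate point is making sure the topology in which $\mathcal{S}$ is compact is the same one in which $J_{\tilde{\pi}}$ is lower semi-continuous. The closed-sublevel-set argument in Step 2 handles this cleanly without any metrizability assumption, so I would present that version as the main proof.
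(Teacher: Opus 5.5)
Your proof is correct. Its sequential form (Steps 2--3) is the paper's argument: take a minimizing sequence, extract a convergent subsequence in the compact sublevel set $\mathcal{S}$, and conclude by lower semi-continuity. Two differences are worth noting.

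First, the paper takes its minimizing sequence in all of $\Theta$. It then asserts the sequence eventually lies in $\mathcal{S}$ because $J_{\tilde{\pi}}(\theta_0)\ge m$. That inference actually needs $J_{\tilde{\pi}}(\theta_0)>m$. If $J_{\tilde{\pi}}(\theta_0)=m$, then $\mathcal{S}$ is exactly the argmin set, and a sequence approaching $m$ from above never enters it. Nothing breaks, since $\theta_0$ is then itself a minimizer, but the case is skipped. Your Step 1 identity $m=\inf_{\mathcal{S}}J_{\tilde{\pi}}$ lets you pick the sequence inside $\mathcal{S}$ from the start and avoids this case split.

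Second, your preferred finite-intersection version drops the paper's implicit use of sequential compactness, so it holds in any topological space. The price is that it needs lower semi-continuity in the topological sense, i.e.\ closed sublevel sets. The paper defines lower semi-continuity sequentially, which in general only gives \emph{sequentially} closed sublevel sets. Your remark about matching topologies addresses a related point but not this one. For $\Theta\subseteq\mathbb{R}^p$ the two notions coincide, so both proofs are complete in the setting the paper cares about.
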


\vspace{1em}

\begin{proof}
We first restate the assumptions in a form used below.

\emph{Lower semi-continuity:} for every $\theta\in\Theta$ and every sequence $\{\theta_k\}\subset\Theta$
with $\theta_k\to\theta$, we have
\begin{equation}\label{eq:lsc_def}
J_{\tilde{\pi}}(\theta)\le \liminf_{k\to\infty} J_{\tilde{\pi}}(\theta_k).
\end{equation}

\vspace{1.0em}

\emph{Bounded below:} there exists $L\!\in\!\mathbb{R}$ such that
\begin{equation}\label{eq:bb_def}
J_{\tilde{\pi}}(\theta)\ge L
\qquad\text{for all }\theta\in\Theta.
\end{equation}
Hence $m:=\inf_{\theta\in\Theta}J_{\tilde{\pi}}(\theta)$ is finite.

\vspace{1.0em}

By definition of $m$, there exists a sequence $\{\theta_n\}\subset\Theta$ such that
\begin{equation}\label{eq:min_seq}
J_{\tilde{\pi}}(\theta_n)\to m \qquad\text{as }n\to\infty.
\end{equation}
Since $J_{\tilde{\pi}}(\theta_0)\ge m$, the convergence \eqref{eq:min_seq} implies that there exists $N\in\mathbb{N}$
such that for all $n\ge N$,
\[
J_{\tilde{\pi}}(\theta_n)\le J_{\tilde{\pi}}(\theta_0),
\]
hence $\theta_n\in\mathcal{S}$ for all $n\ge N$ by definition of $\mathcal{S}$.

\vspace{0.5em}

Because $\mathcal{S}$ is compact, the tail sequence $\{\theta_n\}_{n\ge N}\subset\mathcal{S}$ admits a convergent
subsequence: there exist indices $n_k\ge N$ and $\theta^\star\in\mathcal{S}$ such that
\[
\theta_{n_k}\to \theta^\star \qquad\text{as }k\to\infty.
\]
Applying lower semi-continuity \eqref{eq:lsc_def} to the convergent subsequence gives
\[
J_{\tilde{\pi}}(\theta^\star)\le \liminf_{k\to\infty} J_{\tilde{\pi}}(\theta_{n_k}).
\]
Since $J_{\tilde{\pi}}(\theta_n)\to m$ along the full sequence, the subsequence also satisfies
$J_{\tilde{\pi}}(\theta_{n_k})\to m$, hence $\liminf_{k\to\infty}J_{\tilde{\pi}}(\theta_{n_k})=m$.
Therefore,
\[
J_{\tilde{\pi}}(\theta^\star)\le m.
\]
On the other hand, by definition of $m$ as the infimum over $\Theta$, we have $m\le J_{\tilde{\pi}}(\theta^\star)$.
Combining the two inequalities yields $J_{\tilde{\pi}}(\theta^\star)=m$.

Thus $\theta^\star$ attains the minimum of $J_{\tilde{\pi}}$ over $\Theta$, i.e.,
$\theta^\star\in \mathrm{argmin}_{\theta\in\Theta} J_{\tilde{\pi}}(\theta)$, proving that
$\mathrm{argmin}_{\theta\in\Theta} J_{\tilde{\pi}}(\theta)\neq\emptyset$.
\end{proof}

\vspace{0.75em}

\paragraph{Proposition 4.2} (Self-consistency at a fixed point)\textbf{.}
Consider the sequence $\{\tilde{\pi}_1^{(k)}\}_{k\ge 0}$ generated by Algorithm~\ref{alg:PICSB}. 
Assume $\mathcal{R}_h$ is Lipschitz on the relevant support and the model class for $v_\theta$ is sufficiently expressive. Assume further that each refresh round performs exact inner-loop optimization. 
If $\tilde{\pi}_1^{(k)}\!\equiv\!\tilde{\pi}_1^{(k+1)}\!\equiv\!\tilde{\pi}_1^\infty$, then samples $\tilde{x}_1\!\sim\!\tilde{\pi}_1^\infty(\cdot\,|\,y)$ satisfy 
\begin{equation}\label{eq:prop42_conc} 
\mathcal{H}(\tilde{x}_1)\!=\!y \qquad\text{and}\qquad\mathbb{E}\Big[\|\mathcal{R}_h(\tilde{x}_1)\|\Big] \!=\! \inf_{\theta}\mathbb{E}\Big[\|\mathcal{R}_h(\hat{x}_1)\|\Big],
\end{equation} 
where $\hat{x}_1\!=\!\mathrm{Sample}_{\theta}(\tilde{x}_\tau\,|\,y,\tau)$ is defined using the fixed reference $\tilde{\pi}_1^\infty(\cdot\,|\,y)$ as in Algorithm~\ref{alg:PICSB}, with expectation over the sampling randomness.

\begin{proof}$\\$
At refresh round $k+1$, the surrogate terminal law $\tilde{\pi}_1^{(k+1)}(\cdot\,|\,y)$ is, by definition, the distribution of $\tilde{x}_1=\mathrm{Sample}_{\tilde{\theta}}(x_0\,|\,y,0)$ under hard conditioning (Algorithm~\ref{alg:PICSB}). 
Lemma~\ref{lem:hard_feasibility_fp} therefore implies $\mathcal{H}(\tilde{x}_1)=y$ for $\tilde{x}_1\sim\tilde{\pi}_1^{(k+1)}(\cdot\,|\,y)$.
If $\tilde{\pi}_1^{(k+1)}\equiv\tilde{\pi}_1^\infty$, this gives the first claim in~\eqref{eq:prop42_conc}.

\vspace{1.0em}

Fix the reference law $\tilde{\pi}_1^\infty(\cdot\,|\,y)$ and define the fixed-reference objective
\[
J_\infty(\theta)\;:=\;J_{\tilde{\pi}_1^\infty}(\theta)
\;=\;
\mathbb{E}\Big[\|\mathcal{R}_h(\hat{x}_1)\|\Big],
\qquad
\hat{x}_1=\mathrm{Sample}_{\theta}(\tilde{x}_\tau\,|\,y,\tau),
\]
where the expectation is taken over all randomness in the sampling procedure under the fixed reference
$\tilde{x}_1\sim\tilde{\pi}_1^\infty(\cdot\,|\,y)$.

By Lemma~\ref{lem:innerloop_argmin} (applied with $\tilde{\pi}=\tilde{\pi}_1^\infty$), the set
$\mathrm{argmin}_{\theta\in\Theta}J_\infty(\theta)$ is nonempty; let $\theta^\star\in\mathrm{argmin}_{\theta\in\Theta}J_\infty(\theta)$.
This ensures that, for the fixed reference law $\tilde{\pi}_1^\infty(\cdot\,|\,y)$, the refresh step of Algorithm~\ref{alg:PICSB} is well-defined and induces a deterministic update of the surrogate terminal law.
A fixed point corresponds to a reference law that is invariant under this update.

Now suppose we are at a fixed point $\tilde{\pi}_1^{(k)}\equiv\tilde{\pi}_1^{(k+1)}\equiv\tilde{\pi}_1^\infty$.
Then the round-$k$ objective coincides with $J_\infty$ by construction of Algorithm~\ref{alg:PICSB}.
By the exact inner-loop optimization assumption, the update selects a global minimizer of the round-$k$ objective,
hence we may take the refreshed parameters $\theta^{(k+1)}$ to satisfy
\[
\theta^{(k+1)}\in \mathrm{argmin}_{\theta\in\Theta}J_\infty(\theta).
\]
Therefore,
\[
\mathbb{E}\Big[\|\mathcal{R}_h(\tilde{x}_1)\|\Big]
\;=\;
J_\infty(\theta^{(k+1)})
\;=\;
\inf_{\theta\in\Theta}J_\infty(\theta)
\;=\;
\inf_{\theta\in\Theta}\mathbb{E}\Big[\|\mathcal{R}_h(\hat{x}_1)\|\Big],
\]
which is exactly the second claim in~\eqref{eq:prop42_conc}.
\end{proof}

\clearpage
\section{Supplementary for Section~\ref{sec:ex}}\label{app:sec:ex}
\subsection{Implementation Details}
PICSB parameterizes the velocity network $v_\theta(x,\mathrm{cond})$ using an AdaIN-conditioned U-Net whose overall architecture is based on \citet{bortoli2024schrodinger}.
The network takes as input the current state $x_{\tau_{t}}$ and a conditioning vector $\mathrm{cond}$.
The conditioning vector $\mathrm{cond}=\mathrm{MLP}(\tau_{t})$ is generated by a time-embedding network that encodes the diffusion time.
Conditioning is injected through Adaptive Instance Normalization (AdaIN(\citep{huang2017arbitrary})) in the encoder blocks, enabling the drift to vary with diffusion time while keeping the model fully convolutional over the spatial grid.

\paragraph{Input representation.}
Depending on the benchmark, the input field $x_{\tau_t}$ is represented using different tensor encodings.
For the Burgers' equation and Darcy flow, the field is represented as a single channel tensor of shape $x_{\tau_t}\in\mathbb{R}^{B\times 1\times H\times W}$.
For the Kolmogorov flow, the temporal dimension is encoded along the channel axis, resulting in a tensor representation $x_{\tau_t}\in\mathbb{R}^{B\times C\times H\times W}$, where $C$ denotes the number of time frames and is set to $40$ in our experiments.
Sparse observations are represented by a boolean mask $\mathbf{M}$ with the same shape as $x_{\tau_t}$, and the corresponding observed values are denoted by $y$.
Hard conditioning is enforced by overwriting the observed entries at every update according to
\[
x_{\tau_t} \leftarrow \mathbf{M}\odot y + (1-\mathbf{M})\odot x_{\tau_t},
\]
which ensures that the observed locations remain fixed throughout both training and sampling.

\paragraph{AdaIN conditioning.}
Given an activation tensor $x_{\tau_t}\in\mathbb{R}^{B\times C\times H\times W}$ and a conditioning vector $\mathrm{cond}\in\mathbb{R}^{B\times d_{\mathrm{cond}}}$ ($d_{\mathrm{cond}}$ corresponds to the time embedding dimension), AdaIN injects the conditioning information through channel-wise affine transformations.
Specifically, the conditioning vector is linearly projected to produce a pair of scale and shift parameters,
\[
(\gamma,\beta)=\mathrm{Linear}(\mathrm{cond})\in\mathbb{R}^{B\times (2C)}, \qquad
\gamma,\beta\in\mathbb{R}^{B\times C},
\]
which are then reshaped to $(B,C,1,1)$ and applied to the normalized activations as
\[
\mathrm{AdaIN}(x_{\tau_t},\mathrm{cond}) = \gamma\odot \mathrm{IN}(x_{\tau_t}) + \beta.
\]
Here, $\mathrm{IN}(\cdot)$ denotes instance normalization, which normalizes each channel independently using per-sample spatial statistics.
By modulating feature statistics in this manner, AdaIN enables the velocity field to vary smoothly with diffusion time while preserving a fully convolutional structure over the spatial grid.

\paragraph{Network architecture and training hyperparameters.}
Table~\ref{tab:picsb_hparam_table} summarizes the key architectural and training hyperparameters used for PICSB across all PDE benchmarks.
All benchmarks share the same AdaIN-conditioned U-Net backbone; the only benchmark-specific differences are the input/output channel dimensionality, the learning rate used for optimization, and the gradient clipping threshold.
The time-embedding network is implemented as a two-layer MLP with a hidden dimension of $16$.

\vspace{-0.3em}
All models are trained using a fixed solver discretization with $10$ bridge-time steps and a constant noise scale $\epsilon=10^{-2}$, and the same settings are used during inference-time sampling to ensure consistency between training and inference dynamics.
A refresh period is set to $100$, meaning that the corresponding refresh operation is performed every $100$ training iterations, resulting in a total of $100$ refreshes over $10{,}000$ training iterations.
To stabilize optimization under different PDE scalings, gradient clipping is applied during training: for the Burgers and Kolmogorov benchmarks, gradients of both the velocity network and the time-embedding network are clipped with a maximum norm of $1$, whereas for the Darcy benchmark—which exhibits significantly different scaling due to the elliptic operator and permeability coefficient—a much smaller clipping threshold of $10^{-5}$ is used.
These settings were found to be crucial for stable training across all benchmarks.
\begin{table}[!h]
  \captionsetup{skip=0.05in}
  \caption{PICSB architectural and training hyperparameters used for each PDE benchmark.
  All benchmarks share the same network architecture and training schedule, except for the input/output channel dimensionality and learning rate.
  The table summarizes the key structural and optimization-related hyperparameters that define the PICSB model.}
  \label{tab:picsb_hparam_table}
  \centering
  \begin{small}
  \begin{sc}
  \resizebox{\columnwidth}{!}{%
    \begin{tabular}{lccccccccccc}
      \toprule
      PDE &
      Enc. ch. &
      Dec. ch. &
      \#Enc. &
      \#Dec. &
      Kernel &
      $d_{\mathrm{cond}}$ &
      Bridge-time steps &
      $\epsilon$ &
      LR &
      Train iters. \\
      \midrule
      Burgers
      & $128,256$
      & $256,128$
      & 2
      & 2
      & $3\times3$
      & 8
      & 10
      & $10^{-2}$
      & $10^{-3}$
      & 100k \\
      \midrule
      Darcy
      & $128,256$
      & $256,128$
      & 2
      & 2
      & $3\times3$
      & 8
      & 10
      & $10^{-2}$
      & $10^{-6}$
      & 100k \\
      \midrule
      Kolmogorov
      & $128,256$
      & $256,128$
      & 2
      & 2
      & $3\times3$
      & 8
      & 10
      & $10^{-2}$
      & $10^{-3}$
      & 100k \\
      \bottomrule
    \end{tabular}%
  }
  \end{sc}
  \end{small}
\end{table}

\paragraph{PINNs.}
We implement the PINNs baseline as a coordinate-based MLP that parameterizes the unknown solution field as $u_\phi(\xi,\gamma)$ (or $u_\phi(\xi)$ for steady-state), and optimizes the network parameters independently for each test instance using Adam for $100$k steps (Table~\ref{tab:pinns_hparam_table}). The MLP consists of stacked fully connected layers with $\tanh$ activations and Xavier initialization, and its input dimensionality is benchmark-dependent: $(\xi, \gamma)$ for Burgers and Darcy, and $(\xi_1,\xi_2, \gamma)$ for Kolmogorov, while the output is always scalar-valued. 

\vspace{-0.3em}
At each optimization step, the loss is a weighted sum of an observation-fitting term and a physics residual term, $\lambda_{\mathrm{obs}}\mathcal{L}_{\mathrm{obs}}+\lambda_{\mathrm{phys}}\mathcal{L}_{\mathrm{phys}}$, where $\mathcal{L}_{\mathrm{phys}}$ is computed by evaluating the governing PDE residual at $N_{\mathrm{colloc}}$ collocation points via a dedicated physics-loss routine. For the observation term, we fit the network to the available sparse HF observations.

\vspace{-0.3em}
For Darcy, Dirichlet boundary conditions are explicitly enforced by sampling $N_{\mathrm{bc}}$ boundary coordinates on $\partial(0,1)^2$ and adding an extra boundary penalty weighted by $\lambda_{\mathrm{bc}}$. To stabilize optimization under Darcy’s scaling, we additionally apply gradient clipping with a maximum norm of $10^{-5}$ during each training step, whereas the other benchmarks do not require this clipping. After optimization, we recover the full-grid field by evaluating the trained coordinate MLP on a dense coordinate mesh.
\begin{table}[!h]
  \captionsetup{skip=0.05in}
  \caption{PINNs architectural and optimization hyperparameters used for each PDE benchmark.
  The table summarizes the key structural and optimization-related hyperparameters that define the PINNs baseline, with Darcy-specific stabilization settings reported in the last column.}
  \label{tab:pinns_hparam_table}
  \centering
  \begin{small}
  \resizebox{\columnwidth}{!}{%
    \begin{tabular}{lccccccccc}
      \toprule
      PDE &
      Hidden dim &
      Depth &
      $N_{\mathrm{colloc}}$ &
      $\lambda_{\mathrm{obs}}$ &
      $\lambda_{\mathrm{phys}}$ &
      LR &
      Epochs &
      Extra settings \\
      \midrule
      Burgers
      & 512
      & 6
      & 8192
      & 1.0
      & 1.0
      & $10^{-3}$
      & 100k
      & -- \\
      \midrule
      Darcy
      & 512
      & 6
      & 8192
      & 1.0
      & 1.0
      & $10^{-3}$
      & 100k
      & $\lambda_{\mathrm{bc}}{=}1,\; N_{\mathrm{bc}}{=}2048$, grad clip (1e-5) \\
      \midrule
      Kolmogorov
      & 256
      & 4
      & 8192
      & 1.0
      & 1.0
      & $10^{-3}$
      & 100k
      & -- \\
      \bottomrule
    \end{tabular}%
  }
  \end{small}
\end{table}

\paragraph{Guidance-based sampling (\textit{Guidance}).}\vspace{-1em}
We implement an inference-time guidance baseline adapted from PDE-guided diffusion pipelines \citep{huang2024diffusionpde}, in which conditional generation is achieved by augmenting an unconditional EDM sampler with gradient-based corrections derived from sparse HF observations and PDE residuals.
Since full-grid HF trajectories are unavailable for training in our multi-fidelity setting, the generative prior is pretrained solely on full-grid LF fields, and HF information is incorporated only at inference time through guidance.

\vspace{-0.3em}
Given an EDM noise schedule $(\sigma_{\min}, \sigma_{\max}, \rho)$ and a fixed number of sampling steps, each sampling step first performs a standard EDM update consisting of an Euler step followed by a second-order correction \citep{karras2022elucidating}, producing a proposal state $x_t$.
Guidance is then applied by differentiating scalar loss functions with respect to the current iterate and updating the state along the resulting gradients.

\vspace{-0.3em}
Across all benchmarks, both the observation loss and the physics loss are computed using $\ell_2$ norms of their respective residuals.
The observation term penalizes mismatches between the current field and the sparse HF measurements, while the physics term penalizes violations of the governing PDE evaluated on the current field.
During sampling, gradients of these losses with respect to $x_t$ are used to steer the trajectory toward observation-consistent and physically plausible solutions.
For the Burgers and Kolmogorov benchmarks, guidance is applied in a stage-wise manner.
During the early portion of the sampling trajectory, only observation guidance is used to rapidly align the generated samples with the sparse HF measurements.
After a 80\% of the total sampling steps, PDE guidance is activated, and the update combines a reduced observation step with an additional physics-based correction.
The corresponding weights $(\lambda_{\mathrm{obs}}, \lambda_{\mathrm{phys}})$ are reported in Table~\ref{tab:guidance_hparam_table}.

\vspace{-0.3em}
For Darcy, additional stabilization is required due to the markedly different scaling induced by the elliptic operator and the spatially varying permeability field.
In this case, boundary consistency is explicitly enforced during sampling by introducing a boundary residual term defined on a boundary mask.
Separate step sizes are used for observation, PDE, and boundary guidance, and the corresponding gradients are optionally normalized and clipped to improve robustness under varying state scales.
These Darcy-specific settings are also summarized in Table~\ref{tab:guidance_hparam_table}.
\begin{table}[!h]
  \captionsetup{skip=0.05in}
  \caption{Guidance-based sampling hyperparameters used for each PDE benchmark.
  All benchmarks use the same noise schedule parameters $(\sigma_{\min}, \sigma_{\max}, \rho)$ and the same number of sampling iterations.
  Observation and PDE guidance are applied in a stage-wise manner, while Darcy additionally employs boundary guidance with separate step sizes.}
  \label{tab:guidance_hparam_table}
  \centering
  \small
  \resizebox{0.8\columnwidth}{!}{%
    \begin{tabular}{lcccccccc}
      \toprule
      PDE &
      Sampling steps &
      $\sigma_{\min}$ &
      $\sigma_{\max}$ &
      $\rho$ &
      $\lambda_{\mathrm{obs}}$ &
      $\lambda_{\mathrm{phys}}$ &
      $\lambda_{\mathrm{bc}}$ \\
      \midrule
      Burgers
      & 2000
      & 0.002
      & 80
      & 7
      & 320
      & 100
      & -- \\
      \midrule
      Darcy
      & 2000
      & 0.002
      & 80
      & 7
      & $10^{-3}$
      & $10^{-1}$
      & $10^{-1}$ \\
      \midrule
      Kolmogorov
      & 2000
      & 0.002
      & 80
      & 7
      & $3.2\times10^{6}$
      & 1000
      & -- \\
      \bottomrule
    \end{tabular}%
  }
\end{table}

\subsection{Additional Experimental Results}

\subsubsection{Additional qualitative visualizations for the main experiments}
We present additional qualitative visualizations of PICSB for the main experiments.
Figure~\ref{fig:appendix_main_burgers} and Figure~\ref{fig:appendix_main_darcy} report reconstruction results for the Burgers and Darcy benchmarks, respectively, while Figures~\ref{fig:appendix_main_kolmo_frame1}--\ref{fig:appendix_main_kolmo_frame38} visualize the Kolmogorov benchmark at multiple representative temporal frames.
All figures report reconstruction results across all regimes (R1--R3) and all test samples (four instances), together with the corresponding HF reference solutions.
Reported relative errors are computed per sample and shown below each reconstructed field.
\begin{figure*}[!t]
    \centering
    \includegraphics[width=1\linewidth]{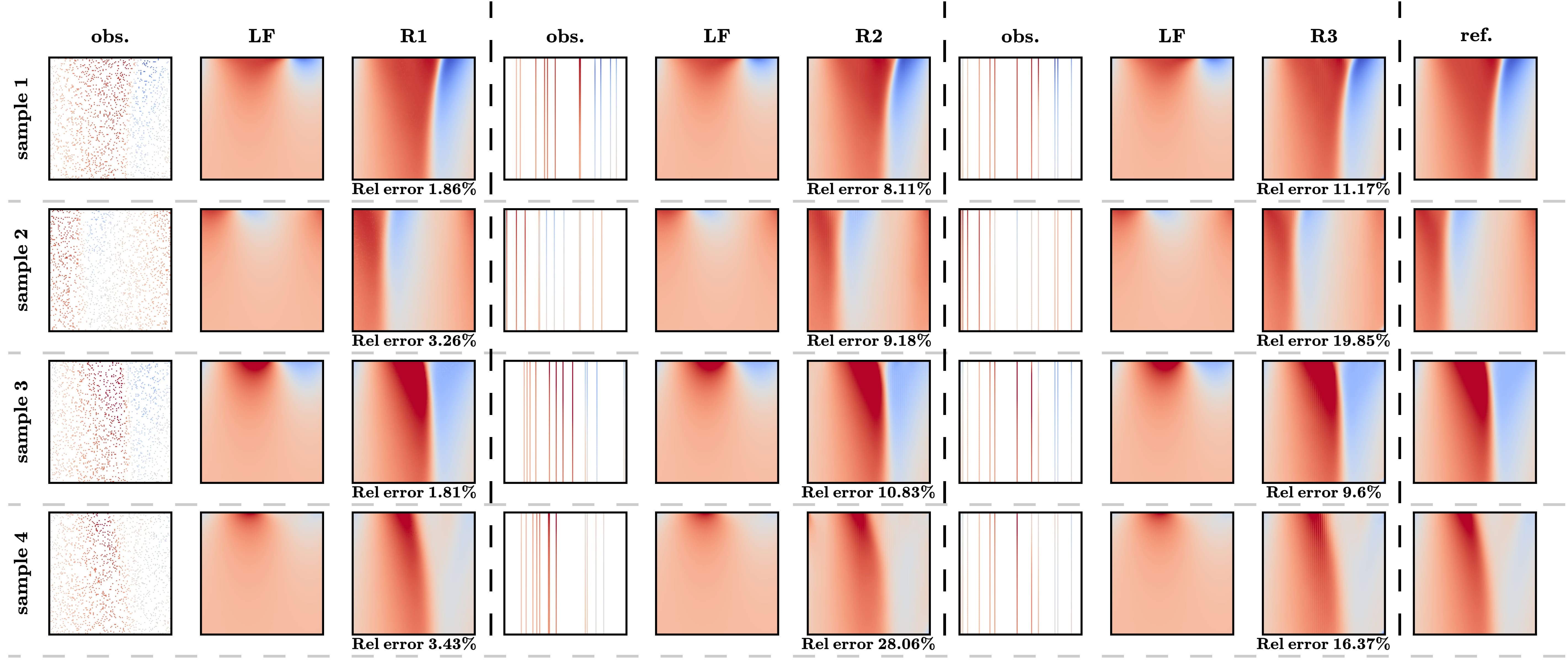}
    \caption{Additional visualization illustrating reconstruction performance for Burgers' equation.
    Each row corresponds to a different test sample.
    Columns show the sparse HF observations used for conditioning (obs.), the LF input, reconstructions obtained under different reconstruction regimes (R1--R3), and the HF reference solution (ref.).
    Relative reconstruction errors are reported below each reconstructed field.}
    \label{fig:appendix_main_burgers}
    \vspace{-0.8em}
\end{figure*}

\begin{figure*}[!t]
    \centering
    \includegraphics[width=0.85\linewidth]{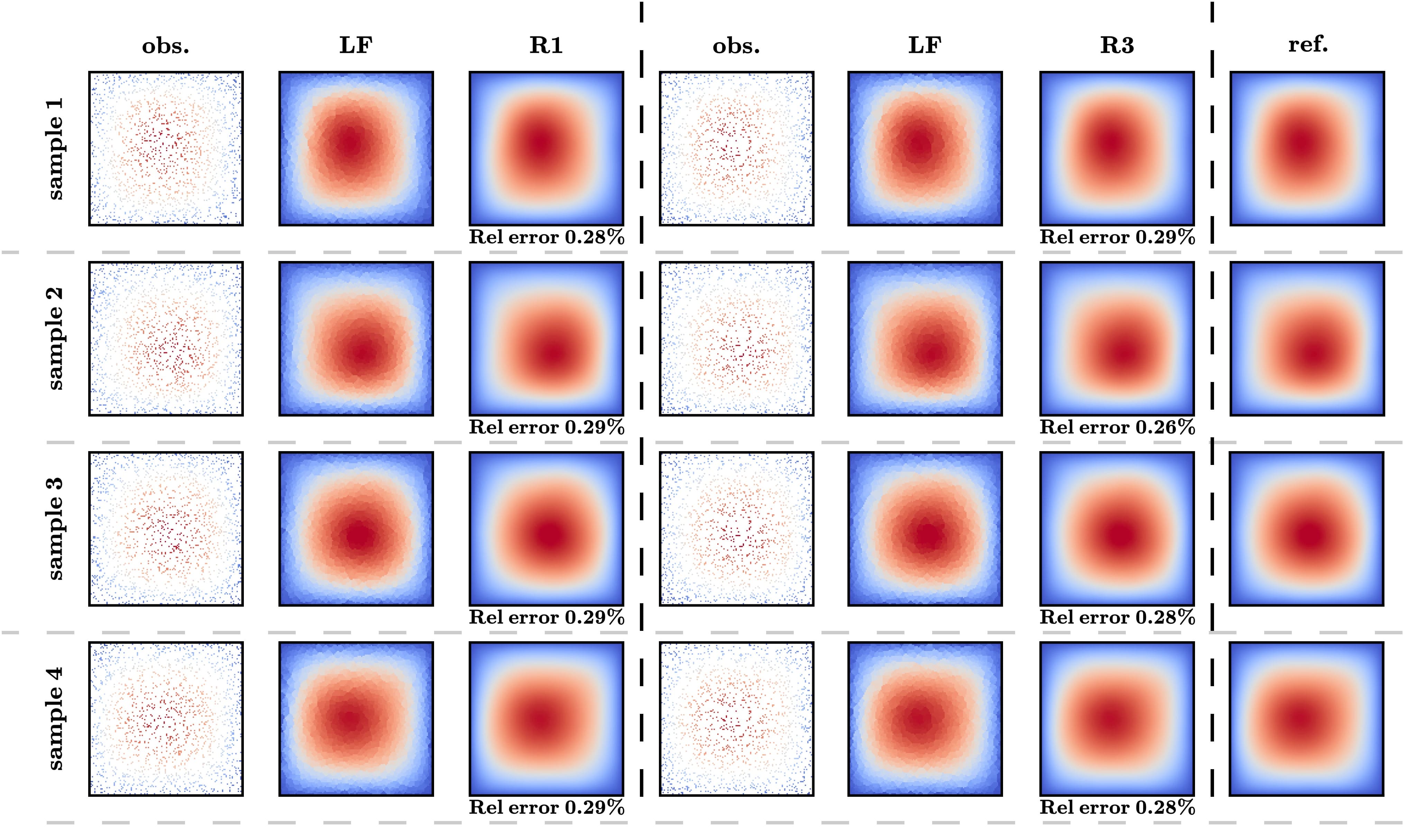}
    \caption{Additional visualization illustrating reconstruction performance for Darcy flow.
    Each row corresponds to a different test sample.
    Columns show the sparse HF observations used for conditioning (obs.), the LF input, reconstructions obtained under different reconstruction regimes (R1--R3), and the HF reference solution (ref.).
    Relative reconstruction errors are reported below each reconstructed field.}
    \label{fig:appendix_main_darcy}
    \vspace{-0.8em}
\end{figure*}

\begin{figure*}[!t]
    \centering
    \includegraphics[width=1\linewidth]{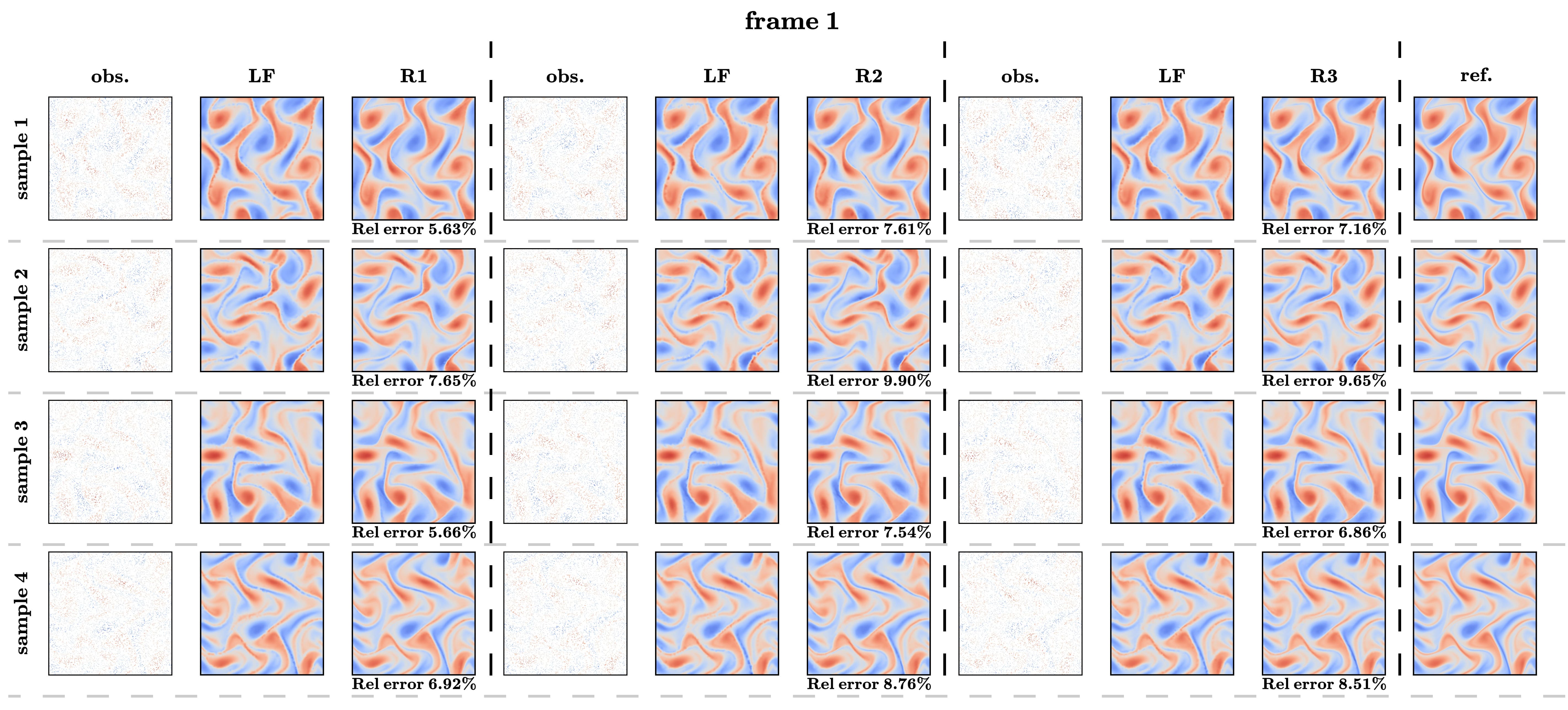}
    \caption{Additional visualization illustrating reconstruction performance for Kolmogorov flow at frame 1.
    Each row corresponds to a different test sample.
    Columns show the sparse HF observations used for conditioning (obs.), the LF input, reconstructions obtained under different reconstruction regimes (R1--R3), and the HF reference solution (ref.).
    Relative reconstruction errors are reported below each reconstructed field.}
    \label{fig:appendix_main_kolmo_frame1}
    \vspace{-0.8em}
\end{figure*}

\begin{figure*}[!t]
    \centering
    \includegraphics[width=1\linewidth]{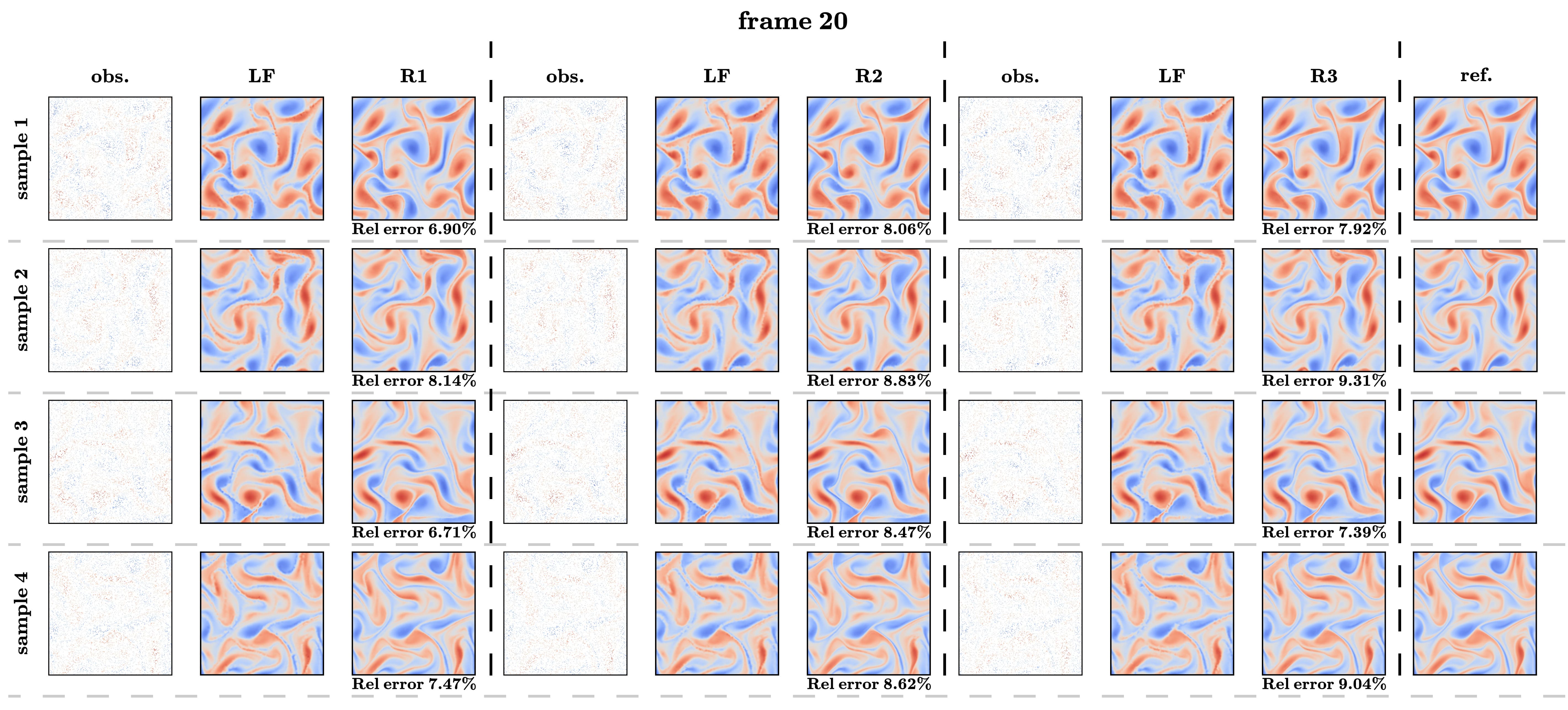}
    \caption{Additional visualization illustrating reconstruction performance for Kolmogorov flow at frame 20.
    Each row corresponds to a different test sample.
    Columns show the sparse HF observations used for conditioning (obs.), the LF input constructed from sparse observations via interpolation, reconstructions obtained under different reconstruction regimes (R1--R3), and the HF reference solution (ref.).
    Relative reconstruction errors are reported below each reconstructed field.}
    \label{fig:appendix_main_kolmo_frame20}
    \vspace{-0.8em}
\end{figure*}

\begin{figure*}[!t]
    \centering
    \includegraphics[width=1\linewidth]{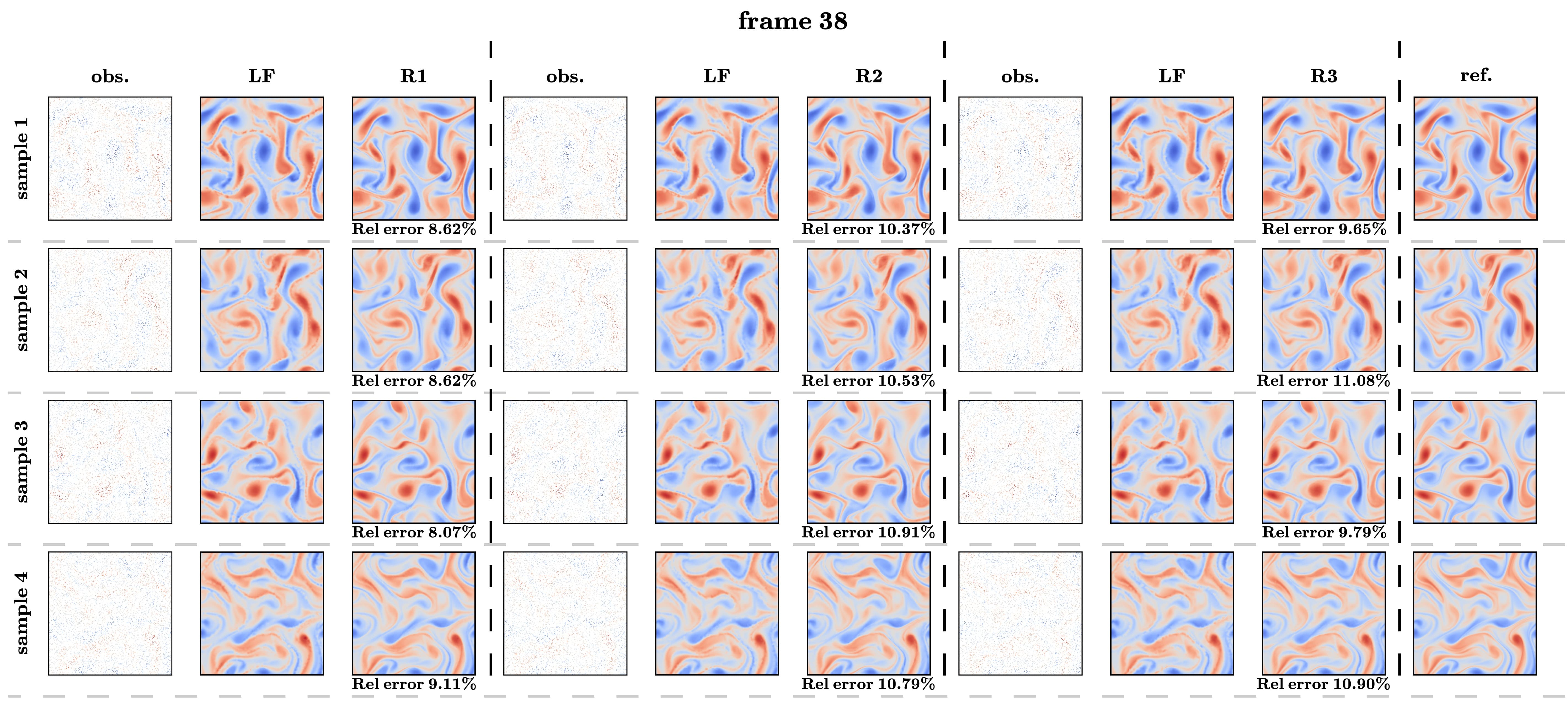}
    \caption{Additional visualization illustrating reconstruction performance for Kolmogorov flow at frame 38.
    Each row corresponds to a different test sample.
    Columns show the sparse HF observations used for conditioning (obs.), the LF input constructed from sparse observations via interpolation, reconstructions obtained under different reconstruction regimes (R1--R3), and the HF reference solution (ref.).
    Relative reconstruction errors are reported below each reconstructed field.}
    \label{fig:appendix_main_kolmo_frame38}
    \vspace{-0.8em}
\end{figure*}

\subsubsection{Inference dynamics comparison between PICSB and Guidance}
This section provides a comparison of inference dynamics between PICSB and the guidance-based diffusion baseline.
Figures~\ref{fig:vs_burgers}, \ref{fig:vs_darcy}, and \ref{fig:vs_kolmo_20} visualize the evolution of reconstructed fields for the Burgers, Darcy, and Kolmogorov (frame 20), respectively.

\vspace{-0.3em}
Across all benchmarks, the guidance-based sampling method relies on a long diffusion trajectory with hundreds to thousands of sampling steps before meaningful large-scale structures emerge.
During early and intermediate stages of sampling, the reconstructed fields remain dominated by noise, and coherent solution structures typically emerge only appear near the end of the diffusion process.

\vspace{-0.3em}
In contrast, PICSB directly transports the LF prior toward the observation-conditioned HF posterior through a Schr\"odinger bridge formulation.
As a result, physically consistent large-scale structures are recovered within a significantly smaller number of inference steps.
Intermediate states produced by PICSB remain stable and structured throughout the inference trajectory, demonstrating a substantial improvement in inference efficiency compared to diffusion-based guidance sampling.
\begin{figure*}[!t]
\centering
\includegraphics[width=1.0\linewidth]{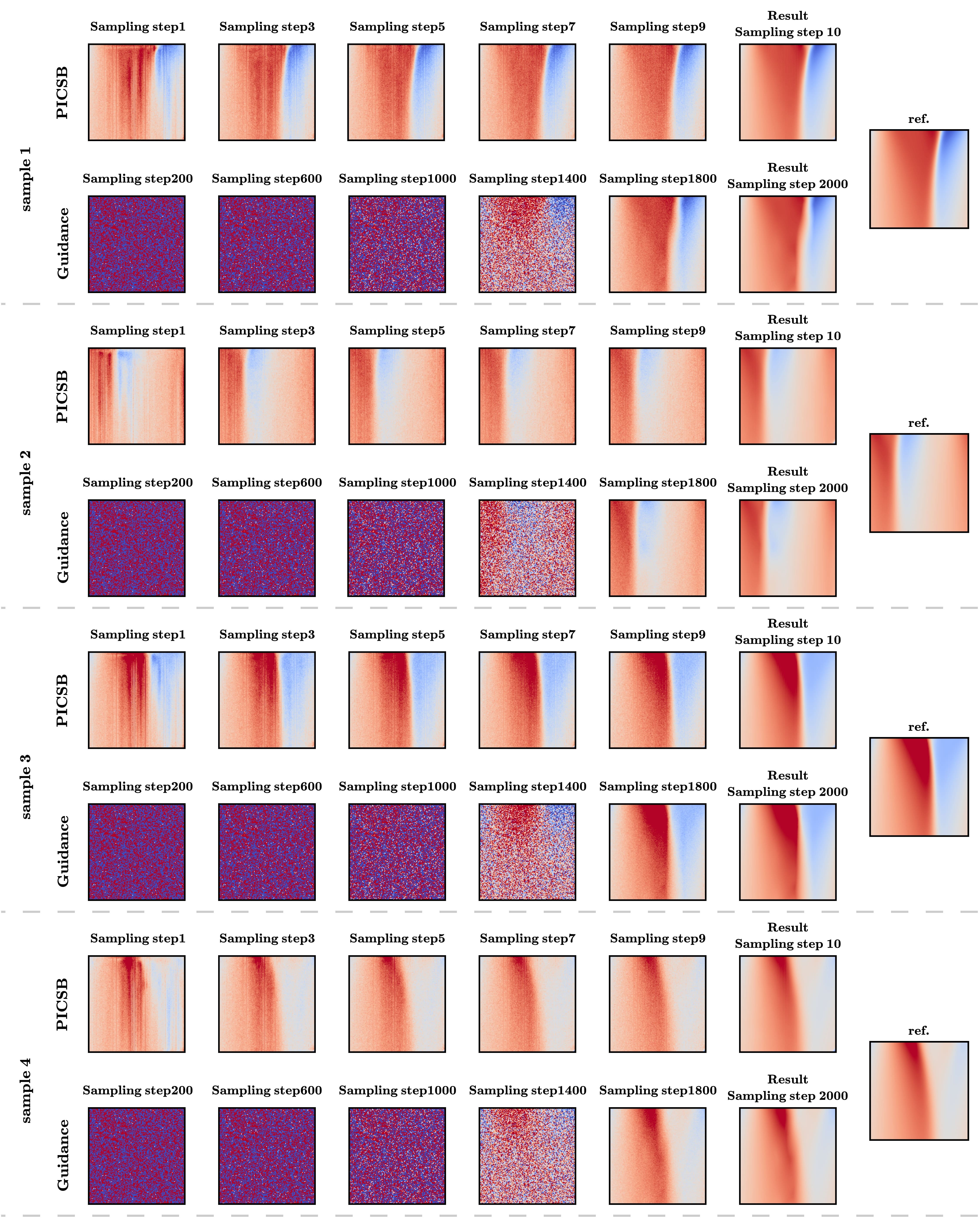}
\vspace{-1.2em}
\caption{Inference dynamics comparison between PICSB and guidance-based sampling for Burgers' equation.}
\label{fig:vs_burgers}
\end{figure*}

\begin{figure*}[!t]
\centering
\includegraphics[width=1.0\linewidth]{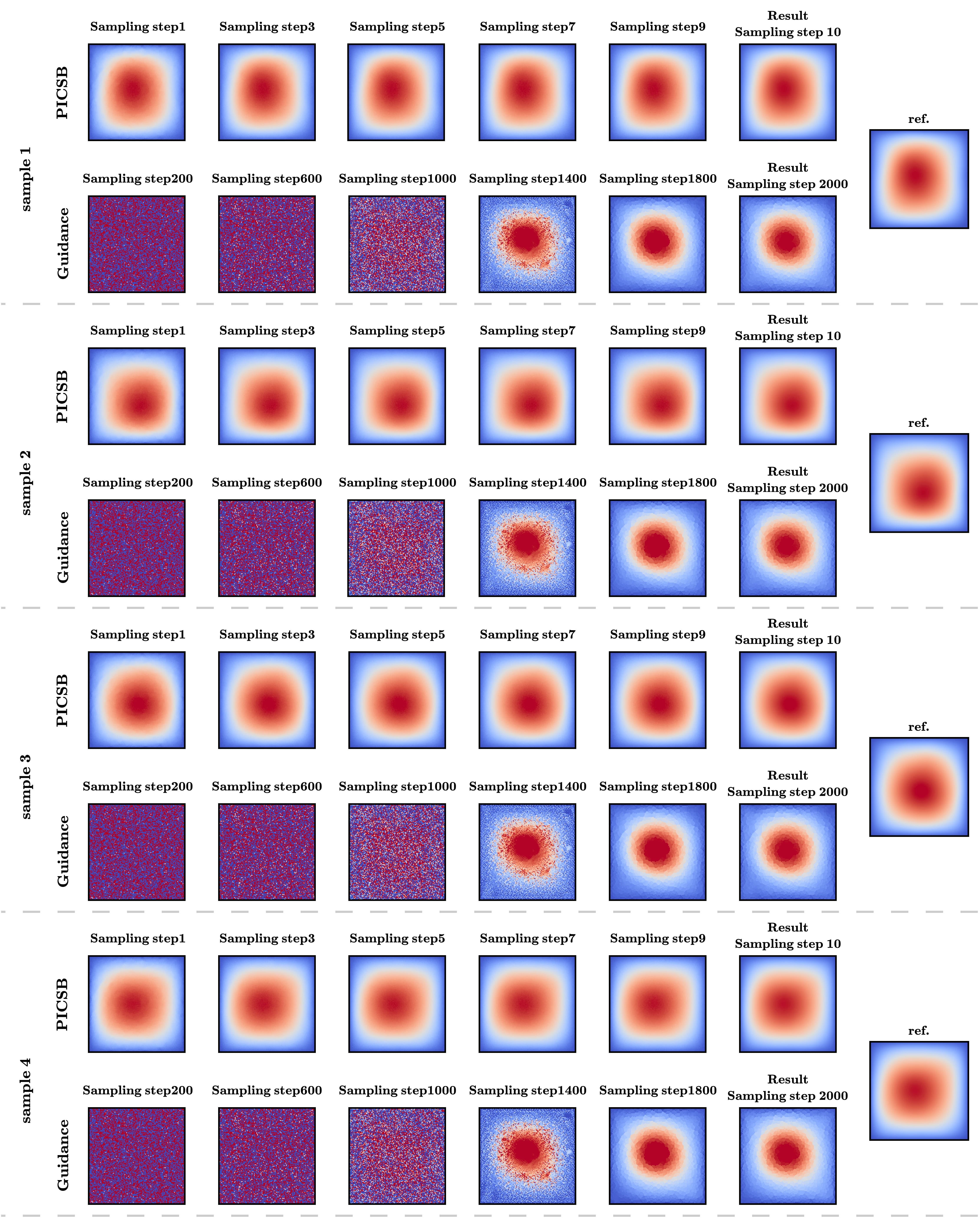}
\vspace{-1.2em}
\caption{Inference dynamics comparison between PICSB and guidance-based sampling for Darcy flow.}
\label{fig:vs_darcy}
\end{figure*}

\begin{figure*}[!t]
\centering
\includegraphics[width=1.0\linewidth]{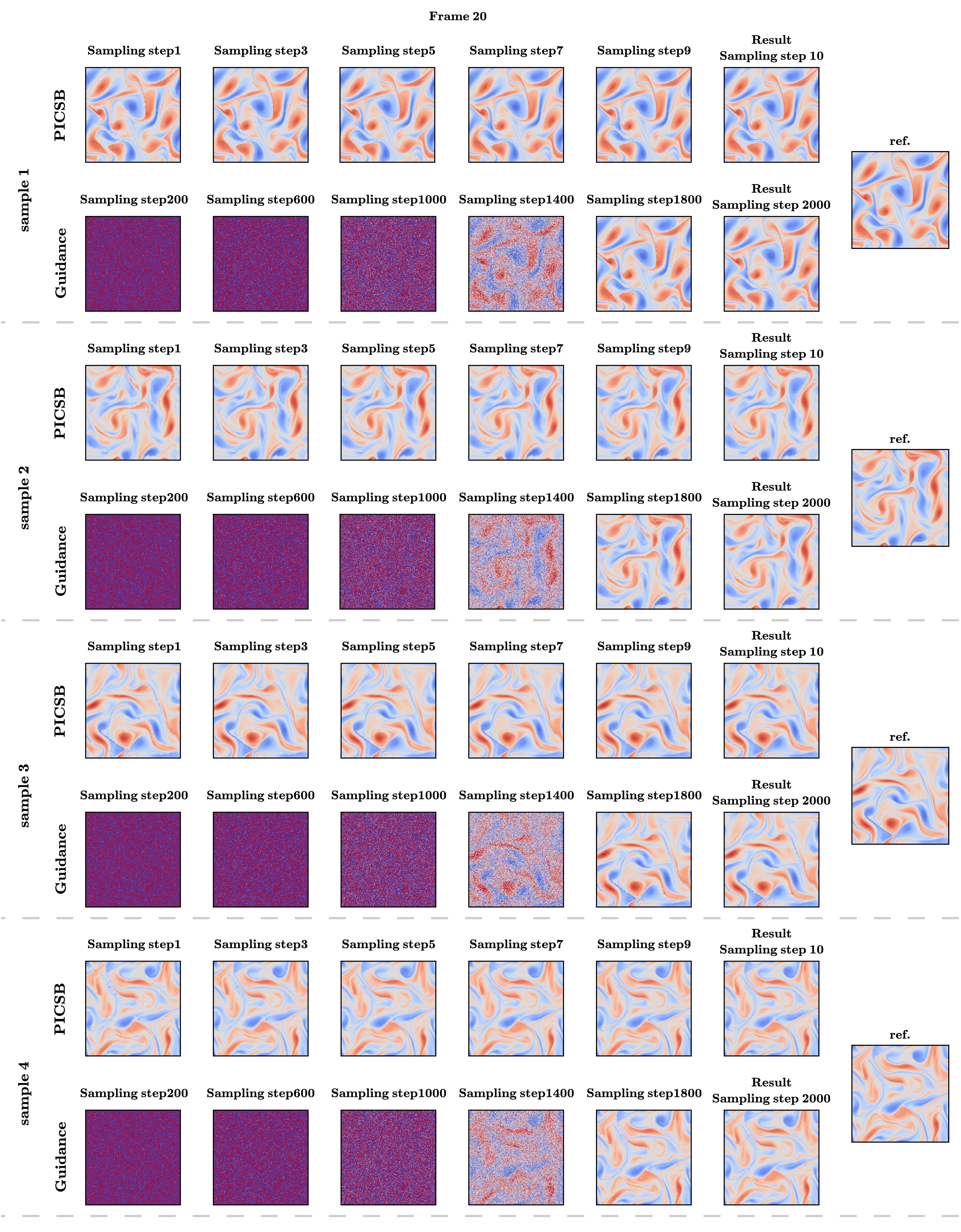}
\vspace{-1.2em}
\caption{Inference dynamics comparison between PICSB and guidance-based sampling for Kolmogorov flow at frame 20.}
\label{fig:vs_kolmo_20}
\end{figure*}

\subsubsection{Inference with reduced observation availability}\label{app:lessobs}
In this experiment, we investigate the robustness of PICSB to further reductions in observation availability at inference time, while keeping the LF prior and the trained model fixed.
All models are trained using the same LF--HF pairs described in Section~\ref{sec:ex} and are optimized with sparse HF observations covering $10\%$ of the spatial grid.

\vspace{-0.3em}
At inference time, instead of providing the full $10\%$ observation set used during training, we deliberately subsample this set and condition the model on only a fraction of the available measurements.
Specifically, we randomly retain either $50\%$ or $10\%$ of the original observation locations, corresponding to conditioning on $5\%$ and $1\%$ of the full grid, respectively.
This setup allows us to isolate the effect of reduced observational information during inference without changing the learned LF-to-HF transport mechanism.

\vspace{-0.3em}
Figure~\ref{fig:sameLF_less_obs} shows reconstruction results for the Burgers' equation across all observation regimes (R1--R3) and all test samples.
When only $1\%$ of observations are provided, reconstruction quality degrades noticeably across all regimes, with increased bias and loss of sharp transition structure.

\vspace{-0.3em}
When $5\%$ of observations are available, reconstruction quality is consistently better than in the $1\%$ setting, indicating that increased observation availability alleviates some of the degradation caused by extreme sparsity.
However, the reconstructed fields still exhibit discrepancies from the HF reference, and the overall reconstruction quality remains limited.
Moreover, this behavior is primarily observed for the Burgers equation; for more complex systems such as Darcy and Kolmogorov, inference with $5\%$ or fewer observations is largely unsuccessful without additional adaptation.

\vspace{-0.3em}
These observations suggest that, while PICSB can partially tolerate reduced observation availability at inference time, simply subsampling the observation set without modifying the trained model is insufficient for reliably handling more complex dynamics.
Motivated by this limitation, we further investigate whether finetuning can improve inference under severely reduced observation regimes in Section~\ref{app:appendix_finetuning}.
\begin{figure*}[!t]
\centering
\includegraphics[width=1.0\linewidth]{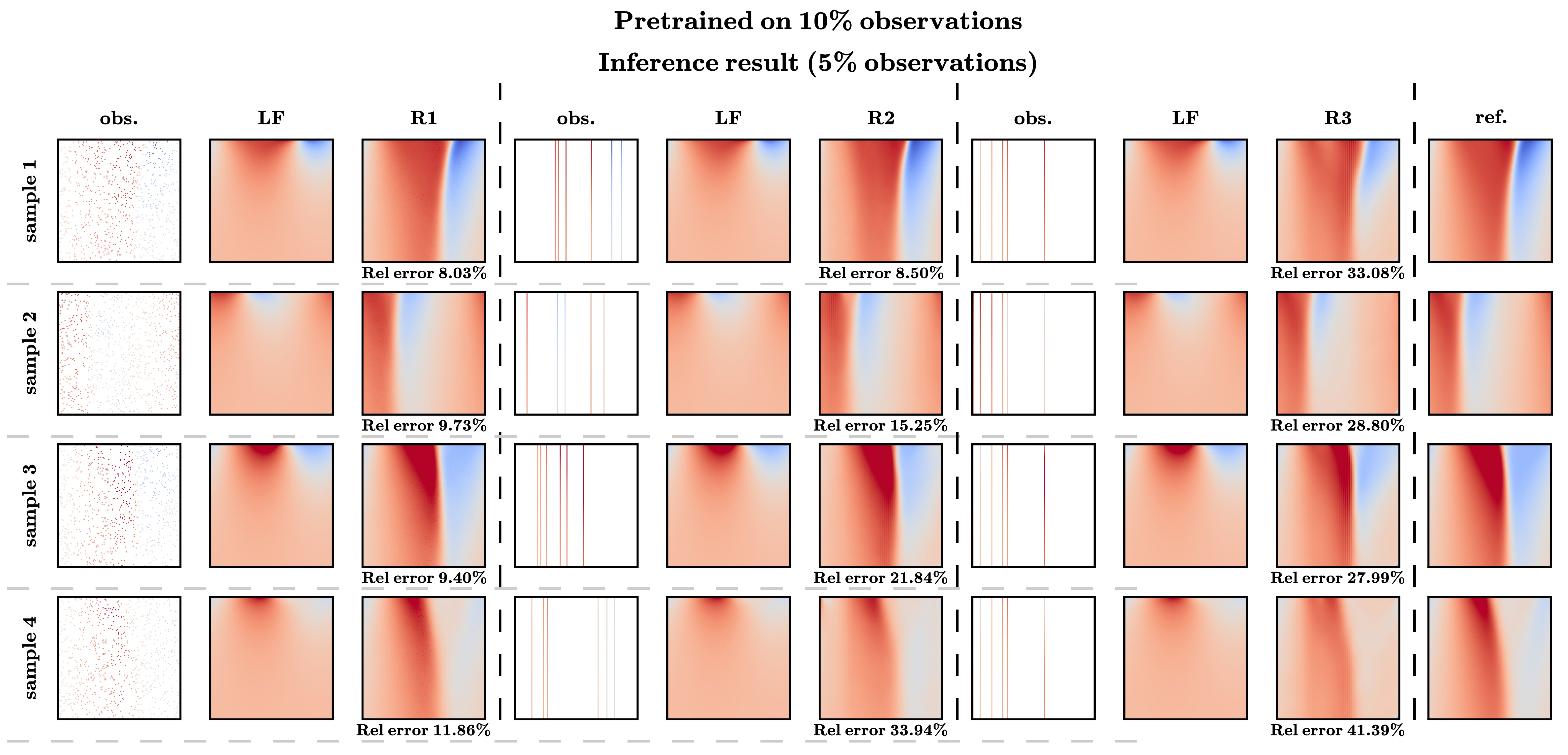}
\vspace{-1.2em}
\caption{Reconstruction results under $5\%$ sparse observations for the Burgers' equation.}

\label{fig:sameLF_less_obs}
\vspace{-0.8em}
\end{figure*}

\begin{figure*}[!t]
\centering
\includegraphics[width=1.0\linewidth]{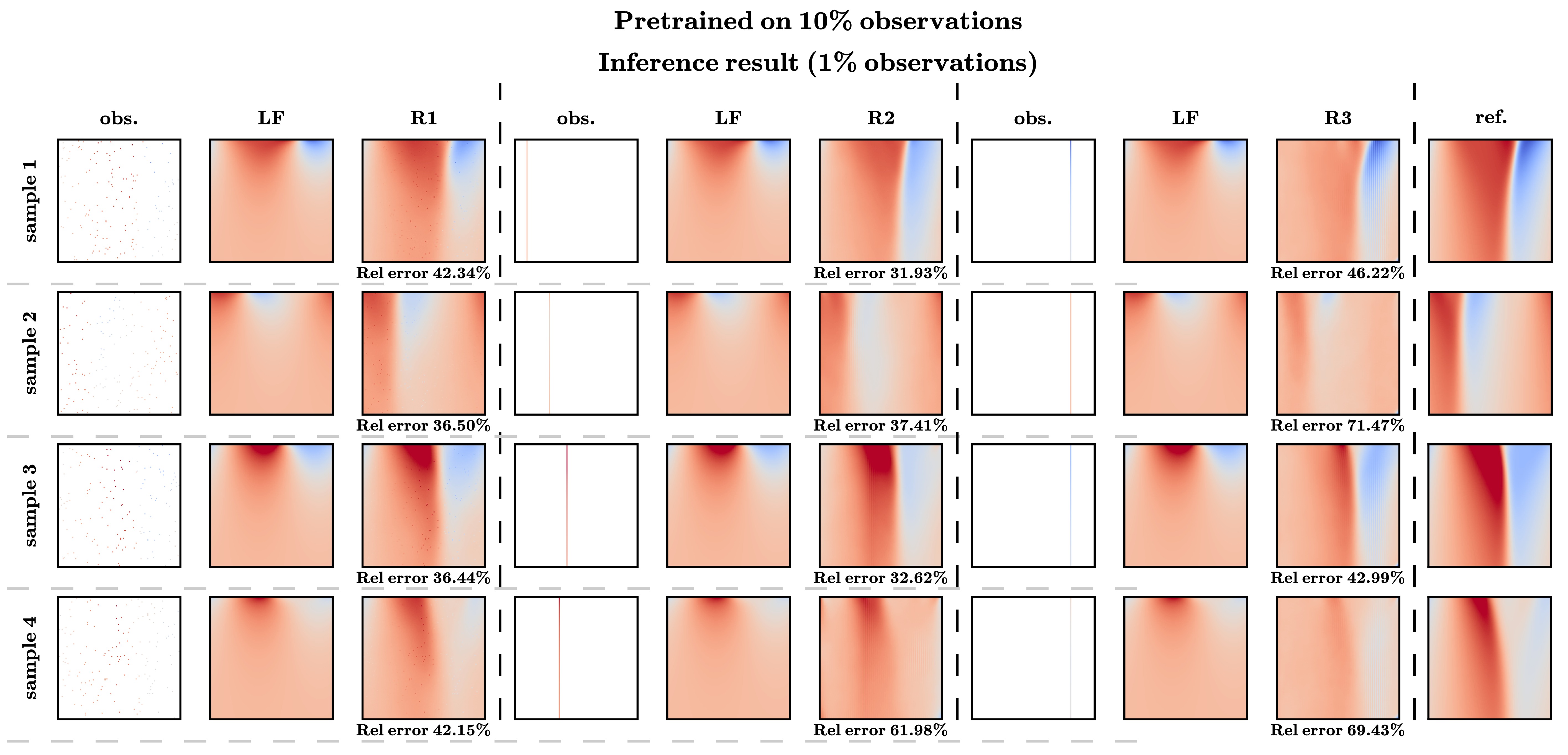}
\vspace{-1.2em}
\caption{Reconstruction results under $1\%$ sparse observations for the Burgers' equation.}
\label{fig:sameLF_less_obs}
\vspace{-0.8em}
\end{figure*}

\clearpage
\section{Supplementary Experiments}\label{app:sup_ex}
\subsection{Inference under Noisy Sparse Observations}
\label{app:inference_with_noise}
In this experiment, we compare \textbf{PICSB} with \textbf{guidance-based sampling} under inference-time noise corruption in sparse HF observations. For PICSB, we use the same pretrained models as in the main experiments (Section~\ref{sec:ex}), without any retraining or finetuning.
At inference time, Gaussian noise is added to the sparse HF observations as
\[
\epsilon \sim \mathcal{N}(0, \sigma^2), \quad \sigma = \alpha \cdot \mathrm{std}(\text{data}),
\]
where the noise level $\alpha$ is set to $1\%$, $3\%$, $5\%$, $10\%$, and $15\%$.

\vspace{-0.3em}
For Darcy and Kolmogorov, the noisy sparse HF observations are interpolated to construct the corresponding LF inputs using the same interpolation schemes as in the main experiments (nearest-neighbor for Darcy and bi-cubic for Kolmogorov).
These noisy LF inputs are then provided to PICSB, while the noisy sparse HF observations are enforced via hard conditioning.
For Burgers, no interpolation-based LF construction is involved, and the original LF inputs from the main experiments are used; only the hard conditioning is applied using the noisy sparse HF observations.

\vspace{-0.3em}
For guidance-based sampling, the same LF-pretrained models are used.
During inference, guidance is applied using the noisy sparse HF observations, following the standard guidance formulation without any additional noise-aware modification.
Both methods are evaluated under identical noise realizations for fair comparison.

\paragraph{Quantitative results.}
Figure~\ref{fig:noise_curve} reports the relative reconstruction error as a function of inference-time noise level across all datasets and observation regimes.
As the noise level increases, the reconstruction error increases for both PICSB and guidance-based sampling in all settings.

\vspace{-0.3em}
For the Kolmogorov flow, the two methods exhibit largely similar trends with respect to noise level across all sensor regimes (R1--R3).
In particular, the relative error increases at a comparable rate for both PICSB and guidance-based sampling, indicating that the overall sensitivity to observation noise is similar for this system.

\vspace{-0.3em}
In contrast, for the Burgers equation, clearer differences emerge between the two approaches.
Except for the per-sample fixed sensor regime (R2), PICSB consistently shows a more moderate increase in relative error as noise level increases, whereas guidance-based sampling exhibits a steeper error growth.
This difference is especially pronounced in the randomly resampled sensor (R1) and globally fixed sensor regimes (R3), where the error of guidance-based sampling grows rapidly with increasing noise, while PICSB maintains a more gradual degradation.

\vspace{-0.3em}
For Darcy flow, the quantitative curves may initially suggest that guidance-based sampling is less sensitive to increasing noise levels, as its relative error remains nearly constant across noise magnitudes.
However, this behavior should not be interpreted as robustness to noise.
As illustrated by the qualitative results in Figures~\ref{fig:with_noise_darcy_sparse} and~\ref{fig:with_noise_darcy_sensor_fix}, guidance-based sampling fails to produce physically meaningful target samples even at low noise levels.

\vspace{-0.3em}
In this setting, the LF prior for Darcy is constructed via nearest-neighbor interpolation from sparse HF observations, resulting in a discretized and piecewise-constant LF representation.
When combined with guidance-based sampling, the noisy observation gradients are insufficient to effectively steer the generative process toward a valid HF solution.
Consequently, the guidance-based method exhibits consistently high error regardless of the noise level, reflecting a fundamental limitation in leveraging the available observations rather than genuine robustness to noise.

\vspace{-0.3em}
In contrast, PICSB maintains lower reconstruction error with a gradual increase as noise grows, indicating that it is able to utilize noisy sparse observations more effectively through hard conditioning within the learned LF-to-HF transport, even in the presence of discretized LF inputs.

\paragraph{Qualitative analysis.}
Figures~\ref{fig:with_noise_burgers_sparse}--\ref{fig:with_noise_kolmo_sensor_fix} provide qualitative comparisons under increasing noise levels.
For Burgers, PICSB preserves the global solution structure and smooth transitions even under moderate noise, whereas guidance-based sampling exhibits visible artifacts and amplified noise patterns.
For Darcy and Kolmogorov systems, the effect of noisy interpolation is evident for both methods; nevertheless, PICSB produces reconstructions that remain visually coherent and closer to the HF reference, while guidance-based sampling increasingly suffers from noise amplification and loss of structural consistency.

\vspace{-0.3em}
Overall, these results characterize how reconstruction error evolves as a function of inference-time noise in sparse observations.
As the noise level increases, the relative error increases for both PICSB and guidance-based sampling across all datasets and observation regimes.
However, the rate at which the error grows differs between the two approaches, depending on the system and observation setting.

\vspace{-0.3em}
Across many configurations, PICSB exhibits a more gradual increase in relative error as noise increases, whereas guidance-based sampling tends to show a steeper degradation, particularly at higher noise levels.
This difference reflects how noisy sparse observations are incorporated during inference: PICSB enforces noisy measurements through hard conditioning within a learned LF-to-HF transport, while guidance-based sampling directly injects gradients derived from noisy observations, which can lead to stronger error amplification as noise increases.
\begin{figure*}[!h]
    \centering
    \includegraphics[width=1.\linewidth]{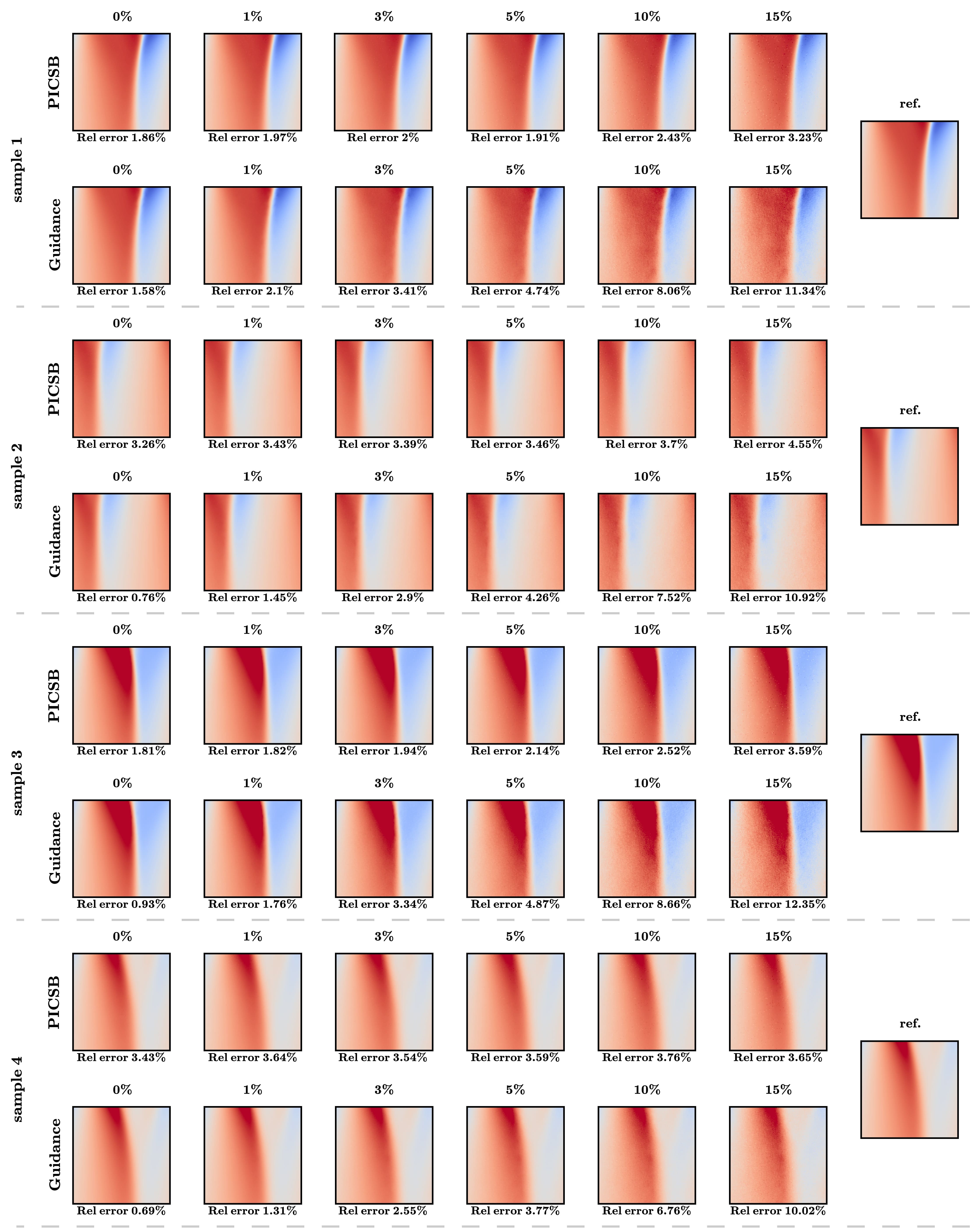}
    \caption{Inference under noisy observations for Burgers' equation with randomly resampled sensors (R1).
    Each row corresponds to a different test sample, and each column corresponds to a noise level, with the HF reference solution shown in the rightmost column.
Relative reconstruction errors are reported below each predicted field.}
    \label{fig:with_noise_burgers_sparse}
    \vspace{-1.5em}
\end{figure*}

\begin{figure*}[!h]
    \centering
    \includegraphics[width=1.\linewidth]{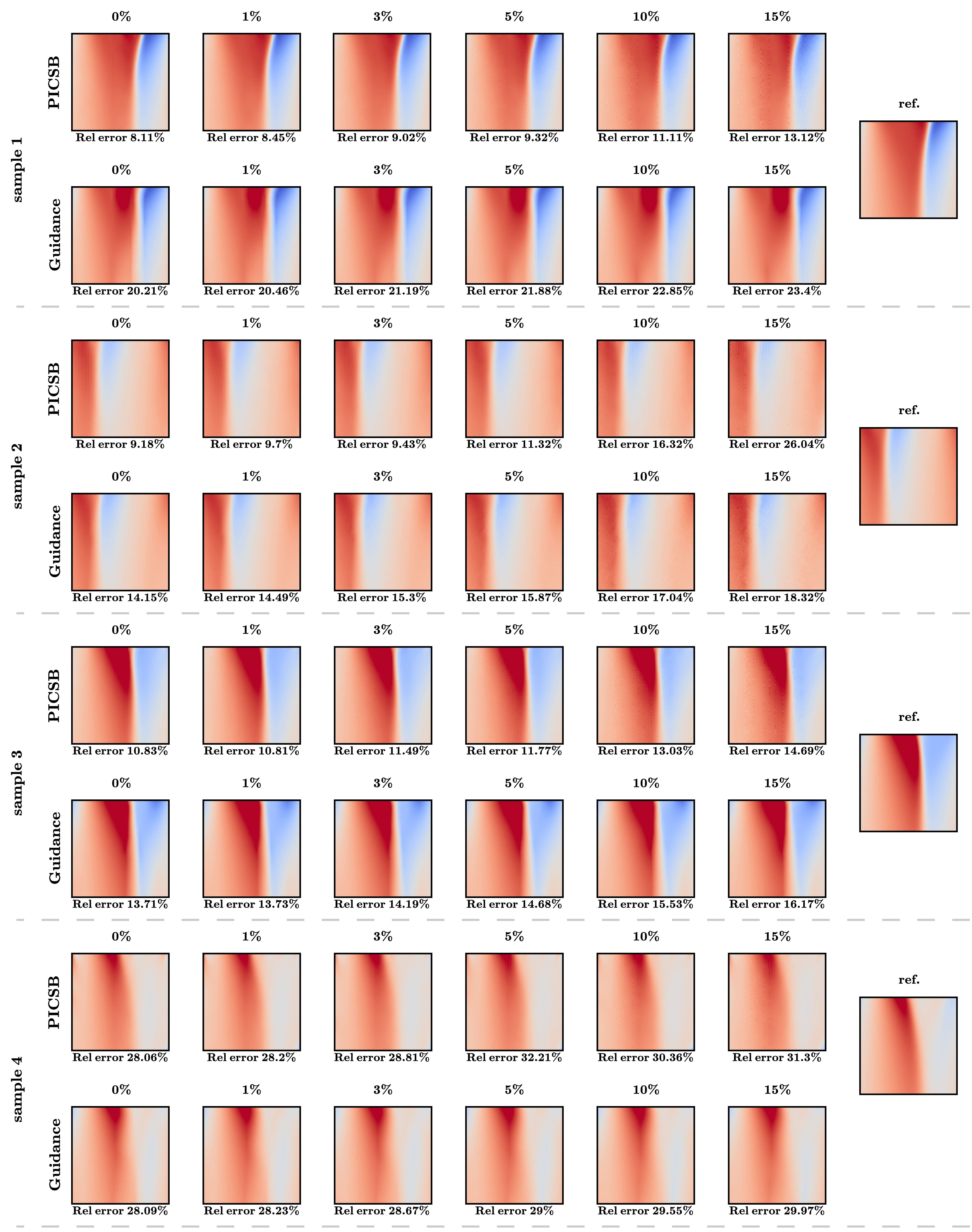}
    \caption{Inference under noisy observations for Burgers' equation with per-sample fixed sensors (R2).
    Each row corresponds to a different test sample, and each column corresponds to a noise level, with the HF reference solution shown in the rightmost column.
Relative reconstruction errors are reported below each predicted field.}
    \label{fig:with_noise_burgers_sensor_random}
    \vspace{-1.5em}
\end{figure*}

\begin{figure*}[!h]
    \centering
    \includegraphics[width=1.\linewidth]{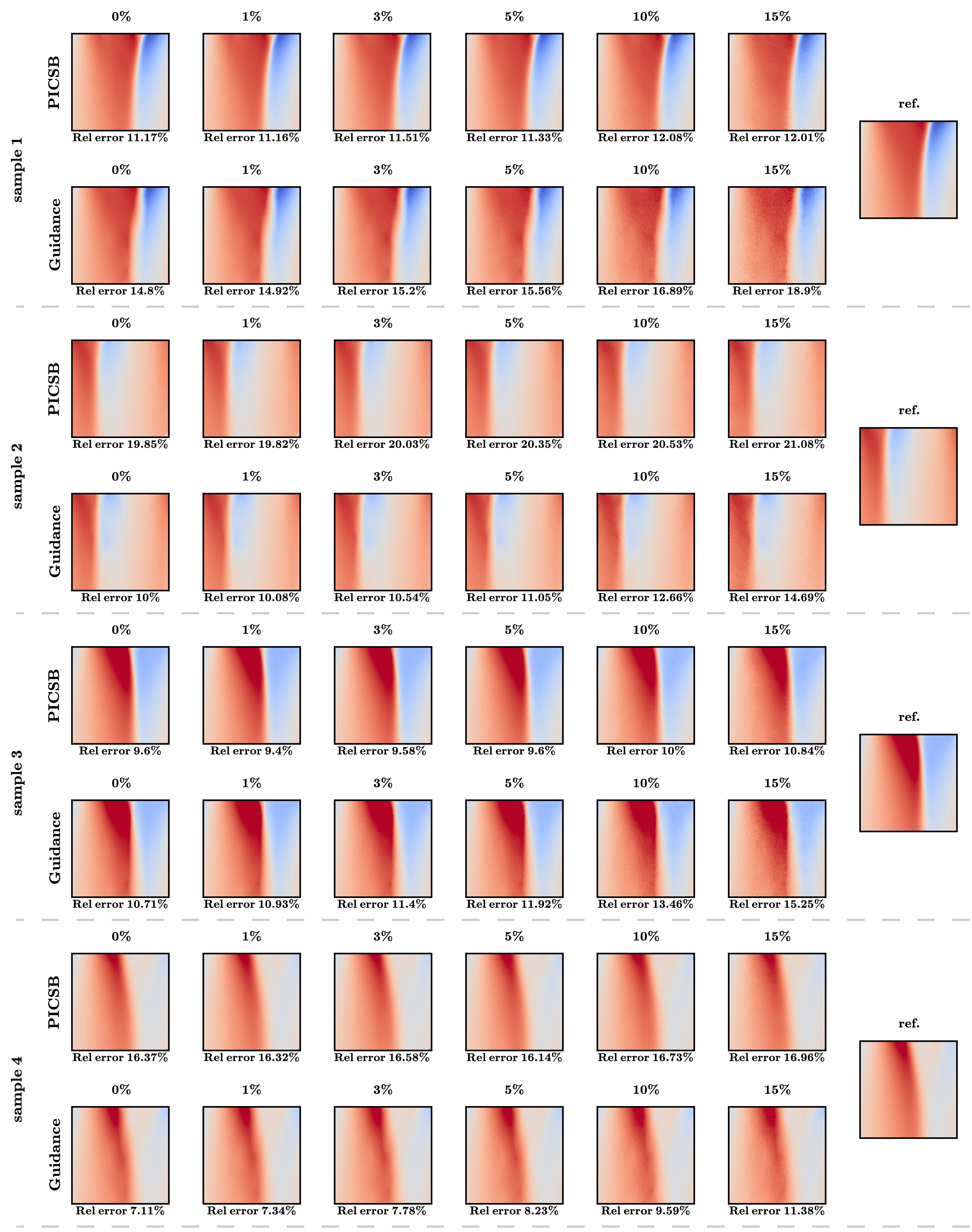}
    \caption{Inference under noisy observations for Burgers' equation with a fixed sensor set (R3).
    Each row corresponds to a different test sample, and each column corresponds to a noise level, with the HF reference solution shown in the rightmost column.
Relative reconstruction errors are reported below each predicted field.}
    \label{fig:with_noise_burgers_sensor_fix}
    \vspace{-1.5em}
\end{figure*}

\begin{figure*}[!h]
    \centering
    \includegraphics[width=1.\linewidth]{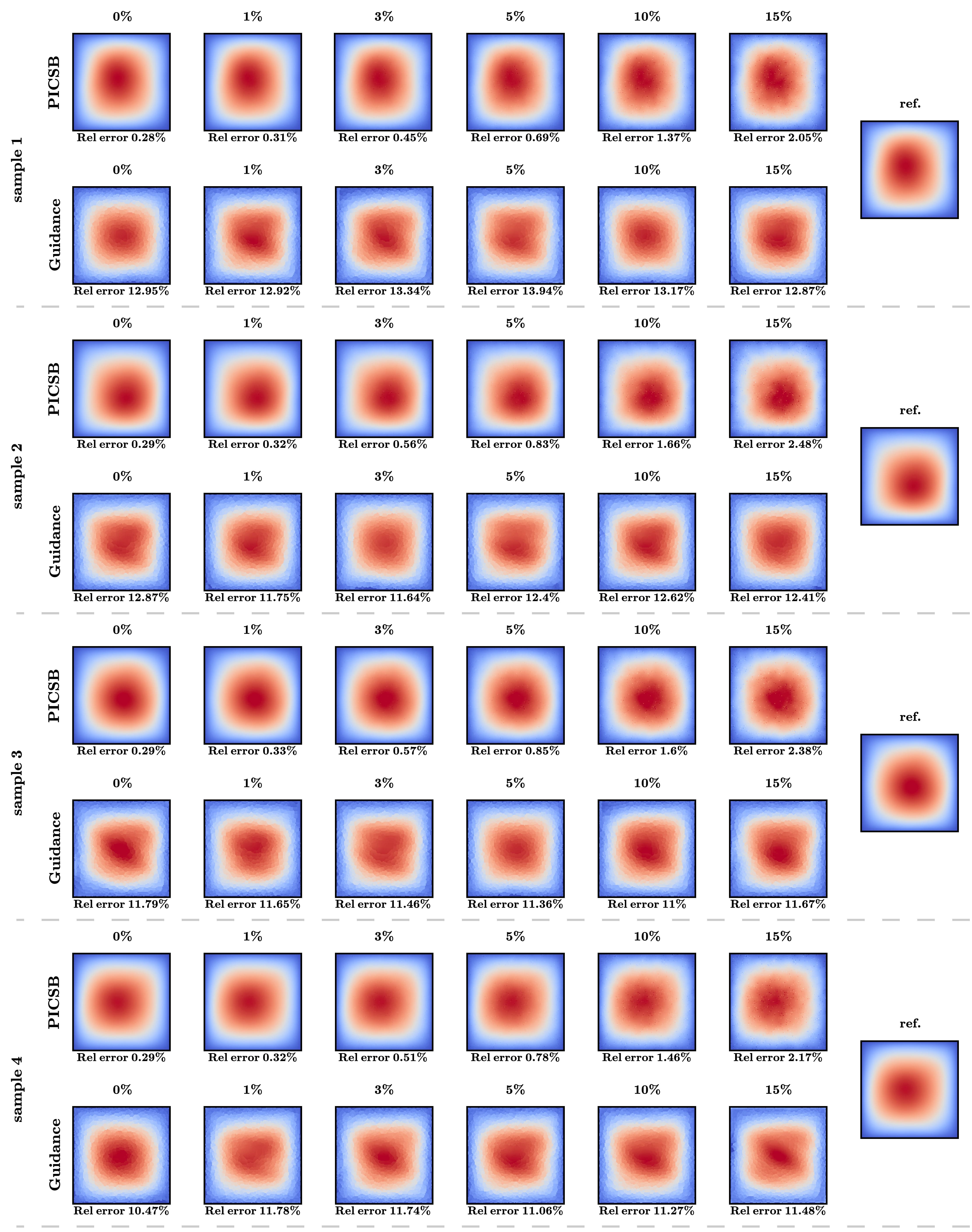}
    \caption{Inference under noisy observations for Darcy flow with randomly resampled sensors (R1).
    Each row corresponds to a different test sample, and each column corresponds to a noise level, with the HF reference solution shown in the rightmost column.
Relative reconstruction errors are reported below each predicted field.}
    \label{fig:with_noise_darcy_sparse}
    \vspace{-1.5em}
\end{figure*}

\begin{figure*}[!h]
    \centering
    \includegraphics[width=1.\linewidth]{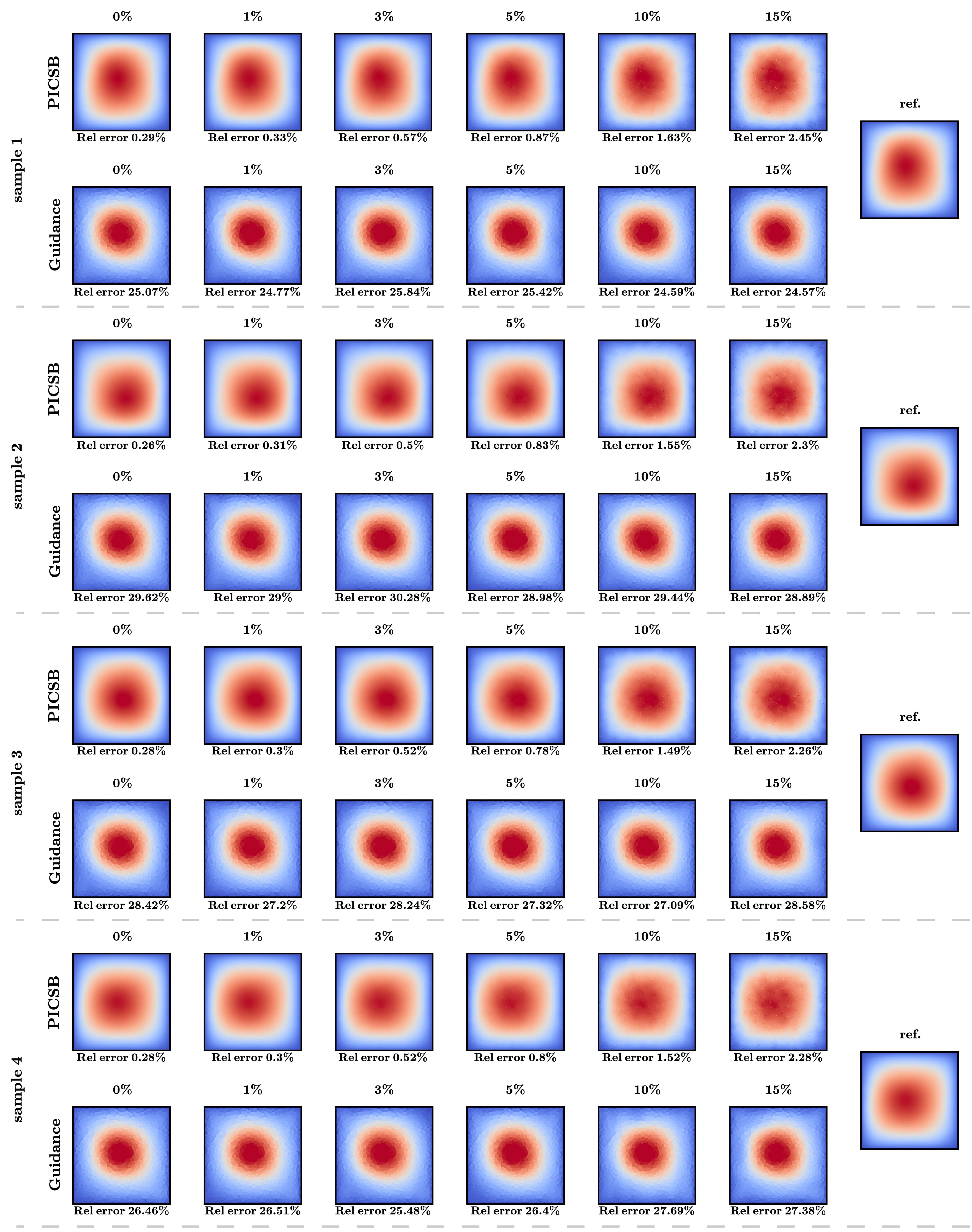}
    \caption{Inference under noisy observations for Darcy flow with a fixed sensor set (R3).
    Each row corresponds to a different test sample, and each column corresponds to a noise level, with the HF reference solution shown in the rightmost column.
Relative reconstruction errors are reported below each predicted field.}
    \label{fig:with_noise_darcy_sensor_fix}
    \vspace{-1.5em}
\end{figure*}

\begin{figure*}[!h]
    \centering
    \includegraphics[width=1.\linewidth]{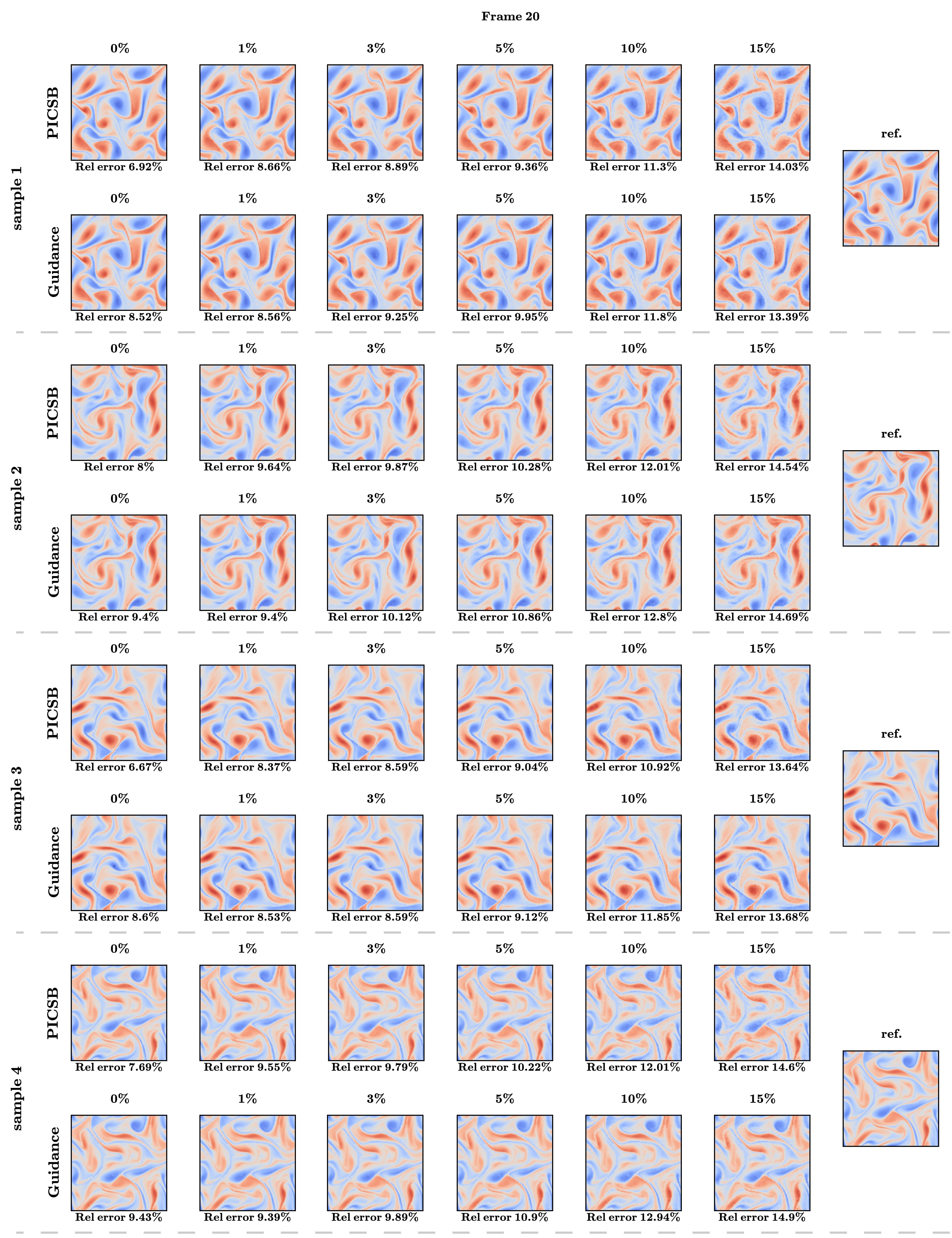}
    \caption{Inference under noisy sparse observations for Kolmogorov flow with randomly resampled sensors (R1).
    Each row corresponds to a different test sample, and each column corresponds to a noise level, with the HF reference solution shown in the rightmost column.
Relative reconstruction errors are reported below each predicted field.}
    \label{fig:with_noise_kolmo_sparse}
    \vspace{-1.5em}
\end{figure*}

\begin{figure*}[!h]
    \centering
    \includegraphics[width=1.\linewidth]{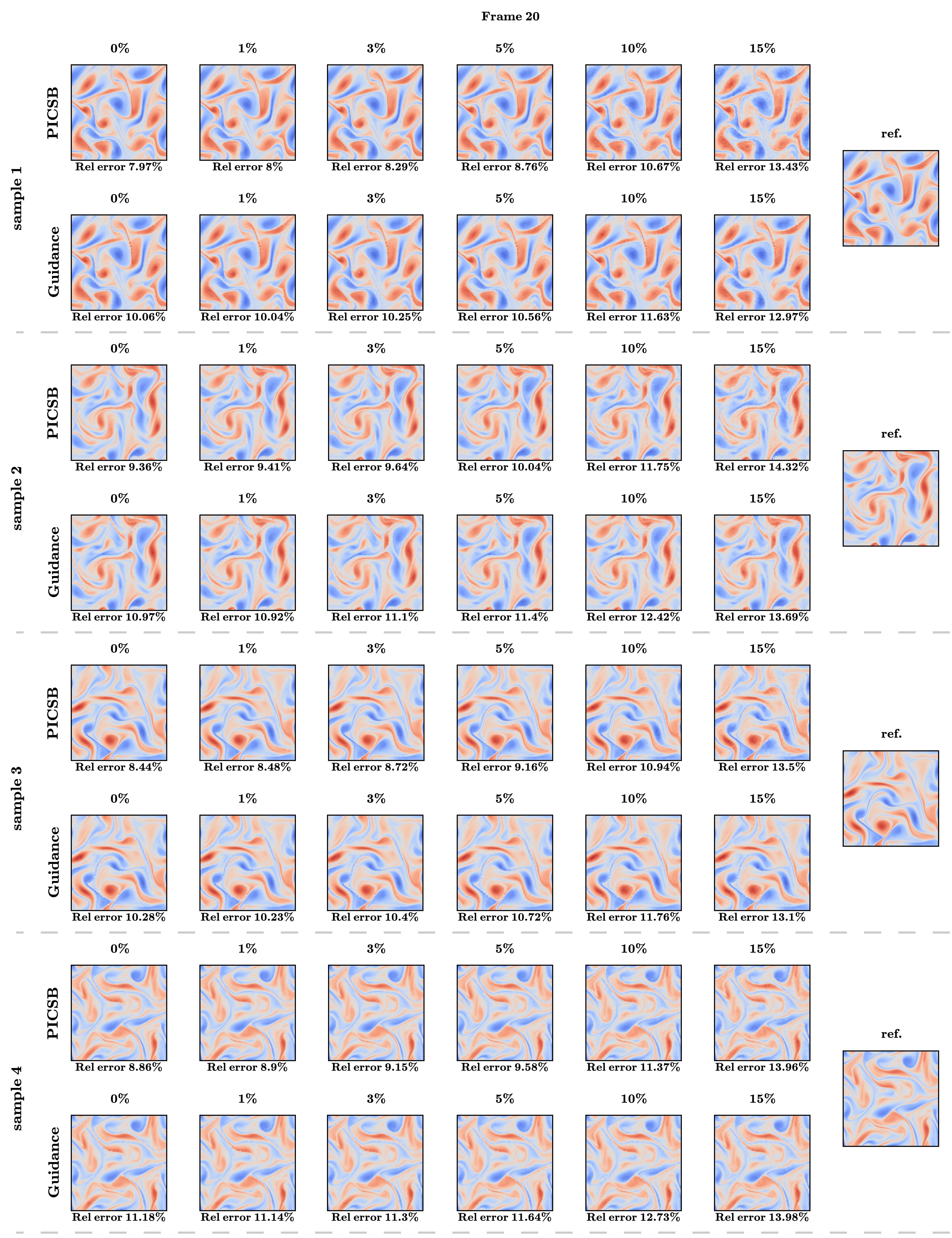}
    \caption{Inference under noisy observations for Kolmogorov flow with per-sample fixed sensors (R2).
    Each row corresponds to a different test sample, and each column corresponds to a noise level, with the HF reference solution shown in the rightmost column.
Relative reconstruction errors are reported below each predicted field.}
    \label{fig:with_noise_kolmo_sensor_random}
    \vspace{-1.5em}
\end{figure*}

\begin{figure*}[!h]
    \centering
    \includegraphics[width=1.\linewidth]{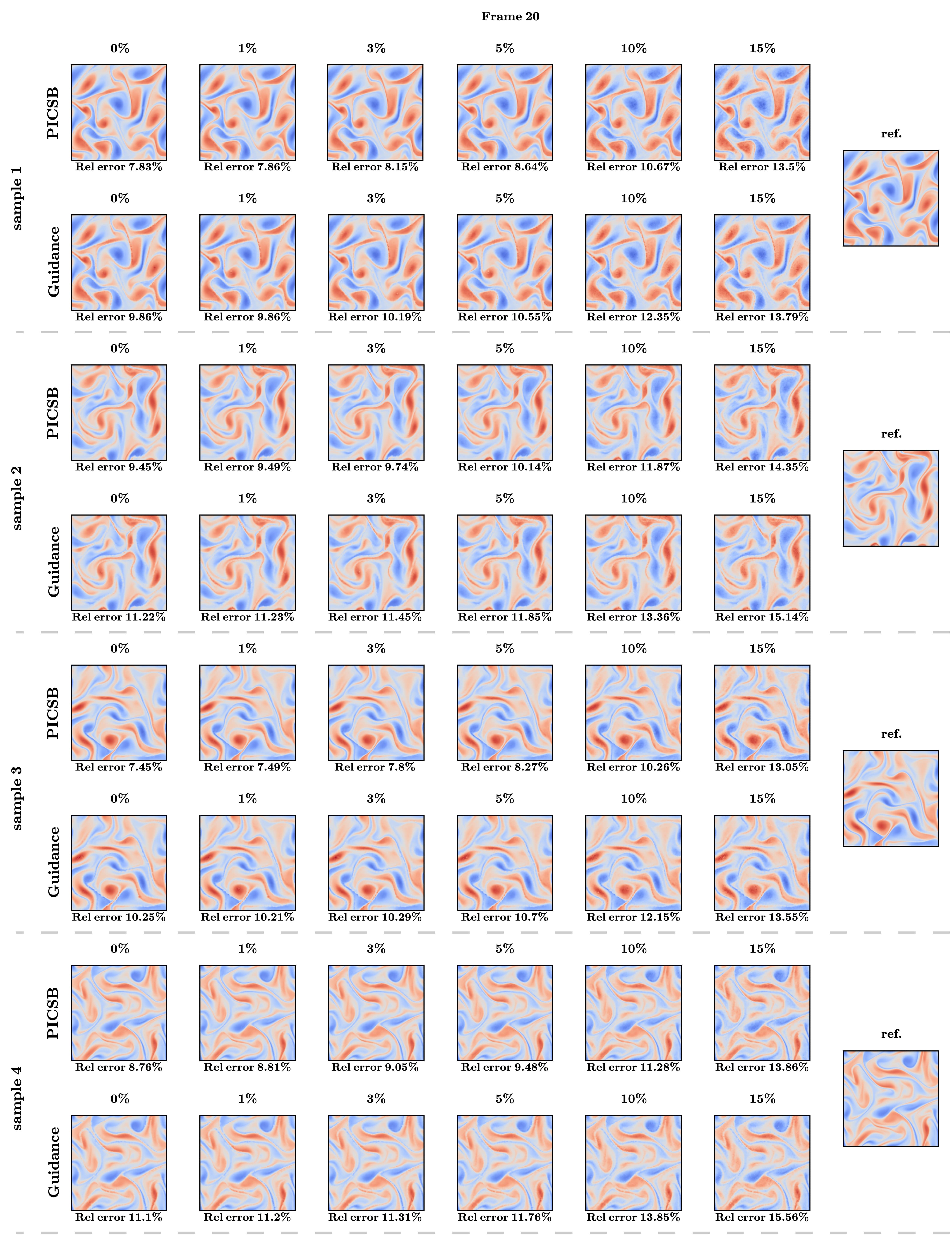}
    \caption{Inference under noisy observations for Kolmogorov flow with a fixed sensor set (R3).
    Each row corresponds to a different test sample, and each column corresponds to a noise level, with the HF reference solution shown in the rightmost column.
Relative reconstruction errors are reported below each predicted field.}
    \label{fig:with_noise_kolmo_sensor_fix}
    \vspace{-1.5em}
\end{figure*}

\begin{figure*}[!h]
    \centering
    \includegraphics[width=1.\linewidth]{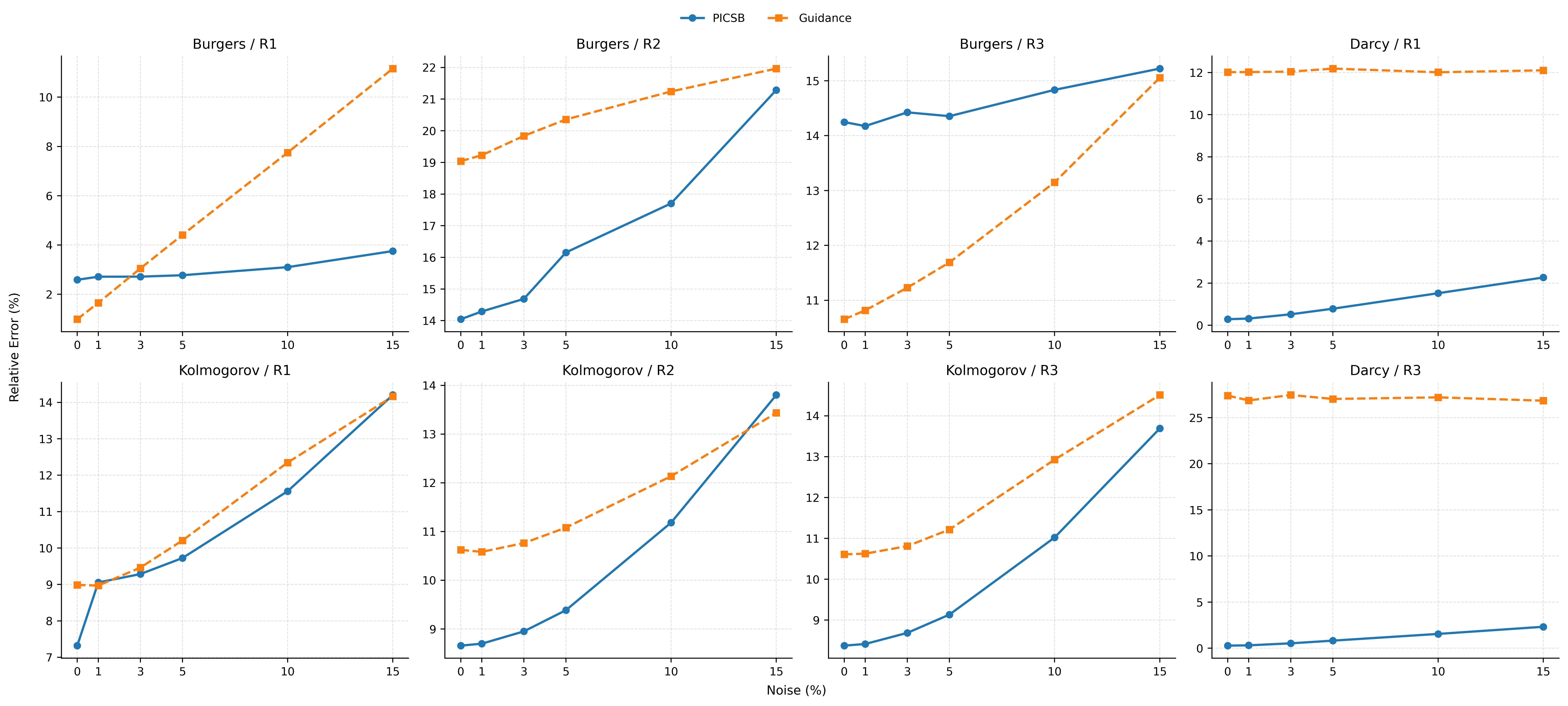}
    \caption{Inference under noisy observations for Kolmogorov flow with a fixed sensor set (R3).
    Each row corresponds to a different test sample, and each column corresponds to a noise level, with the HF reference solution shown in the rightmost column.
Relative reconstruction errors are reported below each predicted field.}
    \label{fig:noise_curve}
    \vspace{-1.5em}
\end{figure*}

\subsection{Finetuning across Different Levels of Observation Sparsity}
\label{app:appendix_finetuning}
In Section~\ref{app:lessobs}, we observed that directly applying models trained with $10\%$ sparse observations to substantially sparser inference regimes (e.g., $1\%$ or $5\%$ of the full grid) leads to severe degradation.
Inference under such extreme sparsity remains challenging in general when no additional adaptation is performed, particularly for more complex dynamics.
To further investigate whether limited adaptation can mitigate this limitation, we conduct additional finetuning experiments across different levels of observation sparsity.

All models are first pretrained using $10\%$ sparse HF observations, following the same configuration as in the main experiments.
For Darcy and Kolmogorov, LF inputs are constructed from the $10\%$ sparse HF observations via nearest-neighbor and bi-cubic interpolation, respectively, and used during training.
For Burgers, the same LF inputs as in the main experiments are used.

\paragraph{Finetuning Regime 1.}
Starting from these pretrained models, we first consider finetuning under lower observation availability by subsampling the original $10\%$ observation set.
Specifically, we retain either $50\%$ or $10\%$ of the available observation points, corresponding to conditioning at the $5\%$ and $1\%$ observation levels, respectively.
For Darcy and Kolmogorov, the resulting observation sets are used to construct the corresponding LF inputs using the same interpolation schemes as in training, while hard conditioning is imposed at the selected sparse locations.
For Burgers, the LF inputs remain unchanged, and only the hard conditioning set is reduced accordingly.

\paragraph{Finetuning Regime 2.}
While the finetuning strategy in Regime 1 yields reasonable reconstructions at the $5\%$ observation level, it fails to maintain performance when the observation level is reduced to $1\%$ (see Figures~\ref{fig:finetune_burgers_5}--\ref{fig:finetune_kolmo_1}).
To further examine this limitation, we consider an alternative finetuning strategy aimed at improving robustness across different observation sparsity levels.

\vspace{-0.3em}
Starting from the same $10\%$-pretrained models, finetuning is performed using the same observation datasets as in Regime 1.
However, rather than finetuning separately for each observation level, we adopt an \emph{integrated mode} in which finetuning is carried out on a nested collection of observation subsets corresponding to $10\%$, $5\%$, and $1\%$ of the full grid.
In this setting, the finetuning dataset is formed by aggregating these nested sparsity levels, allowing the model to adapt simultaneously across different observation regimes.

\vspace{-0.3em}
The optimizer state and learning rate settings are identical to those used in Regime 1, with finetuning performed for the same total budget of $20{,}000$ iterations.
After finetuning, inference is evaluated separately under $5\%$ and $1\%$ observation regimes.

\vspace{-0.3em}
As a result, Figures~\ref{fig:finetune2_burgers_5}, \ref{fig:finetune2_darcy_5}, and \ref{fig:finetune2_kolmo_5} show that finetuning under the integrated mode maintains stable reconstruction quality at the $5\%$ observation level.
At the more challenging $1\%$ observation level, Figures~\ref{fig:finetune2_burgers_1}, \ref{fig:finetune2_darcy_1}, and \ref{fig:finetune2_kolmo_1} demonstrate that the integrated finetuning strategy yields improved robustness relative to finetuning performed separately for each observation level.

\vspace{-0.3em}
Although the reconstructions obtained at $1\%$ sparsity remain imperfect and exhibit non-negligible errors, integrated finetuning enables meaningful recovery in scenarios where per-level finetuning largely fails.
Overall, these results indicate that aggregating multiple observation sparsity levels during finetuning can partially alleviate the degradation caused by extreme observation scarcity, while substantial performance gaps persist in such severely underdetermined settings.
\begin{figure*}[!h]
    \centering
    \includegraphics[width=1\linewidth]{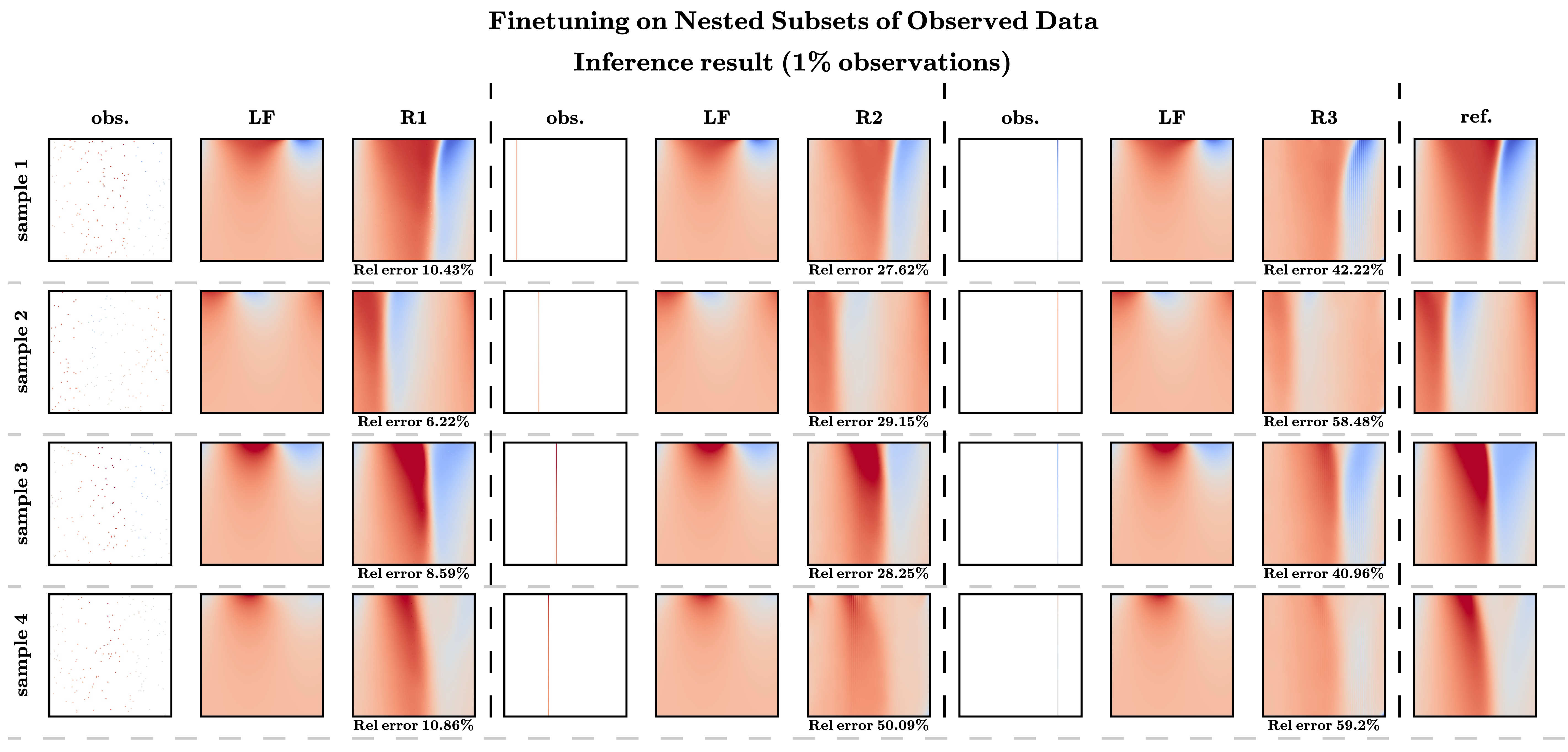}
    \caption{Finetuning and inference results under $1\%$ sparse observations for the Burgers' equation with integrated finetuning.}
    \label{fig:finetune2_burgers_1}
    \vspace{-1.5em}
\end{figure*}

\begin{figure*}[!h]
    \centering
    \includegraphics[width=0.85\linewidth]{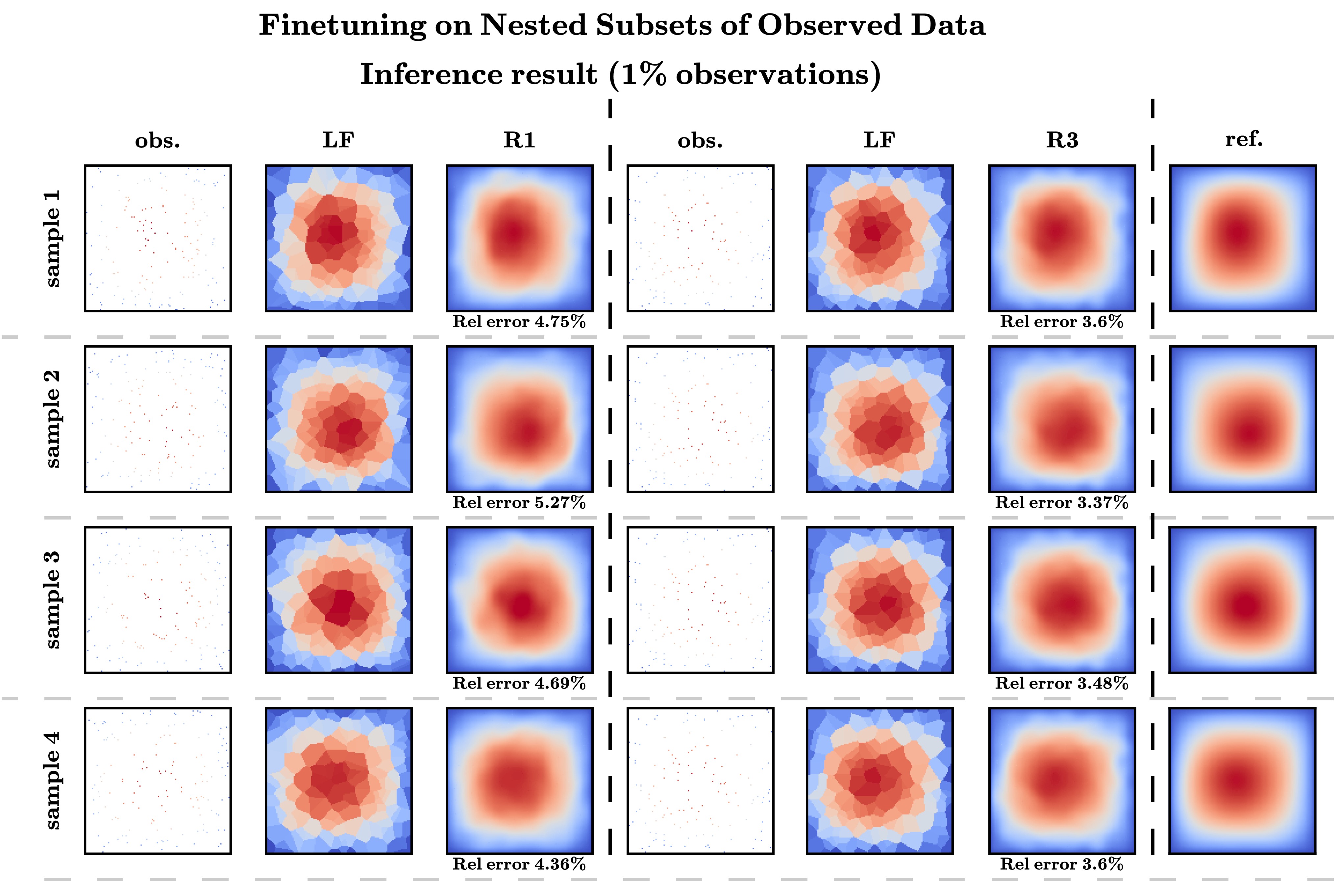}
    \caption{Finetuning and inference results under $1\%$ sparse observations for Darcy flow with integrated finetuning.}
    \label{fig:finetune2_darcy_1}
    \vspace{-1.5em}
\end{figure*}

\begin{figure*}[!h]
    \centering
    \includegraphics[width=1\linewidth]{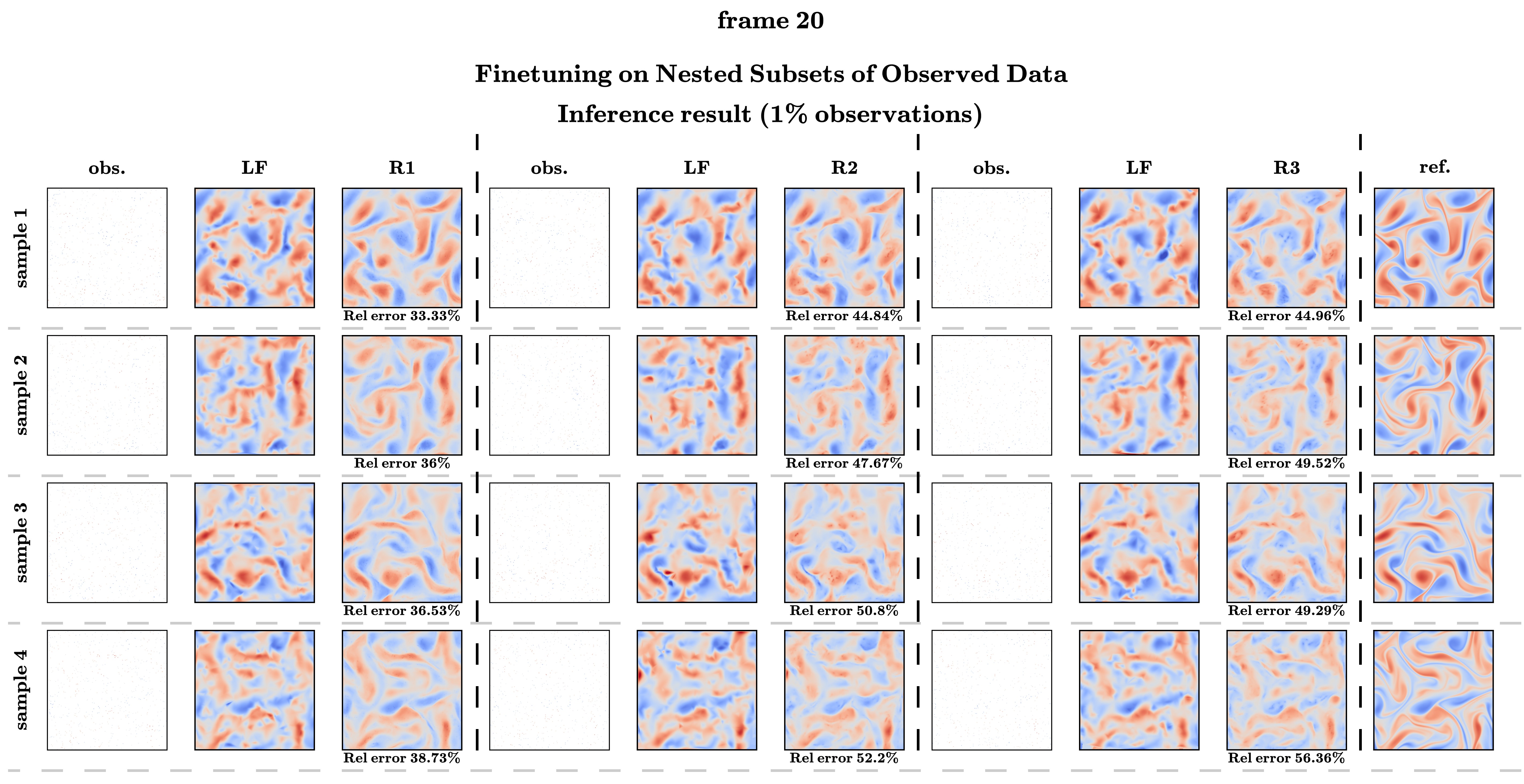}
    \caption{Finetuning and inference results under $1\%$ sparse observations for the Kolmogorov flow with integrated finetuning.}
    \label{fig:finetune2_kolmo_1}
    \vspace{-1.5em}
\end{figure*}

\begin{figure*}[!h]
    \centering
    \includegraphics[width=1\linewidth]{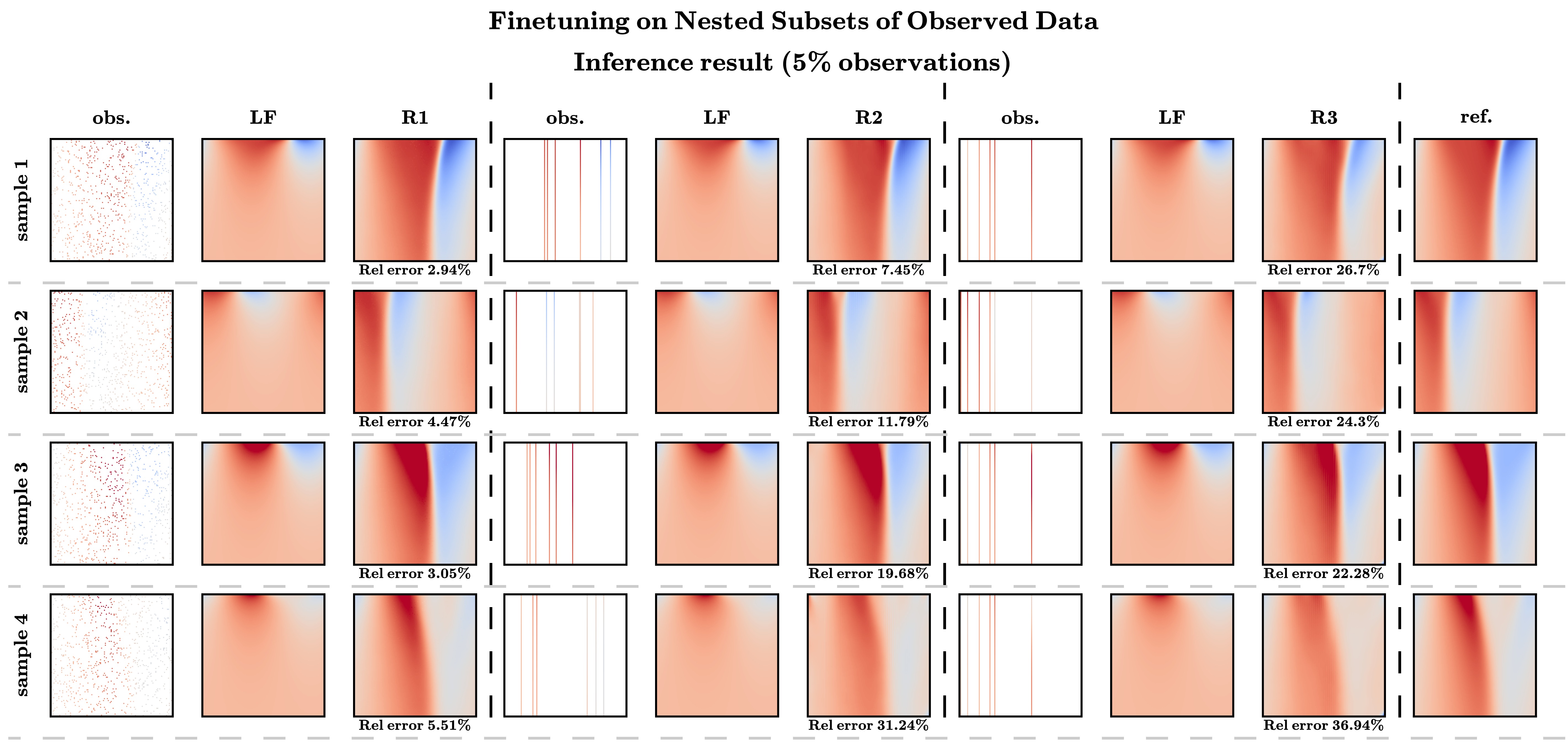}
    \caption{Finetuning and inference results under $5\%$ sparse observations for the Burgers' equation with integrated finetuning.}
    \label{fig:finetune2_burgers_5}
    \vspace{-1.5em}
\end{figure*}

\begin{figure*}[!h]
    \centering
    \includegraphics[width=0.85\linewidth]{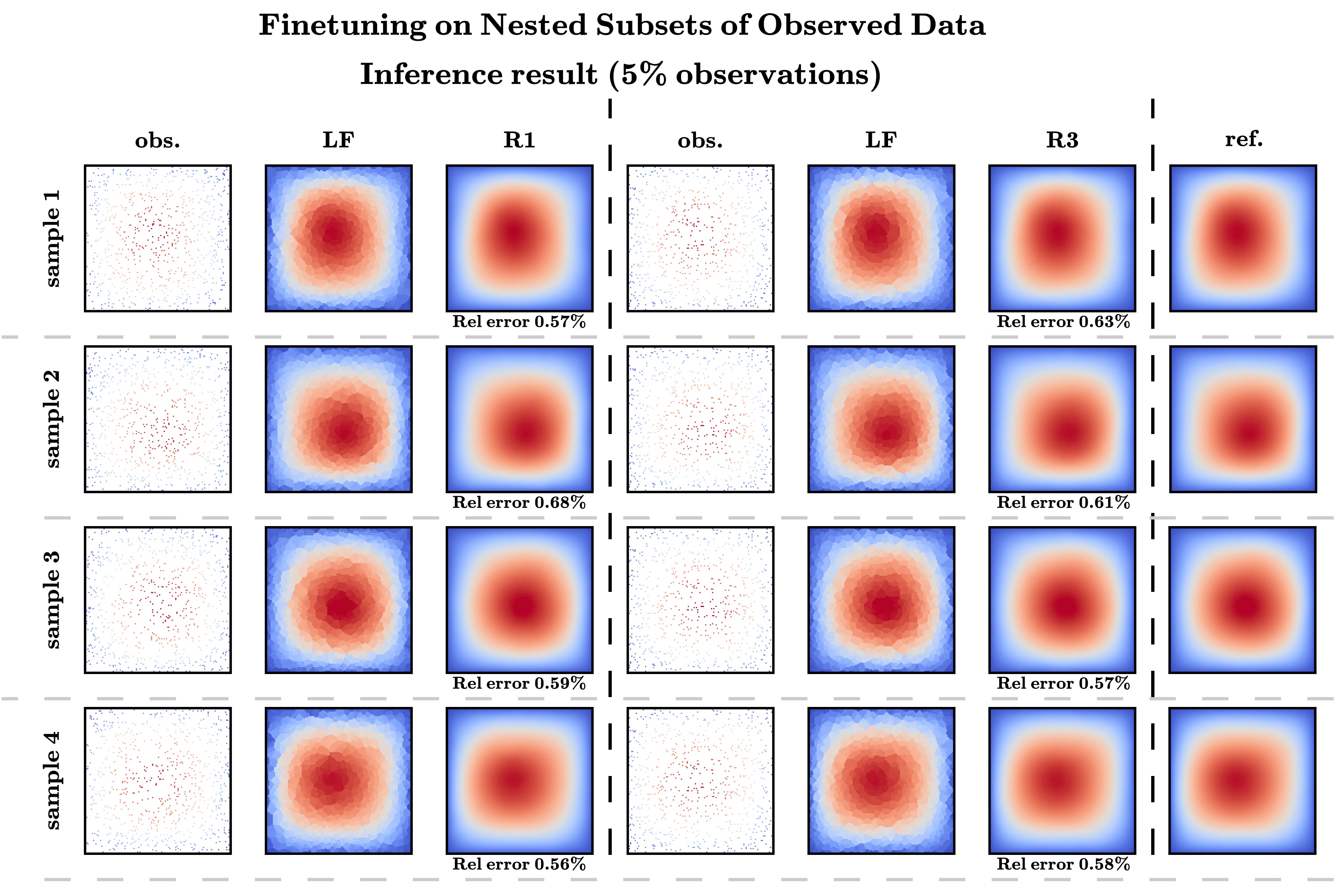}
    \caption{Finetuning and inference results under $5\%$ sparse observations for Darcy flow with integrated finetuning.}
    \label{fig:finetune2_darcy_5}
    \vspace{-1.5em}
\end{figure*}

\begin{figure*}[!h]
    \centering
    \includegraphics[width=1\linewidth]{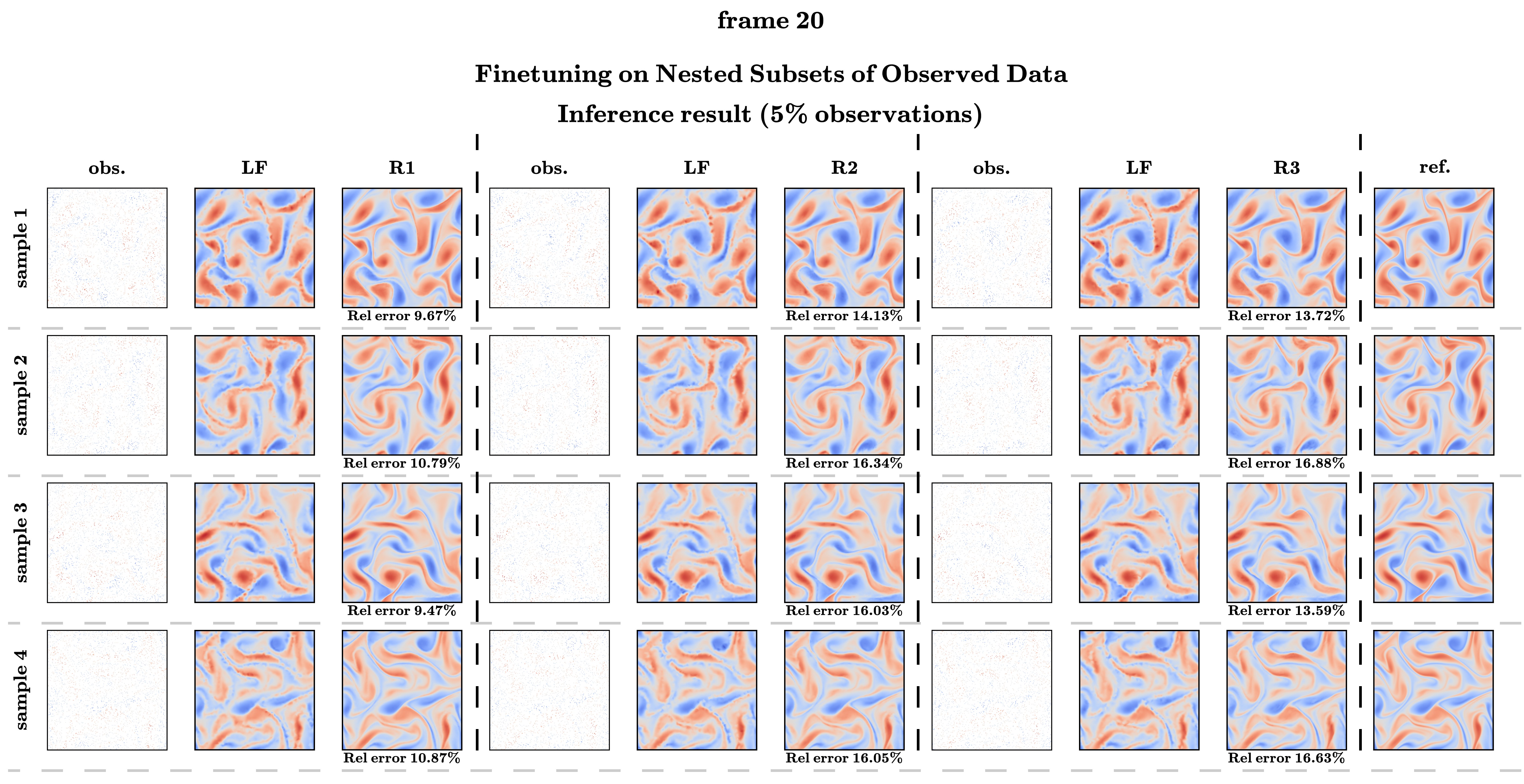}
    \caption{Finetuning and inference results under $5\%$ sparse observations for the Kolmogorov flow with integrated finetuning.}
    \label{fig:finetune2_kolmo_5}
    \vspace{-1.5em}
\end{figure*}


\end{document}